\documentclass{article}

\usepackage[preprint]{neurips_2026}

\usepackage[utf8]{inputenc} %
\usepackage[T1]{fontenc}    %
\usepackage{hyperref}       %
\hypersetup{colorlinks=true, citecolor=blue, linkcolor=blue, urlcolor=blue}
\usepackage{url}            %
\usepackage{booktabs}       %
\usepackage{amsfonts}       %
\usepackage{nicefrac}       %
\usepackage{microtype}      %
\usepackage{xcolor}         %

\usepackage[table]{xcolor}  %
\usepackage[english]{babel}

\usepackage{amsmath,amssymb}
\usepackage{bbm}            %
\DeclareMathOperator*{\argmax}{arg\,max}

\usepackage{graphicx}
\usepackage{tikz}
\usetikzlibrary{arrows.meta,positioning,calc,fit,backgrounds,decorations.pathreplacing,decorations.pathmorphing}

\usepackage{multirow}
\usepackage{tabularx}

\usepackage{algorithm}
\usepackage{algorithmic}

\usepackage{enumitem}
\usepackage{tcolorbox}
\tcbuselibrary{breakable, skins, theorems}
\usepackage{titletoc}
\usepackage{placeins}       %

\usepackage{amsthm}
\newtheorem{proposition}{Proposition}

\newtheorem{assumption}{Assumption}
\newtheorem{theorem}{Theorem}

\newenvironment{maintheorem}[1][]{%
  \begin{tcolorbox}[
    colback = blue!5,
    colframe = white,
    fonttitle = \bfseries,
    breakable = true]
  \begin{theorem}[#1]%
}{%
  \end{theorem}%
  \end{tcolorbox}
  \vspace{5pt}
}
\newtcolorbox{interpretation}{
  enhanced,
  colback = blue!2,
  colframe = blue!15,
  boxrule = 0.4pt,
  arc = 2pt,
  breakable = true,
  left = 6pt, right = 6pt, top = 4pt, bottom = 4pt,
  before upper = {\textit{\textcolor{blue!50!black}{Interpretation.}}\quad},
}

\newcommand{\projecturl}{https://naoto-iwase.github.io/prefix-consistency-page}
\newcommand{\codeurl}{https://github.com/naoto-iwase/prefix-consistency}
\newcommand{\dataurl}{https://doi.org/10.5281/zenodo.20082164}

\title{Reliable Chain-of-Thought via Prefix Consistency}

\author{%
  Naoto Iwase\textsuperscript{1} \quad
  Yuki Ichihara\textsuperscript{2,3} \quad
  Mohammad Atif Quamar\textsuperscript{3} \quad
  Junpei Komiyama\textsuperscript{3,4} \\[2pt]
  \textsuperscript{1}Nagoya University \quad
  \textsuperscript{2}Nara Institute of Science and Technology \\[1pt]
  \textsuperscript{3}Mohamed bin Zayed University of Artificial Intelligence \quad
  \textsuperscript{4}RIKEN AIP \\[2pt]
  \normalfont\small\texttt{naoto@iwase.dev} \quad
  \texttt{\{yuki.ichihara, mohammad.atif\}@mbzuai.ac.ae} \quad
  \texttt{junpei@komiyama.info} \\[2pt]
  {\textbf{Project Page:} \url{\projecturl}}
}

\begin{document}
\maketitle

\begin{abstract}
Large Language Models often improve accuracy on reasoning tasks by sampling multiple Chain-of-Thought (CoT) traces and aggregating them with majority voting (MV), a test-time technique called self-consistency. When we truncate a CoT partway through and regenerate the remainder, we observe that traces with correct answers reproduce their original answer more often than traces with wrong answers. We use this difference as a reliability signal, \textbf{prefix consistency}, that weights each candidate answer by how often it reappears under regeneration. It requires no access to token log-probabilities or self-rating prompts. Across five reasoning models and four math and science benchmarks, prefix consistency is the best correctness predictor in most settings, and reweighting votes by it reaches Standard MV plateau accuracy at up to $21\times$ fewer tokens (median $4.6\times$).{} Our code is available at \url{\codeurl}.
\end{abstract}

\begin{figure*}[!ht]
\centering
\resizebox{0.97\textwidth}{!}{%
  \includegraphics{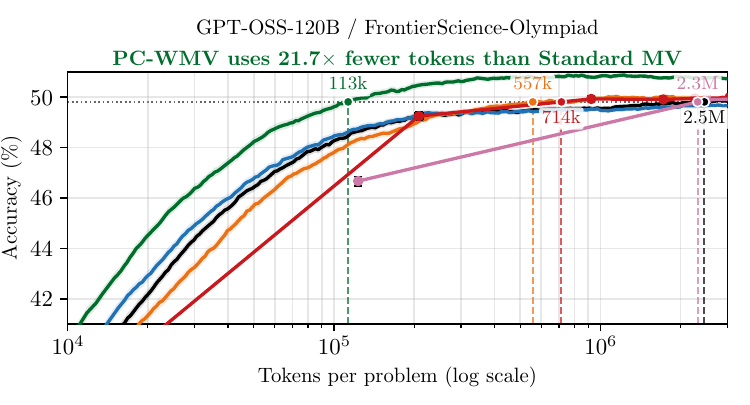}%
  \includegraphics{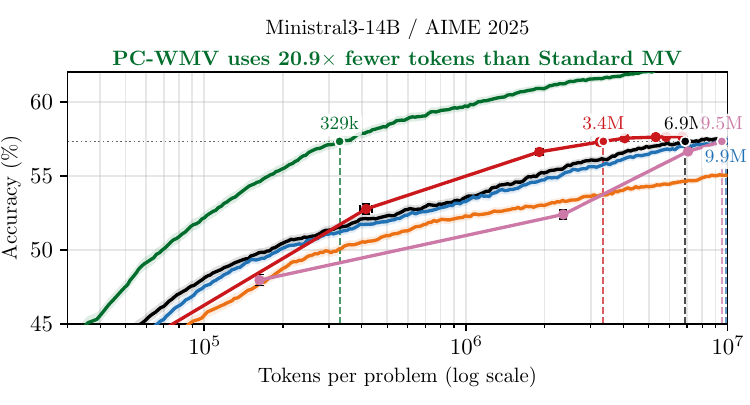}%
}\par
\vspace{-0.4em}
\includegraphics[width=\textwidth]{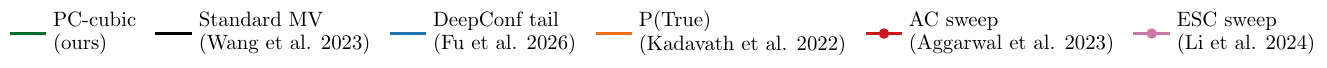}
\vspace{-0.4em}

\begin{minipage}[h]{\textwidth}
\centerline{\resizebox{\linewidth}{!}{\providecommand{\circnum}[1]{%
  \tikz[baseline=(c.base)]
  \node[circle, draw=black!55, line width=0.35pt,
        inner sep=0.45pt, minimum size=1.55ex,
        font=\tiny\bfseries] (c) {#1};%
}

\begin{tikzpicture}[
    font=\scriptsize,
    >={Latex[length=1.5mm]},
    line cap=round,
    line join=round,
    stage/.style={font=\scriptsize, text=black!55},
    subpanel/.style={rounded corners=3.2pt, draw=black!14, fill=black!2, inner sep=6pt},
    flowbox/.style={rounded corners=2pt, draw=black!34, fill=white, inner sep=3pt, align=left, text width=3.35cm, minimum height=8.4mm},
    prefixbox/.style={rounded corners=2pt, draw=black!38, fill=white, inner sep=3pt, align=left, text width=3.35cm, minimum height=8.0mm},
    regen/.style={rounded corners=1.8pt, draw=black!28, fill=white, inner sep=2.4pt, align=left, text width=1.55cm, minimum height=5.2mm},
    answer/.style={rounded corners=1.5pt, draw=black!32, fill=white, minimum width=6.2mm, minimum height=4.9mm, inner sep=1pt, font=\scriptsize\bfseries},
    good/.style={draw=green!45!black, fill=green!4},
    bad/.style={draw=red!60!black, fill=red!4},
    weak/.style={draw=black!26, fill=black!2},
    conn/.style={->, draw=black!48, line width=0.52pt},
    subtitle/.style={font=\bfseries\footnotesize}
]

\begin{scope}[shift={(0,0)}]
    \node[subtitle, text=green!35!black, anchor=west] (Ttitle) at (0,3.45) {Consistent trace (Gold Answer: 42, Original Answer: 42)};

    \node[flowbox, good, anchor=west] (Tfull) at (0,2.55)
    {\textbf{Original trace}\\[-1pt]
    {\ttfamily Let $x=1 \cdots$ so we}\\[-2pt]
    {\ttfamily simplify $\cdots$}\\[-2pt]
    {\ttfamily Therefore, $\cdots$ ans: \boxed{42}}};

    \node[prefixbox, good, below=4.8mm of Tfull, draw=none] (Tpre)
    {\textbf{Prefix (truncated trace)}\\[-1pt]
    {\ttfamily Let $x=1\cdots$ so we}\\[-2pt]
    {\ttfamily simplify $\ldots$}};
    \draw[draw=green!45!black, rounded corners=2pt]
      (Tpre.south east) -- (Tpre.north east) -- (Tpre.north west) -- (Tpre.south west);
    \draw[draw=green!45!black, decorate, decoration={random steps, segment length=1.2mm, amplitude=0.35mm}]
      (Tpre.south west) -- (Tpre.south east);

    \draw[conn] ([xshift=-3mm]Tfull.south) -- ([xshift=-3mm]Tpre.north);
    \node[stage, right=1.4mm of {$(Tfull.south)!0.5!(Tpre.north)+(-3mm,0)$}] {\circnum{1} truncate at mid-point};

    \coordinate (Tfork) at ([xshift=5.8mm]Tpre.east);
    \draw[conn] (Tpre.east) -- (Tfork);
    \fill[black!45] (Tfork) circle (0.45pt);

    \node[regen, good, anchor=west] (Tg1) at ([xshift=10.0mm,yshift=7.8mm]Tpre.east) {{\ttfamily $\cdots$ 42}};
    \node[answer, good, right=1.25mm of Tg1] (Ta1) {42};

    \node[regen, good, anchor=west] (Tg2) at ([xshift=10.0mm,yshift=2.6mm]Tpre.east) {{\ttfamily $\cdots$ 42}};
    \node[answer, good, right=1.25mm of Tg2] (Ta2) {42};

    \node[regen, good, anchor=west] (Tg3) at ([xshift=10.0mm,yshift=-2.6mm]Tpre.east) {{\ttfamily $\cdots$ 42}};
    \node[answer, good, right=1.25mm of Tg3] (Ta3) {42};

    \node[regen, good, anchor=west] (Tg4) at ([xshift=10.0mm,yshift=-7.8mm]Tpre.east) {{\ttfamily $\cdots$ 42}};
    \node[answer, good, right=1.25mm of Tg4] (Ta4) {42};

    \foreach \i in {1,2,3,4}{
        \draw[conn] (Tfork) -- (Tg\i.west);
        \draw[conn] (Tg\i.east) -- (Ta\i.west);
    }

    \coordinate (Tcont_topref) at ([yshift=3.4mm]Tg1.north);
    \node[rounded corners=2.4pt, draw=green!35!black, line width=0.45pt, inner xsep=2.8pt, inner ysep=0.7pt,
          fit=(Tcont_topref)(Tg1)(Ta1)(Tg2)(Ta2)(Tg3)(Ta3)(Tg4)(Ta4)] (Tfit) {};
    \node[anchor=south west] (Tcont_title) at ([xshift=-1mm,yshift=0.1mm]Tg1.north west) {\textbf{Continuations}};
    \node[stage, anchor=south west] at ([yshift=1pt]Tfit.north west) {\circnum{2} regenerate from prefix};
    \node[font=\bfseries\footnotesize, text=green!35!black, anchor=north west]
         at ([yshift=-4pt]Tpre.south west)
         {\circnum{3} prefix consistency $= 5/5$};

    \begin{pgfonlayer}{background}
    \node[subpanel, fit=(Ttitle)(Tfull)(Tpre)(Tfit)] {};
    \end{pgfonlayer}
\end{scope}

\begin{scope}[shift={(7.8,0)}]
    \node[subtitle, text=red!55!black, anchor=west] (Btitle) at (0,3.45) {Inconsistent trace (Gold Answer: 42, Original Answer: 17)};

    \node[flowbox, bad, anchor=west] (Bfull) at (0,2.55)
    {\textbf{Original trace}\\[-1pt]
    {\ttfamily Let $x=2\cdots$ maybe this}\\[-2pt]
    {\ttfamily implies $\cdots$}\\[-2pt]
    {\ttfamily That's why $\cdots$ ans: \boxed{17}}};

    \node[prefixbox, bad, below=4.8mm of Bfull, draw=none] (Bpre)
    {\textbf{Prefix (truncated trace)}\\[-1pt]
    {\ttfamily Let $x=2 \cdots$ maybe this}\\[-2pt]
    {\ttfamily implies $\ldots$}};
    \draw[draw=red!60!black, rounded corners=2pt]
      (Bpre.south east) -- (Bpre.north east) -- (Bpre.north west) -- (Bpre.south west);
    \draw[draw=red!60!black, decorate, decoration={random steps, segment length=1.2mm, amplitude=0.35mm}]
      (Bpre.south west) -- (Bpre.south east);

    \draw[conn] ([xshift=-3mm]Bfull.south) -- ([xshift=-3mm]Bpre.north);
        \node[stage, right=1.4mm of {$(Bfull.south)!0.5!(Bpre.north)+(-3mm,0)$}] {\circnum{1} truncate at mid-point};

    \coordinate (Bfork) at ([xshift=5.8mm]Bpre.east);
    \draw[conn] (Bpre.east) -- (Bfork);
    \fill[black!45] (Bfork) circle (0.45pt);

    \node[regen, bad, anchor=west] (Bg1) at ([xshift=10.0mm,yshift=7.8mm]Bpre.east) {{\ttfamily $\cdots$ 17}};
    \node[answer, bad, right=1.25mm of Bg1] (Ba1) {17};

    \node[regen, weak, anchor=west] (Bg2) at ([xshift=10.0mm,yshift=2.6mm]Bpre.east) {{\ttfamily $\cdots$ 42}};
    \node[answer, weak, right=1.25mm of Bg2] (Ba2) {42};

    \node[regen, weak, anchor=west] (Bg3) at ([xshift=10.0mm,yshift=-2.6mm]Bpre.east) {{\ttfamily $\cdots$ 31}};
    \node[answer, weak, right=1.25mm of Bg3] (Ba3) {31};

    \node[regen, weak, anchor=west] (Bg4) at ([xshift=10.0mm,yshift=-7.8mm]Bpre.east) {{\ttfamily $\cdots$ 19}};
    \node[answer, weak, right=1.25mm of Bg4] (Ba4) {19};

    \foreach \i in {1,2,3,4}{
        \draw[conn] (Bfork) -- (Bg\i.west);
        \draw[conn] (Bg\i.east) -- (Ba\i.west);
    }

    \coordinate (Bcont_topref) at ([yshift=3.4mm]Bg1.north);
    \node[rounded corners=2.4pt, draw=red!42!black, line width=0.45pt, inner xsep=2.8pt, inner ysep=0.7pt,
          fit=(Bcont_topref)(Bg1)(Ba1)(Bg2)(Ba2)(Bg3)(Ba3)(Bg4)(Ba4)] (Bfit) {};
    \node[anchor=south west] (Bcont_title) at ([xshift=-1mm,yshift=0.1mm]Bg1.north west) {\textbf{Continuations}};
    \node[stage, anchor=south west] at ([yshift=1pt]Bfit.north west) {\circnum{2} regenerate from prefix};
    \node[font=\bfseries\footnotesize, text=red!55!black, anchor=north west]
         at ([yshift=-4pt]Bpre.south west)
         {\circnum{3} prefix consistency $= 2/5$};

    \begin{pgfonlayer}{background}
    \node[subpanel, fit=(Btitle)(Bfull)(Bpre)(Bfit)] {};
    \end{pgfonlayer}
\end{scope}
\end{tikzpicture}}}
\end{minipage}%

\caption{\textbf{Overview of prefix-consistency-weighted majority voting (PC-WMV).}
\emph{Top}: cost-equivalent accuracy on two models and benchmark settings. Shaded bands and error bars are $\pm 2\sigma$ confidence intervals. \emph{Bottom}: Overview of our proposed method (prefix consistency).
Correct reasoning traces exhibit greater reproducibility under regeneration.
}
\label{fig:regen-consistency-main}
\end{figure*}

\section{Introduction}
\label{sec:intro}
Large Language Models (LLMs) have shown strong reasoning ability when allowed to produce Chain-of-Thought (CoT) reasoning \citep{NEURIPS2022_8bb0d291,NEURIPS2022_9d560961}.
Generating intermediate reasoning steps substantially improves performance on challenging tasks such as math \citep{zhou2023leasttomost,fu2023complexitybased}, scientific reasoning \citep{NEURIPS2022_11332b6b,Wang_Hu_He_Xu_Liu_Liu_Shen_2024}, and knowledge-intensive question answering \citep{trivedi-etal-2023-interleaving,wang2023knowledge}.

A simple and effective way to further improve the accuracy of the final answer is majority voting (MV, also known as self-consistency), which samples a diverse set of CoTs and returns the most frequent answer~\citep{wang2023selfconsistency}.
A limitation is that Standard MV treats all CoT outputs equally and still fails when the correct answer is in the minority.

To improve MV, the standard approach has been to use weighted majority voting (WMV).
WMV refines MV by weighting each generation according to its quality. The more reliable a generation is, the greater the signal it receives.
Existing WMV methods derive a per-sample reliability signal from the generated trace, including response probability~\citep{wang2023selfconsistency}, self-certainty~\citep{kang2025selfcertainty}, DeepConf~\citep{fu2025deepthinkconfidence}, verbalized confidence elicited in text~\citep{lin2022teaching,taubenfeld2025confidenceimprovesselfconsistency}, and P(True)~\citep{kadavath2022languagemodelsknowthey}.
Previous studies have demonstrated that these reliability-aware aggregation methods outperform Standard MV.
However, these signals often fail to separate correct from wrong traces on difficult problems, the regime where Standard MV most needs improvement (Figure~\ref{fig:signal_distributions}).

We introduce a novel reliability signal, \textbf{prefix consistency}, and incorporate it into WMV. This signal is motivated by the observation that correct reasoning traces tend to be more reproducible under regeneration than incorrect ones.
We truncate each sample's CoT at a specified fraction and regenerate continuations from the prefix (Figure~\ref{fig:regen-consistency-main}).
Prefix consistency requires no access to token log-probabilities. Since regenerated answers also participate in voting, our method recovers correct answers absent from the initial samples.

Our contributions are:
\begin{enumerate}[leftmargin=15pt]
    \item We propose \emph{prefix consistency}, a reliability signal that truncates each sample's CoT and regenerates from the prefix, and use it to form prefix-consistency-weighted majority voting (PC-WMV). PC-WMV requires no access to token log-probabilities.
    \item Across 4 benchmarks and 5 model scales, prefix consistency outperforms existing WMV baselines (e.g.\ DeepConf, P(True), Self-certainty) as a correctness predictor (best macro-averaged AUROC on 15 out of 20 (model, benchmark) cells, .63--.80). On many problems with Pass@1 below 50\%, where Standard MV fails by default, prefix consistency still discriminates correct from wrong traces (Section~\ref{sec:what_drives}), leaving room for PC-WMV to find the correct answer.
    \item In cost-equivalent comparison against the primary WMV baselines (DeepConf tail, P(True), Self-certainty) and adaptive-stopping baselines (AC, ESC), PC-WMV is the most cost-efficient on the majority of the 20 (model, benchmark) settings, reaching Standard MV plateau at a median $4.6\times$ fewer tokens (up to $21\times$ vs.\ Standard MV and $10\times$ vs.\ AC sweep, Figure~\ref{fig:regen-consistency-main} and Table~\ref{tab:token_savings}).
\end{enumerate}

\section{Preliminary}
\label{sec:prelim}

We consider a benchmark $\mathcal{Q}$, a set of problems. For each problem $q \in \mathcal{Q}$, we have an answer space $\mathcal{A}$ and a correct answer $a^\star_q \in \mathcal{A}$.
Given $q$, an LLM generates a trace $y$, i.e., a sequence of tokens that represents a CoT followed by a final summary, from which we parse the final answer $a \in \mathcal{A}$. We write $(y, a) \sim \mathrm{LLM}(\cdot \mid q)$.
We write $\mathrm{Pass@1}_q := \Pr_{(y, a) \sim \mathrm{LLM}(\cdot \mid q)}[a = a^{\star}_{q}]$ for the per-problem single-sample success probability, and report the macro-average over $\mathcal{Q}$ as the benchmark-level Pass@1.
When the context is clear, we suppress $q$ in the notation (e.g., $a^{\star}$ instead of $a^{\star}_{q}$).

To improve the accuracy over Pass@1 at test time, we draw $N$ independent samples $\{(y_i, a_i)\}_{i=1}^{N}$ and aggregate them into a single output.
A standard method that aggregates the $N$ answers is \emph{majority voting} (MV, also known as self-consistency), which returns the most frequent answer:
\begin{equation}
\label{eq:mv}
    \hat a^{\mathrm{MV}}_N = \argmax_{a \in \mathcal{A}} \sum_{i=1}^{N} \mathbf{1}[a_i = a].
\end{equation}

We refer to this unweighted aggregator as \emph{Standard MV} (Eq.~\eqref{eq:mv}).
Standard MV treats all samples equally and fails when the correct answer is not the mode of the answer distribution, typically observed when an LLM faces challenging problems where Pass@1 accuracy is below 50\%.
A natural extension is \emph{weighted majority voting} (WMV), where each sample $i$ contributes a weighted vote $v_i(a) \geq 0$ for answer $a$:
\begin{equation}
\label{eq:wmv}
    \hat a^{\mathrm{WMV}}_N = \argmax_{a \in \mathcal{A}} \sum_{i=1}^{N} v_i(a).
\end{equation}

For sample $i$, let $\ell_i$ denote the model's token-level log-probabilities available to the signal. Prior WMV methods extract a confidence signal $s(y_i, \ell_i) \geq 0$ from the trace and apply a weighting function $w \colon \mathbb{R}_{\geq 0} \to \mathbb{R}_{\geq 0}$ to it:
\begin{equation}
\label{eq:wmv_prior}
    v_i(a) = w(s(y_i, \ell_i)) \cdot \mathbf{1}[a_i = a].
\end{equation}
Another class of WMV methods adopts verbalized signals that require no log-probability access ($\ell_i = \emptyset$), where $s$ depends only on the text of $y_i$.
Other methods (Self-certainty, DeepConf, Response probability) set $\ell_i$ to the per-token log-probabilities along $y_i$.
P(True) sets $\ell_i$ to the log-probability of the ``True'' token under a self-rating prompt.
Appendix~\ref{app:baseline_impl} gives the explicit form of $s$ and $w$ for each baseline.
However, such confidence signals often fail to separate correct traces from wrong traces on difficult problems.
We next introduce prefix consistency, a signal that requires no log-probability access ($\ell_i = \emptyset$).

\section{Prefix Consistency}
\label{sec:method}

\begin{figure}[t]
\centering
\includegraphics[width=0.32\textwidth]{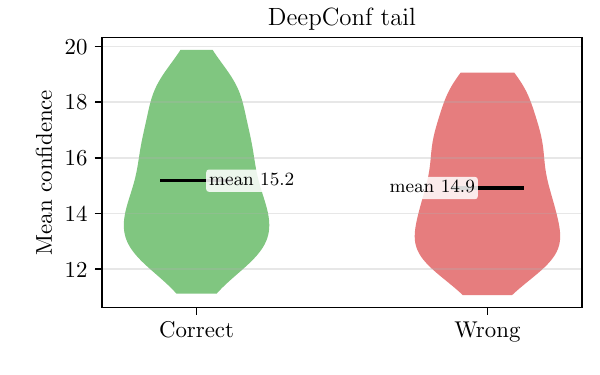}\hfill
\includegraphics[width=0.32\textwidth]{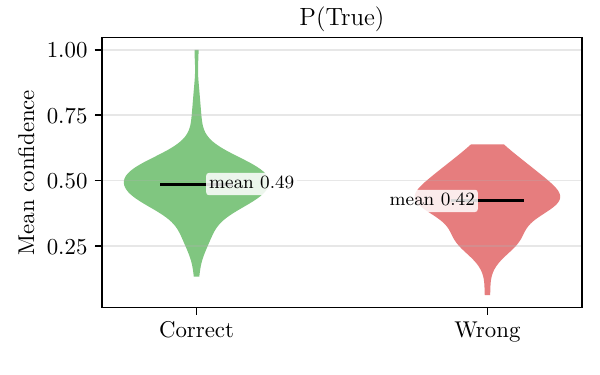}\hfill
\includegraphics[width=0.32\textwidth]{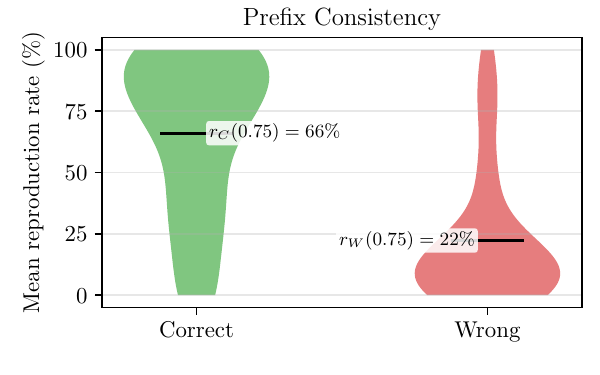}
\caption{\textbf{Per-problem reliability signal distributions for correct vs.\ wrong traces} on FrontierScience-Olympiad with GPT-OSS-120B. Each violin shows the distribution of per-problem mean signal values across the 73 problems that have at least one correct and one wrong trace. On this difficult benchmark (Pass@1 ${\approx}.34$), the baseline methods DeepConf tail and P(True) place correct and wrong traces in nearly overlapping ranges, while prefix consistency separates them. The same baselines have some discriminative ability on easier (model, benchmark) settings (Table~\ref{tab:auroc}). The point is that they fail precisely in the regime where reweighting matters most.}
\label{fig:signal_distributions}
\end{figure}

We propose prefix consistency, a reliability signal: truncate each sample's CoT at an intermediate point and regenerate its continuation, treating samples whose initial answer reappears as more reliable than those whose answer changes (Figure~\ref{fig:regen-consistency-main}).

\subsection{Prefix Consistency as a Reliability Signal}
\label{sec:prefix_consistency}

For each sample $(y_i, a_i)$, let $|y_i|$ denote the number of tokens in $y_i$.
We truncate $y_i$ after its first $\lceil \tau |y_i| \rceil$ tokens for a fixed fraction
$\tau \in (0,1)$, and regenerate $K$ continuations from this prefix, yielding the multiset:
\begin{equation}
\label{eq:multiset}
    A_i^{(\tau,K)} = \{a_i,\, \tilde a_{i,1}^{(\tau)}, \ldots, \tilde a_{i,K}^{(\tau)}\}.
\end{equation}
We refer to $A_i^{(\tau,K)}$ as the $i$-th \emph{group}.
For the following discussion, we focus on the case when $K=1$. We will write $A_i^{(\tau)}$ for $A_i^{(\tau,1)}$ and write $\tilde a_i^{(\tau)}$ for $\tilde a_{i,1}^{(\tau)}$. Extending this to arbitrary $K$ is straightforward.

The key empirical phenomenon is a \emph{reproduction-rate asymmetry}: a regenerated answer is more likely to match the initial answer when the initial answer is correct. Let $r_C(\tau)$ and $r_W(\tau)$ denote the probabilities of reproducing the initial answer, conditioned on whether it is correct or wrong:
\begin{equation}
\label{eq:rates_def}
    r_C(\tau) := \Pr[\tilde a_i^{(\tau)} = a_i \mid a_i = a^\star], \qquad
    r_W(\tau) := \Pr[\tilde a_i^{(\tau)} = a_i \mid a_i \neq a^\star]
\end{equation}
Across models and benchmarks, we consistently observe the following inequality (Table~\ref{tab:signal}):
\begin{equation}
\label{eq:asymmetry}
    r_C(\tau) > r_W(\tau).
\end{equation}
In other words, when the initial answer is correct, regeneration from its prefix tends to produce the same answer. When the initial answer is incorrect, regeneration more often produces a different incorrect answer than the same incorrect answer. Figure~\ref{fig:signal_distributions} illustrates this asymmetry on FrontierScience-Olympiad with GPT-OSS-120B and contrasts it with two baseline signals (DeepConf tail and P(True)) that fail to distinguish between correct and incorrect traces.

To exploit this observation, we score each candidate $a \in A_i^{(\tau)}$ by its reproducibility within the group:
\begin{equation}
\label{eq:ci_def}
    c_i^{(\tau)}(a) := \frac{|\{a' \in A_i^{(\tau)} : a' = a\}|}{2}.
\end{equation}
We denote the \emph{prefix consistency score} of $a$ in group $i$ by $c_i^{(\tau)}(a) \in \{0, 1/2, 1\}$.
Unlike conventional per-sample reliability signals (e.g., DeepConf tail, P(True)), which assign a single scalar to each sample's initial answer, $c_i^{(\tau)}(a)$ is defined for every candidate $a \in A_i^{(\tau)}$, including regenerated answers that did not appear among the initial answers.
\subsection{Prefix-Consistency-Weighted Majority Voting (Algorithm~\ref{alg:regen_wmv})}
\label{sec:wmv}

We set the WMV weight in Eq.~\eqref{eq:wmv} using Eq.~\eqref{eq:ci_def}:
\begin{equation}
\label{eq:pc_weight}
    v_i(a) \;=\; w\!\left(c_i^{(\tau)}(a)\right)
\end{equation}
where $w \colon [0,1] \to \mathbb{R}_{\geq 0}$ with $w(0) = 0$.

We refer to this method as \textbf{prefix-consistency-weighted majority voting (PC-WMV)} (Algorithm~\ref{alg:regen_wmv}).
Since Eq.~\eqref{eq:pc_weight} weights every distinct $a \in A_i^{(\tau)}$ rather than only the initial answer $a_i$, PC-WMV's aggregated vote $\sum_i v_i(a)$ can be positive for regenerated answers absent from the initial $N$ samples.
This is the operational consequence of using a per-candidate signal rather than a per-sample one.

We now demonstrate that this additional flexibility results in a clear advantage over Standard MV in situations where Standard MV is proven to fail. Standard MV fails when the correct answer occurs less frequently than a wrong answer. The following theorem, in the simplest case of a binary answer space, demonstrates how PC-WMV uses the reproduction-rate asymmetry to recover the correct answer in this regime.

\begin{algorithm}[t]
\small
\setlength{\belowcaptionskip}{0pt}
\caption{Prefix-Consistency-Weighted Majority Voting (PC-WMV)}
\label{alg:regen_wmv}
\begin{algorithmic}[1]
\REQUIRE Problem $q$, \#groups $N$, truncation fraction $\tau$, \#regenerations per group $K$, weighting $w$
\STATE $\mathrm{votes} \leftarrow \emptyset$ \COMMENT{map from answer to accumulated votes}
\FOR{$i = 1$ to $N$}
    \STATE $y_i \leftarrow \textsc{GenerateTrace}(q)$
    \STATE $a_i \leftarrow \textsc{ExtractAnswer}(y_i)$
    \STATE $y_i[{:}\lceil \tau |y_i| \rceil] \leftarrow \textsc{Truncate}(y_i, \tau)$ \COMMENT{shared prefix}
    \STATE $A_i^{(\tau,K)} \leftarrow [a_i]$ \COMMENT{multiset of answers for $i$-th group}
    \FOR{$k = 1$ to $K$}
        \STATE $\tilde{y}_{i,k}^{(\tau)} \leftarrow \textsc{ContinueFromPrefix}(q, y_i[{:}\lceil \tau |y_i| \rceil])$
        \STATE $\tilde{a}_{i,k}^{(\tau)} \leftarrow \textsc{ExtractAnswer}(\tilde{y}_{i,k}^{(\tau)})$
        \STATE append $\tilde{a}_{i,k}^{(\tau)}$ to $A_i^{(\tau,K)}$
    \ENDFOR
    \FOR{each distinct $a$ in $A_i^{(\tau,K)}$}
        \STATE $c_i^{(\tau,K)}(a) \leftarrow |\{a' \in A_i^{(\tau,K)} : a' = a\}| / (K{+}1)$ \COMMENT{general-$K$ form of Eq.~\eqref{eq:ci_def}}
        \STATE $\mathrm{votes}[a] \leftarrow \mathrm{votes}[a] + w(c_i^{(\tau,K)}(a))$
    \ENDFOR
\ENDFOR
\RETURN $\displaystyle\argmax_{a \in \mathcal{A}} \mathrm{votes}[a]$
\end{algorithmic}
\end{algorithm}

\begin{maintheorem}[PC-WMV strictly improves over Standard MV when $r_C(\tau) > r_W(\tau)$]
\label{thm:improvement}
Fix $\tau \in (0, 1)$ and $K = 1$. Let $\mathcal A = \{a^\star, a'\}$, where $a^\star$ is the correct answer and $a'$ is the only wrong answer. Assume the $N$ groups are i.i.d., and let $\pi(a^\star) := \Pr[a_i = a^\star]$ denote the Pass@1. Assume $r_C(\tau) > r_W(\tau) > 0$. For any weighting function $w$ with $w(0) = 0$ and $w(1) > 0$, in the limit of $N \rightarrow \infty$, PC-WMV converges to $a^\star$ iff
\begin{equation}
\label{eq:improvement_condition}
    \pi(a^\star) > \frac{r_W(\tau)}{r_C(\tau) + r_W(\tau)}.
\end{equation}
In contrast, Standard MV converges to $a^\star$ if and only if $\pi(a^\star) > \tfrac12$. Therefore, PC-WMV converges to $a^\star$ on the interval where Standard MV does not:
\begin{equation}
\label{eq:improvement_region}
    \pi(a^\star) \;\in\; \left(\frac{r_W(\tau)}{r_C(\tau) + r_W(\tau)},\; \tfrac12\right],
\end{equation}
of width
\begin{equation}
\label{eq:improvement_width}
    \frac{D(\tau)}{2\,(r_C(\tau) + r_W(\tau))} \qquad (D(\tau) := r_C(\tau) - r_W(\tau)).
\end{equation}
\end{maintheorem}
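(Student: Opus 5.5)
Our plan is to reduce both claims to the strong law of large numbers applied to the per-group vote contributions, and then compare the two limiting expected votes. With $K=1$ and $\mathcal A=\{a^\star,a'\}$, each group $A_i^{(\tau)}=\{a_i,\tilde a_i^{(\tau)}\}$ falls into exactly one of four cases. In the first case, $a_i=\tilde a_i=a^\star$, so $c_i(a^\star)=1$ and $c_i(a')=0$. In the second case, $a_i=a^\star\neq\tilde a_i$, which puts $a^\star$ and $a'$ at $1/2$ each. In the third case, $a_i=a'=\tilde a_i$, so $c_i(a')=1$ and $c_i(a^\star)=0$. In the fourth case, $a_i=a'\neq\tilde a_i$, which again puts both at $1/2$. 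Since $w(0)=0$, group $i$ contributes to $a^\star$ the amount $v_i(a^\star)=w(1)$, $w(1/2)$, $0$, $w(1/2)$ in the four cases respectively, and symmetrically for $a'$. The binary answer space is essential here: a non-reproduced answer must flip to the other one.

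The first key step is to compute expectations. Using $\Pr[\text{case 1}]=\pi r_C$, $\Pr[\text{case 2}]=\pi(1-r_C)$, $\Pr[\text{case 3}]=(1-\pi)r_W$ and $\Pr[\text{case 4}]=(1-\pi)(1-r_W)$, with $\pi=\pi(a^\star)$, we get
\[
\mathbb E[v_i(a^\star)]-\mathbb E[v_i(a')] = w(1)\bigl(\pi r_C-(1-\pi)r_W\bigr),
\]
because the $w(1/2)$ terms contribute equally to both answers and cancel. Since the groups are i.i.d. and each $v_i$ is bounded, the SLLN gives $\frac1N\sum_i v_i(a)\to\mathbb E[v_i(a)]$ almost surely. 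Hence PC-WMV outputs $a^\star$ for all large $N$ exactly when this difference is positive. As $w(1)>0$, that condition is $\pi(r_C+r_W)>r_W$, which is Eq.~\eqref{eq:improvement_condition}.

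For the converse ("only if") direction, we have to deal with boundaries. If the difference is strictly negative, the output converges to $a'$. At exact equality the normalized difference tends to $0$, and the sign keeps fluctuating, by a CLT or recurrence argument for a mean-zero random walk, assuming the difference is nondegenerate. So there is no convergence to $a^\star$, and the condition is necessary. We expect this boundary handling, together with making precise what "converges" means (almost-sure eventual output versus convergence in probability), to be the main point needing care. We will interpret convergence as $\Pr[\hat a_N=a^\star]\to1$ and treat the tie case with the CLT, noting that a tie-breaking convention does not change the conclusion.

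The Standard MV claim is the same argument with $v_i(a)=\mathbf 1[a_i=a]$: the limiting difference is $2\pi-1$, so MV converges to $a^\star$ iff $\pi>1/2$. The improvement region then follows by checking that $r_W/(r_C+r_W)<1/2$ is equivalent to $r_C>r_W$. On the interval $\bigl(r_W/(r_C+r_W),1/2\bigr]$, PC-WMV succeeds and MV does not. Its width is
\[
\tfrac12-\tfrac{r_W}{r_C+r_W}=\tfrac{r_C-r_W}{2(r_C+r_W)},
\]
which is Eq.~\eqref{eq:improvement_width}. The hypothesis $r_W>0$ is used only to keep the threshold nontrivial and the denominator positive.
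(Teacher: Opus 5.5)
Your proposal is correct and follows essentially the paper's route: apply the strong law to the per-group votes, then identify the limiting margin $w(1)\bigl[\pi r_C - (1-\pi) r_W\bigr]$. The paper reaches this margin by specializing its general decomposition $\Phi_w(a) = w(\tfrac12)[\pi(a)+\pi^{\rightarrow}(a)] + \lambda_w\,\pi(a)\,T(a \rightarrow a)$, showing that in the binary case the pooled-mass bracket is twice the self-reproduction bracket; your four-case enumeration makes the $w(\tfrac12)$ cancellation immediate instead. Your mean-zero random-walk argument at the tie boundary supplies the ``only if'' direction, which the paper asserts directly from its population-objective proposition even though that proposition by itself only gives sufficiency.
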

The formal proof of Theorem~\ref{thm:improvement} is in Appendix~\ref{app:binary}.

\begin{interpretation}
The key quantity in our analysis is $D(\tau)$, which we call the \emph{discrimination gap}.
Theorem~\ref{thm:improvement} considers the simple case of two answer candidates, correct and wrong: when $D(\tau) > 0$, PC-WMV recovers the correct answer in cases where the majority is wrong, provided that Pass@1 falls within the interval of Eq.~\eqref{eq:improvement_region}.
The larger $D(\tau)$ is, the wider the region in which PC-WMV outperforms Standard MV.
Our benchmarks (Table~\ref{tab:signal}) show that $D(\tau)$ is substantially larger than $0$ across models and benchmarks, so PC-WMV is effective in practice.
\end{interpretation}

\paragraph{Hyperparameters.}
Prefix consistency has two hyperparameters: the truncation fraction $\tau \in (0, 1)$ and the number of regenerations per group $K \in \mathbb{N}$. PC-WMV adds a third, the weighting function $w \colon [0, 1] \to \mathbb{R}_{\geq 0}$ with $w(0) = 0$.

\section{Experiments}
\label{sec:experiments}

We conduct experiments on science (FrontierScience-Olympiad~\citep{openai2025frontierscience}) and math (HMMT Feb~2026, AIME~2025, Brumo~2025~\citep{balunovic2025matharena}) datasets.
We evaluate on five reasoning LLMs: GPT-OSS-120B, GPT-OSS-20B~\citep{openai2025gptoss}, Nemotron3-30B (\texttt{NVIDIA-Nemotron-3-Nano-30B-A3B-BF16})~\citep{nvidia2025nemotron3nano}, Nemotron2-9B (\texttt{NVIDIA-Nemotron-Nano-9B-v2})~\citep{nvidia2025nemotronnano2}, and Ministral3-14B (\texttt{Ministral-3-14B-Reasoning-2512})~\citep{mistral2026ministral3}.

We compare our proposed methods against Standard MV, three primary WMV baselines (Self-certainty~\citep{kang2025selfcertainty}, DeepConf tail~\citep{fu2025deepthinkconfidence}, and P(True)~\citep{kadavath2022languagemodelsknowthey,taubenfeld2025confidenceimprovesselfconsistency}), and two adaptive-stopping rules over MV (Adaptive Consistency, AC~\citep{aggarwal-etal-2023-lets}, and Early-Stopping Self-Consistency, ESC~\citep{li2024escape}).
The details of these methods are documented in Appendix~\ref{app:baseline_impl}.

\paragraph{Hyperparameters of prefix consistency.}
Unless otherwise specified, we fix the truncation fraction at $\tau = 0.75$. The results in the main paper use $K = 1$ throughout. We accordingly suppress $\tau$ in the notation and write $c_i(a), r_C, r_W, D$ for $c_i^{(\tau)}(a), r_C(\tau), r_W(\tau), D(\tau)$.

\subsection{Prefix Consistency as a Correctness Predictor}
\label{sec:signal}

We report $\overline{\mathrm{AUROC}}$, the macro-averaged AUROC over problems with at least one correct and one wrong initial sample (formal definition in Appendix~\ref{app:auroc_eval}).

Note that some previous work~\citep{xiong2024llmsexpressuncertainty,fadeeva2024factchecking} adopted AUROC pooled across problems, which differs from $\overline{\mathrm{AUROC}}$. However,
as argued in \citet{taubenfeld2025confidenceimprovesselfconsistency}, such an AUROC pooled across problems conflates within-problem discrimination with cross-problem score-difficulty correlation, and only the former predicts whether confidence-weighted self-consistency improves over Standard MV. They also report that calibration metrics such as Expected Calibration Error (ECE) and Brier score are similarly unsuitable. In their data, the best-calibrated source (verbalized binary) gave the smallest improvement while the strongest method (P(True) therein) was only moderately calibrated.
At vote time, every WMV method (including PC-WMV and all baselines) compares scores only among samples from the same problem, and thus we consider $\overline{\mathrm{AUROC}}$ to better measure discriminative ability between correct traces and wrong traces.

Table~\ref{tab:signal} reports the discrimination gap $D = r_C - r_W$ for prefix consistency: $D > 0$ on every (model, benchmark) cell, confirming the asymmetry $r_C > r_W$.
Table~\ref{tab:auroc} reports $\overline{\mathrm{AUROC}}$ for prefix consistency against the WMV baselines. Prefix consistency has the highest $\overline{\mathrm{AUROC}}$ on $15$ of $20$ cells (typically around $0.7$), separating correct from wrong traces more clearly than the baselines.
Baselines' $\overline{\mathrm{AUROC}}$ often hovers near $0.5$ (= random\footnote{$\overline{\mathrm{AUROC}} = 0.5$ is the value attained by a random signal that is uninformative about correctness.}) on harder cells, where their scores differ little between correct and wrong samples, reaching $\sim 0.7$ only on some easier cells.

\begin{table}
\centering
\caption{\textbf{Reproduction rates $r_C$, $r_W$ and discrimination gap $D = r_C - r_W$ for prefix consistency (larger $D$ is better).} Macro-averaged over problems with at least one correct and one wrong initial sample. $r_C \geq r_W$ holds on every (model, benchmark) cell, and a larger $D$ predicts a larger PC-WMV advantage over Standard MV (Theorem~\ref{thm:improvement}).}
\label{tab:signal}
\footnotesize
\setlength{\tabcolsep}{3pt}
\begin{tabular}{l ccc ccc ccc ccc ccc}
\toprule
& \multicolumn{3}{c}{GPT-OSS-120B} & \multicolumn{3}{c}{GPT-OSS-20B} & \multicolumn{3}{c}{Nemotron3-30B} & \multicolumn{3}{c}{Nemotron2-9B} & \multicolumn{3}{c}{Ministral3-14B} \\
\cmidrule(lr){2-4} \cmidrule(lr){5-7} \cmidrule(lr){8-10} \cmidrule(lr){11-13} \cmidrule(lr){14-16}
Benchmark & $r_C$ & $r_W$ & $D$ & $r_C$ & $r_W$ & $D$ & $r_C$ & $r_W$ & $D$ & $r_C$ & $r_W$ & $D$ & $r_C$ & $r_W$ & $D$ \\
\midrule
FrontierScience-Olympiad & 66.0 & 22.3 & 43.7 & 55.4 & 14.0 & 41.4 & 50.9 & 24.7 & 26.2 & 45.0 & 37.5 & 7.4 & 56.4 & 26.0 & 30.4 \\
HMMT Feb~2026 & 87.8 & 48.1 & 39.7 & 78.3 & 29.7 & 48.6 & 80.0 & 40.4 & 39.6 & 79.3 & 49.6 & 29.7 & 74.5 & 32.0 & 42.4 \\
AIME~2025 & 95.5 & 61.7 & 33.7 & 87.8 & 42.6 & 45.2 & 74.7 & 38.2 & 36.5 & 69.5 & 43.2 & 26.3 & 77.3 & 26.1 & 51.1 \\
Brumo~2025 & 83.8 & 56.7 & 27.1 & 82.8 & 39.0 & 43.7 & 75.6 & 15.3 & 60.3 & 81.2 & 54.7 & 26.6 & 76.2 & 24.4 & 51.8 \\
\bottomrule
\end{tabular}
\vspace{2pt}
\par\raggedright\footnotesize
\end{table}

\begin{table}
\centering
\caption{\textbf{$\overline{\mathrm{AUROC}}$ for correctness discrimination (higher is better).} Macro-averaged $\overline{\mathrm{AUROC}}$ per (model, benchmark), with the best per column in bold.}
\label{tab:auroc}
\footnotesize
\setlength{\tabcolsep}{1.5pt}
\resizebox{\textwidth}{!}{%
\begin{tabular}{l cccc cccc cccc cccc cccc}
\toprule
& \multicolumn{4}{c}{GPT-OSS-120B} & \multicolumn{4}{c}{GPT-OSS-20B} & \multicolumn{4}{c}{Nemotron3-30B} & \multicolumn{4}{c}{Nemotron2-9B} & \multicolumn{4}{c}{Ministral3-14B} \\
\cmidrule(lr){2-5} \cmidrule(lr){6-9} \cmidrule(lr){10-13} \cmidrule(lr){14-17} \cmidrule(lr){18-21}
Signal & FSci & HMMT & AIME & Brumo & FSci & HMMT & AIME & Brumo & FSci & HMMT & AIME & Brumo & FSci & HMMT & AIME & Brumo & FSci & HMMT & AIME & Brumo \\
\midrule
Prefix consistency & \textbf{.719} & .698 & .669 & \textbf{.636} & \textbf{.707} & \textbf{.743} & \textbf{.726} & \textbf{.719} & \textbf{.631} & \textbf{.698} & \textbf{.682} & \textbf{.801} & .537 & \textbf{.648} & .631 & .633 & \textbf{.652} & \textbf{.712} & \textbf{.756} & \textbf{.759} \\
Self-certainty & .570 & .664 & .641 & .549 & .545 & .570 & .560 & .484 & .496 & .414 & .520 & .655 & .458 & .605 & .537 & .580 & .318 & .359 & .408 & .342 \\
DeepConf bottom-10\% & .525 & .640 & .601 & .486 & .525 & .580 & .565 & .481 & .474 & .413 & .514 & .607 & .434 & .515 & .529 & .515 & .342 & .331 & .395 & .347 \\
DeepConf tail & .565 & \textbf{.728} & \textbf{.703} & .606 & .583 & .659 & .645 & .557 & .508 & .588 & .465 & .755 & .494 & .628 & .665 & .641 & .349 & .404 & .421 & .358 \\
Verbal 0--100 & .516 & .552 & .505 & .436 & .516 & .574 & .561 & .511 & .524 & .489 & .606 & .532 & .571 & .563 & .584 & .548 & .505 & .525 & .525 & .450 \\
P(True) & .568 & .550 & .551 & .527 & .587 & .667 & .592 & .583 & .493 & .532 & .578 & .564 & \textbf{.620} & .635 & \textbf{.724} & \textbf{.708} & .469 & .454 & .441 & .477 \\
\bottomrule
\end{tabular}%
}%
\vspace{2pt}
\par\raggedright\footnotesize Abbreviations: FSci = FrontierScience-Olympiad, HMMT = HMMT Feb~2026, AIME = AIME~2025, Brumo = Brumo~2025.
\end{table}

\subsection{Weighted Majority Voting Results}
\label{sec:wmv_results}

We compare PC-WMV against existing WMV methods under the same computational cost.

\paragraph{Weighting variants.}
We use the power family $w^{(n)}(c) = c^n$ for $n \in \{1, 2, 3\}$, denoted PC-linear, PC-quadratic, and PC-cubic, where the ``PC'' prefix abbreviates prefix consistency.
Under $K = 1$, $c_i(a) \in \{0, 1/2, 1\}$. Thus, a candidate that is reproduced under regeneration ($c = 1$) receives weight $1$, while a candidate that appears in only one of the two traces ($c = 1/2$) receives weight $1/2^n$.
The weight ratio between a reproduced candidate and a single-trace one is therefore $2^n : 1$, that is, $2 : 1$ for linear, $4 : 1$ for quadratic, and $8 : 1$ for cubic. The larger $n$ is, the more pronounced the relative weight given to reproduced answers.

\paragraph{Cost-equivalent evaluation.}
We measure inference cost by the total number of generated tokens, treating each generated token as equally expensive, and log-probability access as free.
For each model and benchmark, we first generate $N{=}128$ initial samples per problem ($N{=}64$ for Ministral3-14B), from which all methods sample with replacement under a common token budget $B$. See Appendix~\ref{app:eval_protocol} for the pool construction, trial design, and confidence-interval definition.

Table~\ref{tab:wmv} reports accuracy under fixed token budgets (250k, 1M, and 5M tokens) across the three models and four benchmarks.
At 1M tokens, prefix consistency matches or exceeds all baselines on the more difficult benchmarks (FrontierScience-Olympiad, HMMT Feb~2026), while the advantage is smaller on AIME~2025 where Standard MV already achieves high accuracy.
The improvements are consistent across weighting functions (PC-linear, PC-quadratic, and PC-cubic). PC-cubic provides the greatest improvement for the most difficult problems.
Per-model tables with the full set of baselines (DeepConf variants, Response probability, verbalized confidence, P(True)) for all five models, including the two not shown above (GPT-OSS-20B, Nemotron2-9B), are reported in Appendix~\ref{app:all_baselines}.

\begin{table}
\centering
\caption{\textbf{Weighted majority voting accuracy at fixed token budget $B$ (higher is better).} Each method's accuracy at $B \in \{250\text{k}, 1\text{M}, 5\text{M}\}$ tokens sampled from the shared pool, with the best per (model, $B$) column in bold.}
\label{tab:wmv}
\scriptsize
\setlength{\tabcolsep}{2.2pt}
\renewcommand{\arraystretch}{0.95}
\resizebox{\textwidth}{!}{%
\begin{tabular}{ll ccc ccc ccc}
\toprule
& & \multicolumn{3}{c}{GPT-OSS-120B} & \multicolumn{3}{c}{Nemotron3-30B} & \multicolumn{3}{c}{Ministral3-14B} \\
\cmidrule(lr){3-5} \cmidrule(lr){6-8} \cmidrule(lr){9-11}
Benchmark & Method & $B{=}$250k & $B{=}$1M & $B{=}$5M & $B{=}$250k & $B{=}$1M & $B{=}$5M & $B{=}$250k & $B{=}$1M & $B{=}$5M \\
\midrule
\multirow{7}{*}{\shortstack[l]{FrontierScience-Olympiad}} & Standard MV & .493$_{\pm.001}$ & .495$_{\pm.001}$ & .499$_{\pm.001}$ & .404$_{\pm.002}$ & .468$_{\pm.002}$ & .490$_{\pm.001}$ & .137$_{\pm.001}$ & .141$_{\pm.001}$ & .142$_{\pm.001}$ \\
 & Self-certainty & .494$_{\pm.001}$ & .495$_{\pm.001}$ & .497$_{\pm.001}$ & .402$_{\pm.002}$ & .468$_{\pm.002}$ & .491$_{\pm.001}$ & .130$_{\pm.001}$ & .135$_{\pm.001}$ & .137$_{\pm.001}$ \\
 & DeepConf tail & .493$_{\pm.001}$ & .495$_{\pm.001}$ & .497$_{\pm.001}$ & .405$_{\pm.002}$ & .470$_{\pm.002}$ & .494$_{\pm.001}$ & .130$_{\pm.001}$ & .134$_{\pm.001}$ & .137$_{\pm.001}$ \\
 & P(True) & .493$_{\pm.001}$ & .500$_{\pm.001}$ & .500$_{\pm.000}$ & .394$_{\pm.003}$ & .465$_{\pm.002}$ & .492$_{\pm.002}$ & .137$_{\pm.001}$ & .143$_{\pm.001}$ & .146$_{\pm.001}$ \\
 & PC-linear & .495$_{\pm.001}$ & .496$_{\pm.001}$ & .496$_{\pm.001}$ & .429$_{\pm.002}$ & .481$_{\pm.002}$ & .494$_{\pm.001}$ & .149$_{\pm.002}$ & .160$_{\pm.001}$ & .162$_{\pm.001}$ \\
 & PC-quadratic & .502$_{\pm.001}$ & .504$_{\pm.001}$ & .503$_{\pm.001}$ & \textbf{.433}$_{\pm.002}$ & \textbf{.486}$_{\pm.002}$ & \textbf{.504}$_{\pm.002}$ & .156$_{\pm.002}$ & .170$_{\pm.001}$ & .173$_{\pm.001}$ \\
 & PC-cubic & \textbf{.506}$_{\pm.001}$ & \textbf{.508}$_{\pm.001}$ & \textbf{.507}$_{\pm.001}$ & \textbf{.433}$_{\pm.002}$ & \textbf{.486}$_{\pm.002}$ & .503$_{\pm.001}$ & \textbf{.158}$_{\pm.002}$ & \textbf{.174}$_{\pm.001}$ & \textbf{.181}$_{\pm.001}$ \\
\midrule
\multirow{7}{*}{\shortstack[l]{HMMT Feb~2026}} & Standard MV & .708$_{\pm.003}$ & .745$_{\pm.003}$ & .763$_{\pm.001}$ & .736$_{\pm.004}$ & .777$_{\pm.003}$ & .802$_{\pm.002}$ & .413$_{\pm.004}$ & \textbf{.440}$_{\pm.003}$ & .458$_{\pm.002}$ \\
 & Self-certainty & .713$_{\pm.003}$ & .749$_{\pm.003}$ & .768$_{\pm.002}$ & .734$_{\pm.004}$ & .779$_{\pm.003}$ & \textbf{.804}$_{\pm.002}$ & .409$_{\pm.004}$ & .429$_{\pm.003}$ & .438$_{\pm.002}$ \\
 & DeepConf tail & .720$_{\pm.003}$ & .758$_{\pm.003}$ & .783$_{\pm.002}$ & .740$_{\pm.004}$ & .779$_{\pm.003}$ & .801$_{\pm.002}$ & .408$_{\pm.004}$ & .427$_{\pm.003}$ & .436$_{\pm.002}$ \\
 & P(True) & .706$_{\pm.003}$ & .743$_{\pm.003}$ & .784$_{\pm.002}$ & .736$_{\pm.004}$ & .779$_{\pm.003}$ & \textbf{.804}$_{\pm.002}$ & .404$_{\pm.003}$ & .429$_{\pm.003}$ & .433$_{\pm.002}$ \\
 & PC-linear & .717$_{\pm.003}$ & .751$_{\pm.002}$ & .764$_{\pm.001}$ & \textbf{.742}$_{\pm.004}$ & .785$_{\pm.003}$ & .802$_{\pm.002}$ & \textbf{.418}$_{\pm.004}$ & \textbf{.440}$_{\pm.003}$ & .444$_{\pm.002}$ \\
 & PC-quadratic & .725$_{\pm.003}$ & .764$_{\pm.002}$ & .782$_{\pm.001}$ & \textbf{.742}$_{\pm.004}$ & \textbf{.787}$_{\pm.003}$ & .803$_{\pm.002}$ & .415$_{\pm.004}$ & \textbf{.440}$_{\pm.003}$ & .447$_{\pm.002}$ \\
 & PC-cubic & \textbf{.727}$_{\pm.003}$ & \textbf{.771}$_{\pm.002}$ & \textbf{.786}$_{\pm.001}$ & \textbf{.742}$_{\pm.004}$ & \textbf{.787}$_{\pm.003}$ & \textbf{.804}$_{\pm.002}$ & .414$_{\pm.004}$ & \textbf{.440}$_{\pm.003}$ & \textbf{.461}$_{\pm.002}$ \\
\midrule
\multirow{7}{*}{\shortstack[l]{AIME~2025}} & Standard MV & .901$_{\pm.002}$ & .901$_{\pm.001}$ & .900$_{\pm.000}$ & .935$_{\pm.002}$ & \textbf{.966}$_{\pm.000}$ & \textbf{.967}$_{\pm.000}$ & .508$_{\pm.003}$ & .535$_{\pm.003}$ & .569$_{\pm.002}$ \\
 & Self-certainty & .904$_{\pm.002}$ & .903$_{\pm.001}$ & .900$_{\pm.000}$ & .935$_{\pm.002}$ & \textbf{.966}$_{\pm.000}$ & \textbf{.967}$_{\pm.000}$ & .505$_{\pm.003}$ & .533$_{\pm.003}$ & .565$_{\pm.002}$ \\
 & DeepConf tail & .906$_{\pm.002}$ & .905$_{\pm.001}$ & .900$_{\pm.000}$ & \textbf{.937}$_{\pm.002}$ & \textbf{.966}$_{\pm.000}$ & \textbf{.967}$_{\pm.000}$ & .506$_{\pm.003}$ & .533$_{\pm.003}$ & .565$_{\pm.002}$ \\
 & P(True) & .902$_{\pm.002}$ & .909$_{\pm.001}$ & .906$_{\pm.001}$ & .930$_{\pm.003}$ & .964$_{\pm.001}$ & \textbf{.967}$_{\pm.000}$ & .495$_{\pm.003}$ & .523$_{\pm.003}$ & .542$_{\pm.002}$ \\
 & PC-linear & .906$_{\pm.002}$ & .913$_{\pm.002}$ & .909$_{\pm.001}$ & .932$_{\pm.003}$ & .964$_{\pm.001}$ & \textbf{.967}$_{\pm.000}$ & .556$_{\pm.003}$ & .583$_{\pm.003}$ & .600$_{\pm.002}$ \\
 & PC-quadratic & .912$_{\pm.002}$ & .922$_{\pm.002}$ & .930$_{\pm.002}$ & .932$_{\pm.003}$ & .963$_{\pm.001}$ & \textbf{.967}$_{\pm.000}$ & \textbf{.565}$_{\pm.003}$ & .595$_{\pm.003}$ & .616$_{\pm.002}$ \\
 & PC-cubic & \textbf{.913}$_{\pm.002}$ & \textbf{.926}$_{\pm.002}$ & \textbf{.941}$_{\pm.002}$ & .932$_{\pm.003}$ & .963$_{\pm.001}$ & \textbf{.967}$_{\pm.000}$ & .564$_{\pm.003}$ & \textbf{.597}$_{\pm.003}$ & \textbf{.621}$_{\pm.002}$ \\
\midrule
\multirow{7}{*}{\shortstack[l]{Brumo~2025}} & Standard MV & .801$_{\pm.003}$ & .821$_{\pm.002}$ & .833$_{\pm.000}$ & .896$_{\pm.003}$ & .928$_{\pm.002}$ & .931$_{\pm.001}$ & .669$_{\pm.004}$ & .691$_{\pm.003}$ & .699$_{\pm.003}$ \\
 & Self-certainty & .805$_{\pm.003}$ & .826$_{\pm.002}$ & .833$_{\pm.000}$ & .918$_{\pm.003}$ & .954$_{\pm.002}$ & .963$_{\pm.001}$ & .659$_{\pm.004}$ & .682$_{\pm.003}$ & .686$_{\pm.002}$ \\
 & DeepConf tail & .810$_{\pm.003}$ & .831$_{\pm.001}$ & .833$_{\pm.000}$ & .920$_{\pm.003}$ & .955$_{\pm.002}$ & .965$_{\pm.001}$ & .659$_{\pm.004}$ & .681$_{\pm.003}$ & .683$_{\pm.002}$ \\
 & P(True) & .794$_{\pm.003}$ & .821$_{\pm.003}$ & .844$_{\pm.002}$ & .886$_{\pm.003}$ & .917$_{\pm.002}$ & .929$_{\pm.001}$ & .632$_{\pm.004}$ & .658$_{\pm.003}$ & .661$_{\pm.003}$ \\
 & PC-linear & .810$_{\pm.003}$ & .830$_{\pm.002}$ & .833$_{\pm.000}$ & \textbf{.933}$_{\pm.003}$ & \textbf{.961}$_{\pm.002}$ & \textbf{.968}$_{\pm.001}$ & .675$_{\pm.004}$ & .705$_{\pm.003}$ & .723$_{\pm.002}$ \\
 & PC-quadratic & .815$_{\pm.003}$ & .837$_{\pm.002}$ & .842$_{\pm.001}$ & .932$_{\pm.003}$ & .959$_{\pm.002}$ & \textbf{.968}$_{\pm.001}$ & .684$_{\pm.004}$ & .719$_{\pm.003}$ & .740$_{\pm.002}$ \\
 & PC-cubic & \textbf{.818}$_{\pm.003}$ & \textbf{.842}$_{\pm.002}$ & \textbf{.857}$_{\pm.001}$ & .929$_{\pm.003}$ & .953$_{\pm.002}$ & .964$_{\pm.001}$ & \textbf{.686}$_{\pm.004}$ & \textbf{.727}$_{\pm.003}$ & \textbf{.759}$_{\pm.003}$ \\
\bottomrule
\end{tabular}%
}%
\end{table}

\subsection{Token Efficiency}
\label{sec:token_efficiency}

Section~\ref{sec:wmv_results} reported accuracy under fixed budget constraints.
We next compare how many tokens each method needs to reach the same target accuracy.

Table~\ref{tab:token_savings} shows the token-efficiency ratio $B_{\mathrm{method}} / B_{\mathrm{MV}}$, where $B_X$ is the budget method $X$ needs to reach the target accuracy Pass@1 $+\, \alpha \times$ (Standard MV plateau $-$ Pass@1) for $\alpha \in \{75\%, 90\%, 99\%\}$. The Standard MV plateau is Standard MV's bootstrap-saturated accuracy on the $N$-sample pool (Appendix~\ref{app:cost_accuracy_eval}), so $\alpha$ interpolates between Pass@1 ($\alpha{=}0$) and this plateau ($\alpha{=}1$). A ratio $<1$ means $X$ is more cost-efficient than Standard MV at the target; e.g., $0.05\times$ corresponds to the $21\times$ saving in Figure~\ref{fig:regen-consistency-main}. The headline numbers in this paper (median $4.6\times$, up to $21\times$ vs.\ Standard MV, up to $10\times$ vs.\ AC sweep) are computed at $\alpha{=}99\%$ across all $20$ (model, benchmark) cells: Table~\ref{tab:token_savings} together with Table~\ref{tab:token_savings_gpt_oss_20b_all} (GPT-OSS-20B) and Table~\ref{tab:token_savings_nemotron2_9b_all} (Nemotron2-9B) in Appendix~\ref{app:all_baselines}.

\begin{table}
\centering
\caption{\textbf{Token efficiency ratio $B_{\mathrm{method}} / B_{\mathrm{MV}}$ at target accuracy $\alpha$ between Pass@1 and the Standard MV plateau (smaller is better).} $B_X$ is the budget method $X$ needs to reach Pass@1 $+\, \alpha \times$ (Standard MV plateau $-$ Pass@1) for $\alpha \in \{75\%, 90\%, 99\%\}$. Cells $<$1 indicate more cost-efficient than Standard MV. ``N/A'' indicates the method's plateau is below the target. Best method per column in bold.}
\label{tab:token_savings}
\scriptsize
\setlength{\tabcolsep}{2.2pt}
\renewcommand{\arraystretch}{0.95}
\resizebox{\textwidth}{!}{%
\begin{tabular}{ll ccc ccc ccc}
\toprule
& & \multicolumn{3}{c}{GPT-OSS-120B} & \multicolumn{3}{c}{Nemotron3-30B} & \multicolumn{3}{c}{Ministral3-14B} \\
\cmidrule(lr){3-5} \cmidrule(lr){6-8} \cmidrule(lr){9-11}
Benchmark & Method & $\alpha{=}75\%$ & $\alpha{=}90\%$ & $\alpha{=}99\%$ & $\alpha{=}75\%$ & $\alpha{=}90\%$ & $\alpha{=}99\%$ & $\alpha{=}75\%$ & $\alpha{=}90\%$ & $\alpha{=}99\%$ \\
\midrule
\multirow{10}{*}{\shortstack[l]{FrontierScience-Olympiad}} & Pass@1 / Standard MV plateau & .338 / .500 &  &  & .295 / .493 &  &  & .091 / .142 &  &  \\
 & Standard MV budget ($B_{\mathrm{MV}}$) & 45k & 107k & 2.5M & 486k & 1.3M & 6.0M & 118k & 249k & 924k \\
\cmidrule(l){2-11}
 & Self-certainty & 0.89$_{\pm0.04}$$\times$ & 0.87$_{\pm0.06}$$\times$ & N/A & 1.04$_{\pm0.06}$$\times$ & 1.01$_{\pm0.11}$$\times$ & 0.85$_{\pm0.19}$$\times$ & 1.62$_{\pm0.20}$$\times$ & $>$10$\times$ & N/A \\
 & DeepConf tail & 0.91$_{\pm0.05}$$\times$ & 0.91$_{\pm0.07}$$\times$ & N/A & 0.97$_{\pm0.06}$$\times$ & 0.88$_{\pm0.08}$$\times$ & 0.56$_{\pm0.14}$$\times$ & 1.54$_{\pm0.18}$$\times$ & $>$10$\times$ & N/A \\
 & P(True) & 1.23$_{\pm0.05}$$\times$ & 1.20$_{\pm0.10}$$\times$ & 0.23$_{\pm0.09}$$\times$ & 1.14$_{\pm0.06}$$\times$ & 1.10$_{\pm0.11}$$\times$ & 0.75$_{\pm0.18}$$\times$ & 1.10$_{\pm0.12}$$\times$ & 1.01$_{\pm0.27}$$\times$ & 0.48$_{\pm0.30}$$\times$ \\
 & AC sweep & 1.93$_{\pm0.09}$$\times$ & 1.54$_{\pm0.12}$$\times$ & 0.29$_{\pm0.15}$$\times$ & 1.24$_{\pm0.07}$$\times$ & 0.87$_{\pm0.09}$$\times$ & 0.49$_{\pm0.13}$$\times$ & 2.48$_{\pm0.40}$$\times$ & 4.28$_{\pm1.45}$$\times$ & N/A \\
 & ESC sweep & 2.74$_{\pm0.08}$$\times$ & 5.59$_{\pm0.70}$$\times$ & 0.94$_{\pm0.53}$$\times$ & 2.06$_{\pm0.26}$$\times$ & 3.62$_{\pm0.41}$$\times$ & 2.23$_{\pm0.58}$$\times$ & 4.53$_{\pm1.55}$$\times$ & $>$10$\times$ & $>$10$\times$ \\
 & PC-linear & 0.70$_{\pm0.03}$$\times$ & 0.63$_{\pm0.05}$$\times$ & 3.88$_{\pm2.37}$$\times$ & 0.67$_{\pm0.04}$$\times$ & 0.54$_{\pm0.05}$$\times$ & 0.40$_{\pm0.09}$$\times$ & \textbf{0.38$_{\pm0.04}$$\times$} & 0.26$_{\pm0.07}$$\times$ & 0.10$_{\pm0.07}$$\times$ \\
 & PC-quadratic & \textbf{0.65$_{\pm0.03}$$\times$} & \textbf{0.52$_{\pm0.04}$$\times$} & 0.06$_{\pm0.02}$$\times$ & \textbf{0.61$_{\pm0.03}$$\times$} & \textbf{0.48$_{\pm0.04}$$\times$} & 0.24$_{\pm0.06}$$\times$ & \textbf{0.38$_{\pm0.04}$$\times$} & \textbf{0.24$_{\pm0.06}$$\times$} & \textbf{0.08$_{\pm0.05}$$\times$} \\
 & PC-cubic & \textbf{0.65$_{\pm0.03}$$\times$} & \textbf{0.52$_{\pm0.04}$$\times$} & \textbf{0.05$_{\pm0.02}$$\times$} & \textbf{0.61$_{\pm0.03}$$\times$} & \textbf{0.48$_{\pm0.04}$$\times$} & \textbf{0.22$_{\pm0.05}$$\times$} & \textbf{0.38$_{\pm0.04}$$\times$} & \textbf{0.24$_{\pm0.06}$$\times$} & \textbf{0.08$_{\pm0.05}$$\times$} \\
\midrule
\multirow{10}{*}{\shortstack[l]{HMMT Feb~2026}} & Pass@1 / Standard MV plateau & .589 / .760 &  &  & .708 / .810 &  &  & .270 / .463 &  &  \\
 & Standard MV budget ($B_{\mathrm{MV}}$) & 364k & 858k & 1.8M & 1.6M & 4.2M & 8.5M & 258k & 1.2M & 7.4M \\
\cmidrule(l){2-11}
 & Self-certainty & 0.74$_{\pm0.07}$$\times$ & 0.85$_{\pm0.07}$$\times$ & 0.87$_{\pm0.11}$$\times$ & 0.85$_{\pm0.15}$$\times$ & 0.89$_{\pm0.14}$$\times$ & 0.93$_{\pm0.12}$$\times$ & 1.29$_{\pm0.17}$$\times$ & N/A & N/A \\
 & DeepConf tail & 0.63$_{\pm0.09}$$\times$ & 0.72$_{\pm0.08}$$\times$ & 0.57$_{\pm0.07}$$\times$ & 0.82$_{\pm0.15}$$\times$ & 0.97$_{\pm0.19}$$\times$ & N/A & 1.29$_{\pm0.17}$$\times$ & N/A & N/A \\
 & P(True) & 1.05$_{\pm0.14}$$\times$ & 1.17$_{\pm0.14}$$\times$ & 0.94$_{\pm0.09}$$\times$ & 0.86$_{\pm0.15}$$\times$ & 0.90$_{\pm0.12}$$\times$ & 0.82$_{\pm0.10}$$\times$ & 1.33$_{\pm0.14}$$\times$ & N/A & N/A \\
 & AC sweep & \textbf{0.42$_{\pm0.05}$$\times$} & 0.63$_{\pm0.10}$$\times$ & N/A & \textbf{0.51$_{\pm0.09}$$\times$} & \textbf{0.34$_{\pm0.05}$$\times$} & \textbf{0.24$_{\pm0.03}$$\times$} & 1.52$_{\pm0.28}$$\times$ & 1.21$_{\pm0.28}$$\times$ & \textbf{0.60$_{\pm0.20}$$\times$} \\
 & ESC sweep & 1.96$_{\pm0.33}$$\times$ & 6.66$_{\pm1.29}$$\times$ & N/A & 4.33$_{\pm1.24}$$\times$ & 3.12$_{\pm0.46}$$\times$ & 1.88$_{\pm0.26}$$\times$ & 8.79$_{\pm1.33}$$\times$ & 4.75$_{\pm1.03}$$\times$ & 2.05$_{\pm0.75}$$\times$ \\
 & PC-linear & 0.70$_{\pm0.08}$$\times$ & 0.79$_{\pm0.08}$$\times$ & 0.88$_{\pm0.11}$$\times$ & 0.62$_{\pm0.11}$$\times$ & 0.86$_{\pm0.16}$$\times$ & N/A & \textbf{0.85$_{\pm0.11}$$\times$} & 1.11$_{\pm0.28}$$\times$ & N/A \\
 & PC-quadratic & 0.52$_{\pm0.05}$$\times$ & 0.49$_{\pm0.04}$$\times$ & 0.41$_{\pm0.04}$$\times$ & 0.56$_{\pm0.10}$$\times$ & 0.80$_{\pm0.15}$$\times$ & 1.10$_{\pm0.15}$$\times$ & 0.91$_{\pm0.11}$$\times$ & 1.01$_{\pm0.26}$$\times$ & N/A \\
 & PC-cubic & 0.52$_{\pm0.05}$$\times$ & \textbf{0.46$_{\pm0.04}$$\times$} & \textbf{0.35$_{\pm0.04}$$\times$} & 0.59$_{\pm0.10}$$\times$ & 0.79$_{\pm0.13}$$\times$ & 0.95$_{\pm0.14}$$\times$ & 1.03$_{\pm0.12}$$\times$ & 0.98$_{\pm0.20}$$\times$ & 0.67$_{\pm0.20}$$\times$ \\
\midrule
\multirow{10}{*}{\shortstack[l]{AIME~2025}} & Pass@1 / Standard MV plateau & .789 / .900 &  &  & .902 / .967 &  &  & .345 / .576 &  &  \\
 & Standard MV budget ($B_{\mathrm{MV}}$) & 56k & 99k & 189k & 396k & 589k & 1.2M & 371k & 2.0M & 6.9M \\
\cmidrule(l){2-11}
 & Self-certainty & 0.81$_{\pm0.06}$$\times$ & 0.81$_{\pm0.07}$$\times$ & 0.66$_{\pm0.11}$$\times$ & 0.97$_{\pm0.06}$$\times$ & 0.96$_{\pm0.06}$$\times$ & 0.84$_{\pm0.12}$$\times$ & 1.21$_{\pm0.15}$$\times$ & 1.20$_{\pm0.12}$$\times$ & N/A \\
 & DeepConf tail & \textbf{0.68$_{\pm0.04}$$\times$} & \textbf{0.70$_{\pm0.06}$$\times$} & 0.56$_{\pm0.09}$$\times$ & 0.95$_{\pm0.06}$$\times$ & 0.95$_{\pm0.06}$$\times$ & 0.89$_{\pm0.13}$$\times$ & 1.22$_{\pm0.15}$$\times$ & 1.20$_{\pm0.12}$$\times$ & 1.43$_{\pm0.26}$$\times$ \\
 & P(True) & 1.14$_{\pm0.08}$$\times$ & 1.13$_{\pm0.09}$$\times$ & 1.01$_{\pm0.16}$$\times$ & 1.12$_{\pm0.07}$$\times$ & 1.15$_{\pm0.07}$$\times$ & 1.20$_{\pm0.20}$$\times$ & 1.93$_{\pm0.22}$$\times$ & N/A & N/A \\
 & AC sweep & 1.03$_{\pm0.12}$$\times$ & 1.06$_{\pm0.26}$$\times$ & 2.98$_{\pm0.82}$$\times$ & \textbf{0.21$_{\pm0.01}$$\times$} & \textbf{0.18$_{\pm0.01}$$\times$} & \textbf{0.21$_{\pm0.05}$$\times$} & 0.91$_{\pm0.11}$$\times$ & 0.54$_{\pm0.06}$$\times$ & 0.49$_{\pm0.10}$$\times$ \\
 & ESC sweep & 0.96$_{\pm0.06}$$\times$ & 1.38$_{\pm0.57}$$\times$ & $>$10$\times$ & 0.28$_{\pm0.01}$$\times$ & 0.24$_{\pm0.05}$$\times$ & 0.59$_{\pm0.18}$$\times$ & 4.44$_{\pm0.83}$$\times$ & 2.41$_{\pm0.22}$$\times$ & 1.38$_{\pm0.26}$$\times$ \\
 & PC-linear & 0.87$_{\pm0.05}$$\times$ & 0.73$_{\pm0.06}$$\times$ & 0.58$_{\pm0.10}$$\times$ & 1.13$_{\pm0.08}$$\times$ & 1.21$_{\pm0.08}$$\times$ & 1.29$_{\pm0.24}$$\times$ & \textbf{0.25$_{\pm0.03}$$\times$} & 0.11$_{\pm0.01}$$\times$ & 0.07$_{\pm0.01}$$\times$ \\
 & PC-quadratic & 0.86$_{\pm0.05}$$\times$ & \textbf{0.70$_{\pm0.06}$$\times$} & \textbf{0.53$_{\pm0.08}$$\times$} & 1.22$_{\pm0.09}$$\times$ & 1.33$_{\pm0.10}$$\times$ & 1.44$_{\pm0.24}$$\times$ & 0.26$_{\pm0.03}$$\times$ & \textbf{0.09$_{\pm0.01}$$\times$} & \textbf{0.05$_{\pm0.01}$$\times$} \\
 & PC-cubic & 0.86$_{\pm0.05}$$\times$ & 0.71$_{\pm0.06}$$\times$ & \textbf{0.53$_{\pm0.08}$$\times$} & 1.23$_{\pm0.08}$$\times$ & 1.39$_{\pm0.10}$$\times$ & 1.67$_{\pm0.26}$$\times$ & 0.27$_{\pm0.03}$$\times$ & \textbf{0.09$_{\pm0.01}$$\times$} & \textbf{0.05$_{\pm0.01}$$\times$} \\
\midrule
\multirow{10}{*}{\shortstack[l]{Brumo~2025}} & Pass@1 / Standard MV plateau & .713 / .833 &  &  & .816 / .932 &  &  & .421 / .698 &  &  \\
 & Standard MV budget ($B_{\mathrm{MV}}$) & 274k & 934k & 3.3M & 342k & 651k & 2.5M & 92k & 265k & 1.8M \\
\cmidrule(l){2-11}
 & Self-certainty & 0.73$_{\pm0.18}$$\times$ & 0.76$_{\pm0.10}$$\times$ & 0.59$_{\pm0.09}$$\times$ & 0.45$_{\pm0.05}$$\times$ & 0.41$_{\pm0.05}$$\times$ & 0.14$_{\pm0.07}$$\times$ & 1.35$_{\pm0.11}$$\times$ & 1.47$_{\pm0.22}$$\times$ & N/A \\
 & DeepConf tail & 0.50$_{\pm0.12}$$\times$ & 0.47$_{\pm0.06}$$\times$ & 0.35$_{\pm0.06}$$\times$ & 0.40$_{\pm0.05}$$\times$ & 0.39$_{\pm0.07}$$\times$ & 0.14$_{\pm0.07}$$\times$ & 1.35$_{\pm0.11}$$\times$ & 1.56$_{\pm0.23}$$\times$ & N/A \\
 & P(True) & 1.46$_{\pm0.36}$$\times$ & 1.07$_{\pm0.11}$$\times$ & 0.60$_{\pm0.11}$$\times$ & 1.40$_{\pm0.14}$$\times$ & 2.05$_{\pm0.31}$$\times$ & 3.39$_{\pm1.62}$$\times$ & 2.33$_{\pm0.22}$$\times$ & N/A & N/A \\
 & AC sweep & \textbf{0.32$_{\pm0.09}$$\times$} & \textbf{0.26$_{\pm0.03}$$\times$} & 0.20$_{\pm0.04}$$\times$ & \textbf{0.27$_{\pm0.03}$$\times$} & \textbf{0.18$_{\pm0.03}$$\times$} & 0.12$_{\pm0.07}$$\times$ & 1.42$_{\pm0.14}$$\times$ & 1.85$_{\pm0.32}$$\times$ & 1.20$_{\pm0.75}$$\times$ \\
 & ESC sweep & 3.19$_{\pm1.47}$$\times$ & 6.11$_{\pm0.65}$$\times$ & N/A & 0.30$_{\pm0.03}$$\times$ & 0.34$_{\pm0.06}$$\times$ & 0.25$_{\pm0.21}$$\times$ & 2.39$_{\pm0.38}$$\times$ & 7.42$_{\pm2.14}$$\times$ & 7.95$_{\pm4.31}$$\times$ \\
 & PC-linear & 0.56$_{\pm0.15}$$\times$ & 0.51$_{\pm0.07}$$\times$ & 0.39$_{\pm0.06}$$\times$ & 0.34$_{\pm0.04}$$\times$ & 0.28$_{\pm0.04}$$\times$ & \textbf{0.09$_{\pm0.05}$$\times$} & 0.93$_{\pm0.07}$$\times$ & 0.80$_{\pm0.12}$$\times$ & 0.32$_{\pm0.19}$$\times$ \\
 & PC-quadratic & 0.44$_{\pm0.11}$$\times$ & 0.37$_{\pm0.04}$$\times$ & 0.19$_{\pm0.03}$$\times$ & 0.33$_{\pm0.03}$$\times$ & 0.28$_{\pm0.04}$$\times$ & 0.10$_{\pm0.05}$$\times$ & \textbf{0.86$_{\pm0.06}$$\times$} & \textbf{0.61$_{\pm0.09}$$\times$} & 0.20$_{\pm0.12}$$\times$ \\
 & PC-cubic & 0.44$_{\pm0.11}$$\times$ & 0.32$_{\pm0.04}$$\times$ & \textbf{0.16$_{\pm0.02}$$\times$} & 0.35$_{\pm0.04}$$\times$ & 0.30$_{\pm0.04}$$\times$ & 0.11$_{\pm0.06}$$\times$ & \textbf{0.86$_{\pm0.06}$$\times$} & 0.62$_{\pm0.09}$$\times$ & \textbf{0.18$_{\pm0.11}$$\times$} \\
\bottomrule
\end{tabular}%
}%
\end{table}

PC-cubic is more cost-efficient than Standard MV on $11$ out of $12$ model and benchmark settings at $\alpha{=}99\%$. The strongest savings are $0.05\times$ on both FrontierScience-Olympiad with GPT-OSS-120B and AIME~2025 with Ministral3-14B, and $0.08\times$ on FrontierScience-Olympiad with Ministral3-14B. All three correspond to cells with large $D$ (Table~\ref{tab:signal}).

The savings track the discrimination gap $D$ (Table~\ref{tab:signal}): pairs with the largest $D$ yield the largest reductions. PC-cubic offers little advantage over Standard MV on two cells (it underperforms on Nemotron3-30B AIME~2025 and only marginally beats Standard MV on Nemotron3-30B HMMT Feb~2026), both of which have a small Pass@1-to-plateau gap of at most $.11$ ($.902 \to .967$ and $.708 \to .810$), leaving little room above Pass@1 for any reweighting irrespective of $D$. The practical penalty in this regime, where Pass@1 is close to Standard MV plateau, is correspondingly small.

\paragraph{Comparison with adaptive-stopping baselines.}
PC-cubic is competitive with or better than AC sweep on most cells and outperforms ESC sweep at $\alpha{=}99\%$ on every (model, benchmark) cell except Nemotron3-30B on AIME~2025, despite being a non-adaptive reweighting of the same initial pool that AC and ESC consume sequentially. The advantage is most pronounced on large-$D$ cells (FrontierScience-Olympiad on every model, and most Ministral3-14B benchmarks), e.g.\ PC-cubic at $0.05\times$ vs.\ AC sweep at $0.29\times$ at $\alpha{=}99\%$ on GPT-OSS-120B FrontierScience-Olympiad.

AC substantially outperforms PC only on the two cells noted above where Pass@1 is close to Standard MV plateau (Nemotron3-30B on AIME~2025 and HMMT Feb~2026), since the aggregated vote on wrong answers is small and AC's early stop alone bounds the cost. However, on $2$ out of $12$ (model, benchmark) cells at $\alpha{=}99\%$ (marked ``N/A'' in Table~\ref{tab:token_savings}) AC's accuracy never reaches Standard MV plateau, because its early-stop rule terminates generation before the running accuracy reaches the $\alpha{=}99\%$ target. PC-cubic reaches the target on all $12$.

ESC stops at the first fixed-size window of samples that all share the same answer, which is too strict on benchmarks where wrong answers are diverse, so ESC either stops well after AC or fails to stop within the budget. PC and adaptive stopping act on orthogonal axes: PC reweights votes while AC and ESC decide when to stop sampling. A hybrid that votes by PC weights and stops by the AC rule would combine AC's cost bound on easy cells with PC's accuracy on difficult ones (left to future work).

\paragraph{Remark on the cost accounting.}
Treating log-probability access as free is implementation-dependent but holds for our vLLM setup. This favors baselines that read log-probabilities of the initial trace without generating extra tokens (Self-certainty, DeepConf, Response probability), while prefix consistency spends budget on the regeneration tokens. Even so, the PC-WMV has significant advantage. Imposing any cost for log-probability retrieval only widens the margin.

Additional results on token-efficiency ratios are reported per model in Appendix~\ref{app:all_baselines}.

\subsection{How Problem Difficulty Affects the Discrimination Gap \texorpdfstring{$D$}{D}}
\label{sec:what_drives}

The discrimination gap $D$ determines where prefix consistency improves WMV (Theorem~\ref{thm:improvement}), so we now study how $D$ varies with problem difficulty, indexed by Pass@1.

Figure~\ref{fig:pass1_vs_rates} plots per-problem $r_C$ and $r_W$ as a function of Pass@1 for three of the five models (the remaining two are in Appendix~\ref{app:pass1_vs_rates_glm}), stratified by category. Across all five models, $r_C$ rises with problem easiness, $r_W$ depends on the model and category but only weakly on Pass@1, and $D = r_C - r_W$ inherits both. We discuss $r_C$ and $r_W$ in turn below.

First, $r_C$ (solid lines) increases with Pass@1 for both categories. Pass@1 here indexes problem easiness within a fixed model, and on easier problems, the correct answer is more reliably reproduced under regeneration. Logistic generalized linear model (GLM) slopes $\beta(r_C)$ on $\text{logit}(r) = \beta_0 + \beta \cdot \text{Pass@1}$ range from $+2.7$ to $+5.1$ across the six (model, category) pairs shown in Figure~\ref{fig:pass1_vs_rates}, all significantly positive (cluster-bootstrap $p < 0.005$, see Appendix~\ref{app:pass1_vs_rates_glm}).

Second, $r_W$ (dashed lines) depends more weakly on Pass@1 than $r_C$: $|\beta(r_W)| \leq 1.14$ across the six pairs, and among four out of six curves, we cannot statistically reject $\beta = 0$ at a $2\sigma$ confidence level. For the remaining two non-zero slopes, $\beta$ is $+1.14$ (GPT-OSS-120B Math) and $-0.75$ (Ministral3-14B Math), with opposite signs, both smaller in magnitude than the smallest $r_C$ slope.
The level of $r_W$ varies by category, sitting at ${\sim}$15--40\% on Science and ${\sim}$25--60\% on Math. Math errors may reflect internally consistent miscalculations and science errors may reflect more diffuse knowledge gaps, but this is a hypothesis: our results establish $r_C > r_W$ as a behavioral regularity, and a mechanistic verification (e.g., calculation-heavy vs.\ concept-heavy subsets) is left to future work.

The finding that $D$ widens on easier cells may appear to conflict with the ``savings track $D$'' claim of Section~\ref{sec:token_efficiency}, since easier cells also have a smaller gap between Pass@1 and Standard MV plateau. The two reconcile by noting that PC-WMV's advantage over Standard MV depends on both $D$ and the gap above Pass@1: a large $D$ pays off only when there is room to reweight votes, which is why the two cells where PC-cubic offers little advantage over Standard MV are precisely the cells with Pass@1 concentrated within ${\sim}.10$ of Standard MV plateau (Nemotron3-30B on AIME~2025 and HMMT Feb~2026).

\begin{figure}[t]
\centering
\resizebox{\textwidth}{!}{%
  \includegraphics{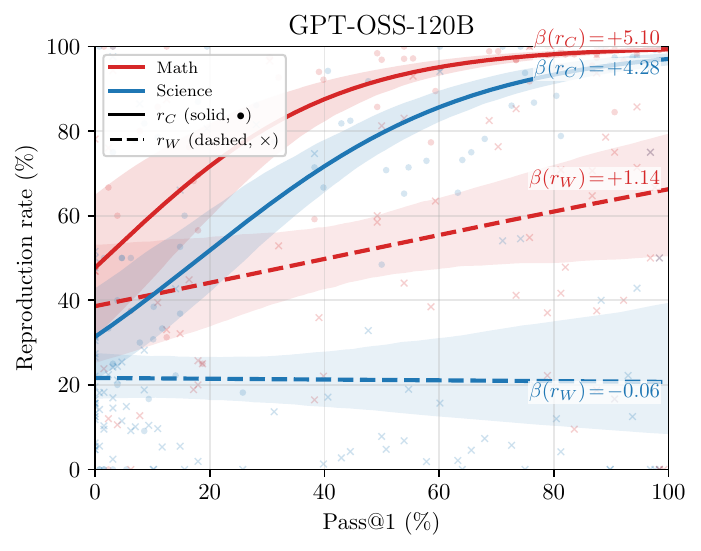}%
  \includegraphics{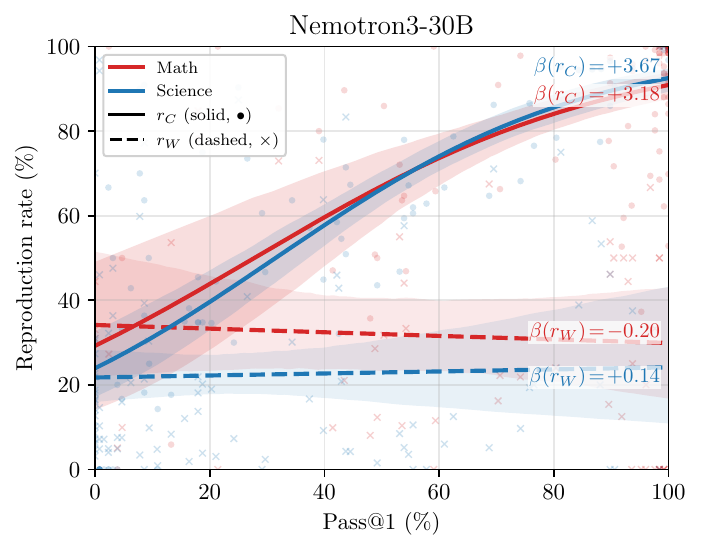}%
  \includegraphics{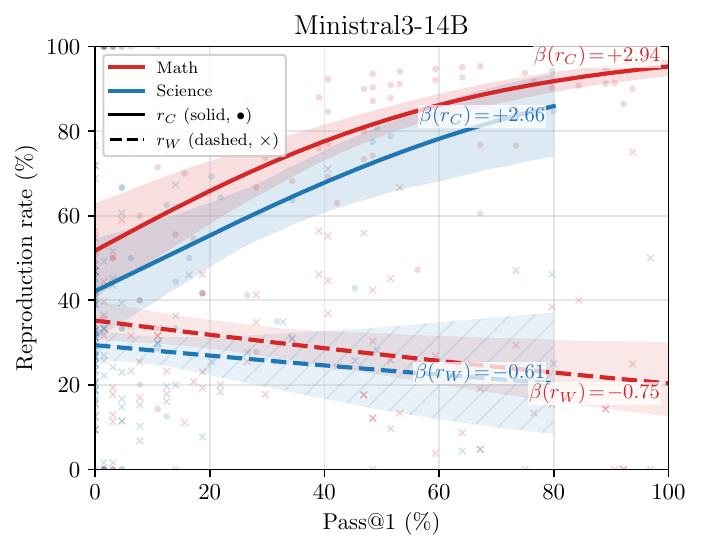}%
}
\vspace{-12pt}
\caption{\textbf{Per-problem reproduction rates vs.\ Pass@1.} One panel per model. Solid ($\bullet$): $r_C$. Dashed ($\times$): $r_W$. Both are per-category logistic-regression fits with shaded $2\sigma$ cluster-bootstrap CIs over problems and per-problem scatter overlays; curves are labeled with the GLM slope $\beta$. $r_C > r_W$ holds across the full Pass@1 range, including below 50\%. Science: FrontierScience-Olympiad. Math: HMMT Feb~2026 $\cup$ AIME~2025 $\cup$ Brumo~2025.}
\label{fig:pass1_vs_rates}
\end{figure}

\section{Conclusion}
\label{sec:conclusion}

We introduced prefix consistency, a reliability signal for weighted majority voting that truncates each CoT and checks whether answers reproduce under regeneration.
Across benchmarks, prefix consistency is a stronger correctness predictor than existing baselines, and PC-WMV improves upon existing weighted majority voting methods under cost-equivalent comparison, especially on more difficult benchmarks.
Our analysis highlights the discrimination gap $D = r_C - r_W$ as the key quantity governing when the method helps: PC-WMV is most effective when $D > 0$ and Pass@1 leaves a meaningful gap below Standard MV plateau.
Regeneration stability is thus a practically useful test-time signal for aggregating votes, not merely a descriptive property of Chain-of-Thought.

\bibliographystyle{plainnat}
\bibliography{references}
\clearpage

\appendix
\startcontents[appendix]

\renewcommand{\contentsname}{Appendix Contents}
\setcounter{tocdepth}{2}
\begingroup
\small
\linespread{0.9}\selectfont
\printcontents[appendix]{}{1}{}
\endgroup
\clearpage

\section{Related Work}
\label{sec:related}

\subsection{Test-Time Scaling}
\label{sec:related_scaling}

\paragraph{Sequential and parallel scaling.}
Test-time compute scaling comes in two paradigms. Sequential scaling extends a single reasoning chain, for example by appending continuation tokens to lengthen CoT~\citep{muennighoff-etal-2025-s1}. Parallel scaling generates multiple independent solutions and aggregates them: self-consistency is the canonical instance, while tree search~\citep{yao2023treethoughtsdeliberateproblem} branches over candidate steps and process-reward verifiers rerank parallel samples~\citep{lightman2024lets}. \citet{parashar2025sys2bench} benchmarked a range of inference-time techniques across reasoning and planning tasks and reported that no single method is consistently best.
Our method sits between the two: it uses parallel samples but introduces a structured perturbation (truncation + regeneration) that probes the robustness of each sample's reasoning.
Beyond aggregation, the same perturbation could serve as a quality estimate for partial reasoning states inside richer search procedures such as Graph-of-Thoughts~\citep{besta2024graphofthoughts}, which extends Tree-of-Thoughts to a directed graph with refinement and aggregation operations.

\paragraph{Majority voting.}
\citet{wang2023selfconsistency} introduced self-consistency for CoT reasoning: sample multiple reasoning traces and take the majority vote over extracted answers.
A line of work targets the cost of self-consistency by stopping early. \citet{aggarwal-etal-2023-lets} introduced Adaptive Consistency, which evaluates a Beta-binomial posterior over the running top-1 and top-2 answer counts and stops once the posterior margin of the top answer crosses a threshold. \citet{li2024escape} introduced Early-Stopping Self-Consistency, which inspects fixed-size windows of samples for unanimity and stops at the first unanimous window.
\citet{sharma2025sequentialedge} showed that sequential, entropy-aware voting can outperform parallel self-consistency at matched compute, highlighting the importance of cost-equivalent comparisons.
\citet{komiyama2026boinf} analyzed the asymptotic behavior of majority voting (best-of-$\infty$) and proposed a Bayesian nonparametric stopping rule.
Our work is complementary: we assess the quality of each vote through regeneration weighting, which is compatible with the adaptive stopping rules above.

\paragraph{Confidence-based weighting.}
\citet{kang2025selfcertainty} proposed \emph{self-certainty}, a trace-level quality score defined as the mean KL divergence of the per-token output distribution from uniform, and used it to rerank best-of-$N$ candidates.
\citet{fu2025deepthinkconfidence} then extended self-certainty.
In addition to the global trace mean (equivalent to self-certainty), they introduced a sliding-window group confidence and proposed the use of their bottom 10\% (bottom-10\% group confidence) and the mean over the final tokens of a trace (tail confidence) as alternative trace-level scores for confidence-weighted majority voting.
\citet{taubenfeld2025confidenceimprovesselfconsistency} demonstrated that combining self-assessed confidence (most effectively via P(True), the model's own probability that its answer is correct) with self-consistency improves accuracy, though this still relies on the model's introspective calibration.
\citet{xiong2024llmsexpressuncertainty} studied whether LLMs can express uncertainty and found that verbalized confidence is often poorly calibrated.
\citet{scalena2025eagerentropyawaregenerationadaptive} used per-token entropy to allocate compute adaptively during generation, a token-level compute-allocation complement to our sample-level reliability estimation.
All of these signals rely on the model's own log-probabilities or verbalized self-assessment. We show that such confidence signals are unreliable on difficult problems and propose an alternative based on prefix self-reproduction that requires neither.

\paragraph{Prefix continuation as a per-sample signal.}
Several recent methods regenerated from a truncated reasoning prefix or extracted a structural signal from a CoT and used it to select or weight samples.
\citet{hammoud2025beyondlastanswer} proposed SubthoughtReasoner, which segments a single greedy trace into sequential subthoughts at linguistic cues and takes the mode over per-subthought regenerations as the final answer (single-trace refinement, +13pp on AIME~2024 vs.\ the same trace's last answer).
\citet{faria2025sampledontsearchrethinking} also regenerated from intermediate states, but for test-time alignment via Metropolis-Hastings (building on quality-aware machine-translation sampling~\citep{faria2024quest}) rather than answer aggregation.
\citet{zhu2024pathconsistency} iteratively promoted a high-confidence prefix among $N$ initial partial answers via Beta-gated agreement on the extracted-answer distribution and resampled subsequent branches conditioned on it (up to 40.5\% latency reduction at matched accuracy).
\citet{jindal2026the} clustered the first 256 tokens of $N$ initial samples and expanded only the dominant cluster (up to 60\% token reduction at matched accuracy).
\citet{feng2025whatcharacterizes} scored each of $N{=}64$ traces by its Failed-Step Fraction (the proportion of abandoned reasoning branches) and selected the lowest-FSF trace, yielding 5\% to 13\% improvements over random selection on AIME~2025.
The latter three collapsed $N$ samples onto a subset (a single best prefix, the dominant cluster, or one selected trace) and therefore inherited the failure mode of Standard MV when the subset misses the correct answer: \citet{jindal2026the} reported a 10pp drop on AIME~2025 (76.7\% to 66.7\%) where the largest prefix cluster need not contain the correct answer.
PC-WMV moves in the converse direction: it keeps all $N$ samples and \emph{reweights} the vote by each sample's prefix self-reproduction, so a minority sample whose prefix continuation reproduces its own answer is upweighted rather than discarded.
The two directions are complementary, since prefix-cluster pruning could feed PC-WMV reweighting on the surviving traces.
Under cost-equivalent comparison, SubthoughtReasoner does not consistently improve over Standard MV in our experiments, filling the comparison left open by \citet{hammoud2025beyondlastanswer}.

\subsection{Chain-of-Thought}
\label{sec:related_dynamics}

\paragraph{Internal answer determination.}
Several lines of evidence suggest that LLMs determine their answer internally before the visible CoT concludes.
\citet{lanham2023measuringfaithfulness} found that truncating a CoT mid-reasoning and forcing an early answer often does not change the prediction, especially for larger models.
\citet{boppana2026reasoningtheater} confirmed this with attention probes: on easy problems, the internal answer is determined well before the visible reasoning ends.
\citet{zhang2025reasoningmodelsknow} showed that linear probes on hidden states can classify the correctness of the future final answer at intermediate CoT stages.
These observations are consistent with the $r_C > r_W$ asymmetry we measure.

\paragraph{Error propagation.}
In CoT reasoning, errors compound across steps: once an error is introduced, subsequent reasoning builds on it~\citep{gan2025rethinkingexternalslowthinking}.
\citet{kim2025metastabledynamicschainofthought} modeled CoT as a metastable Markov process on a reasoning graph, where dense intra-cluster (easy) and sparse inter-cluster (difficult) steps induce timescale separation. Our $r_C$ and $r_W$ are analogous in spirit to within-cluster persistence rates, although we measure them at the answer level under regeneration rather than at the reasoning-step level along a single trace.

\paragraph{Reasoning robustness.}
Several works have studied the robustness of LLM reasoning.
\citet{yu2025benchmarkingreasoningrobustness} found that reasoning models can be brittle to minor perturbations in the input.
\citet{vonrecum2026robust} systematically evaluated seven intervention types on open-weight reasoning LLMs and found that robustness degrades more when interventions occur early in the CoT.
\citet{jiang2025robustanswers} showed that correct answers persist even when reasoning logic is perturbed, suggesting a decoupling between answer stability and reasoning faithfulness.
Our work uses this asymmetry between correct and wrong reasoning traces as a practical signal for answer aggregation.

\paragraph{Overthinking and length scaling.}
Reasoning models tend to allocate compute disproportionately to problem difficulty, which limits the improvements from extending a single CoT.
\citet{chen2025overthinking} reported that o1-like models generate up to 1953\% more tokens than non-thinking models on trivial arithmetic, and reach the correct answer in the first generated solution in over 92\% of cases while later solutions still account for roughly 40\% of tokens.
\citet{pu2025thoughtterminator} introduced DUMB500 to quantify overthinking on easy problems and proposed ThoughtTerminator, a training-free decoding-time termination scheme that reduces overthinking tokens by 76\% to 98\% with minimal accuracy loss.
\citet{aggarwal2026optimalthinkingbench} formalized the trade-off as a joint over- and underthinking benchmark and found that no current model balances the two: even o3 reaches only 71.1\% on their unified score.
These results motivate aggregating multiple short samples rather than extending a single long trace, which is the setting our method targets.

\section{Limitations and Future Work}\label{sec:limitation}
The effectiveness of prefix consistency depends on the discrimination gap $D = r_C - r_W$. When incorrect answers are themselves stable under regeneration, $r_W$ remains high, and the advantage of PC-WMV over Standard MV becomes small (e.g., on very difficult problems). Likewise, when the correct answer is reproduced only weakly from the chosen prefix, $r_C$ remains low, and the signal becomes less informative. In this sense, the method is most effective in regimes where regeneration behavior meaningfully separates correct from incorrect traces.

The practical advantage of PC-WMV also depends on the available room above Pass@1. When Pass@1 is already close to Standard MV plateau, the aggregated vote on wrong answers is small, so even a strong reliability signal yields only limited accuracy improvements. This explains the small-gap settings in our experiments where PC-WMV does not outperform Standard MV (or simple verbalized-confidence baselines) by a large margin (Appendix~\ref{app:all_baselines}).

Although prefix consistency does not require token log-probabilities, it is not without cost: it incurs additional inference cost through prefix truncation and regeneration. This trade-off may be less favorable in deployments where log-probability access is readily available and inexpensive. Relatedly, the method introduces hyperparameters such as the truncation fraction $\tau$, the number of regenerations $K$, and the weighting function $w$, and in this work, we use fixed defaults rather than selecting them adaptively for each model or task.

A further practical limitation is that our method assumes access to an explicit Chain-of-Thought trace that can be truncated and continued from a prefix. This assumption does not hold for many frontier closed models and commercial APIs, including systems where the full internal reasoning trace is hidden or only summarized. In such settings, prefix consistency cannot be applied directly, even if the model can generate correct final answers. Extending the method to settings without visible CoT, for example by using intermediate summaries, structured scratchpads, or other externally exposed reasoning states, is an important direction for future work.

More broadly, our results establish prefix consistency as a useful behavioral signal, but not as a mechanistic explanation of why correct traces are more reproducible than incorrect ones. Understanding the origin of the observed asymmetry $r_C > r_W$ remains future work. Finally, our evaluation is limited to reasoning-oriented models and math/science-style benchmarks, so the extent to which the same phenomenon holds for other domains, model families, or API settings remains to be established.

As shown in Section~\ref{sec:what_drives}, $r_C$ rises with problem easiness, $r_W$ depends on the model and category but only weakly on Pass@1, and $D = r_C - r_W$ inherits both. This pattern is consistent with reinforcement learning with verifiable rewards encouraging reasoning that survives verification \citep{wen2025rlvr}. State-of-the-art LLMs are designed to increase the mass of correct reasoning paths via widening the discrimination gap $D$. We do not test this connection directly, and whether the same pattern holds for non-RLVR-trained reasoners is left to future work.

\section{Asymptotic Analysis}
\label{app:theory}

We analyze the asymptotic convergence of PC-WMV, prove Theorem~\ref{thm:improvement} along the way, and verify the required assumptions empirically.
\subsection{Notation and Useful Tools}
\label{app:notation}
Fix $\tau \in (0, 1)$ throughout, and omit it from subscripts: e.g., write $\tilde a_i$ for $\tilde a_i^{(\tau)}$ and $c_i(a)$ for $c_i^{(\tau)}(a)$. The score takes values $c_i(a) \in \{0, \tfrac12, 1\}$.

We define the transition probability as follows:
\begin{equation}
\label{eq:transition_kernel}
    T(b \rightarrow a) \;:=\; \Pr\!\bigl[\tilde a_i = a \,\big|\, a_i = b\bigr].
\end{equation}
The reproduction rates of Section~\ref{sec:prefix_consistency} can be written in terms of $T$:
\begin{equation}
\label{eq:rates_from_kernel}
    r_C \;=\; T(a^\star \rightarrow a^\star),
    \qquad
    r_W \;=\; \mathbb E\!\left[T(a_i \rightarrow a_i) \,\big|\, a_i \neq a^\star\right].
\end{equation}
Let $\pi$ and $\pi^{\rightarrow}$ be the marginal distributions of $a_i$ and $\tilde a_i$, respectively. Here $\pi(a^\star)$ is the same per-problem Pass@1 introduced in Section~\ref{sec:prelim} and used in Theorem~\ref{thm:improvement}. We keep the $\pi$ notation in this appendix so the population-level expressions in Eq.~\eqref{eq:pi_rho} below align with the proof.
\begin{equation}
\label{eq:pi_rho}
    \pi(a) \;:=\; \Pr[a_i = a],
    \qquad
    \pi^{\rightarrow}(a) \;:=\; \Pr[\tilde a_i = a] \;=\; \sum_{b \in \mathcal A} \pi(b)\, T(b \rightarrow a).
\end{equation}
All quantities above ($T$, $r_C$, $r_W$, $\pi$, $\pi^{\rightarrow}$) are defined per problem. The results in this section are statements about a fixed problem $q$, and the i.i.d.\ assumption across $i$ refers to independent samples within $q$.

For completeness, we restate the definitions of the MV and PC-WMV votes.
Let $w \colon [0,1] \to \mathbb{R}_{\ge 0}$ satisfy $w(0)=0$.
For each $a \in \mathcal A$, define the aggregated PC-WMV score over $N$ groups by
\begin{equation*}
V_N^{(w)}(a) := \sum_{i=1}^{N} w\bigl(c_i(a)\bigr),
\end{equation*}
where each term is the per-group PC-WMV vote given in Eq.~\eqref{eq:pc_weight}.
We then define the PC-WMV estimator and MV voting methods by
\begin{equation}
\label{eq:estimators}
\hat a^{\mathrm{PC}}_N \;:=\; \argmax_{a \in \mathcal A} V_N^{(w)}(a),
\qquad
\hat a^{\mathrm{MV}}_N \;:=\; \argmax_{a \in \mathcal A}\, \frac{1}{N}\sum_{i=1}^{N}\mathbf{1}\{a_i=a\}.
\end{equation}

We next identify the population objective associated with the PC-WMV estimator.

\begin{proposition}[Population objective]
\label{prop:population_objective}
Assume that the pairs $(a_i,\tilde a_i)$ are i.i.d.\ across $i$, and that, conditional on $a_i$, the variable $\tilde a_i$ is drawn from $T(a_i,\cdot)$.
Then, for every $a \in \mathcal A$,
\begin{equation}
\label{eq:population_objective}
\frac{1}{N}V_N^{(w)}(a)
\;\xrightarrow{\mathrm{a.s.}}\;
\Phi_w(a)
\;:=\;
\sum_{b \in \mathcal A}
\pi(b)\,
\mathbb E_{Z \sim \mathrm{Bern}(T(b \rightarrow a))}\!\left[
w\!\left(\frac{\mathbf 1\{a=b\}+Z}{2}\right)
\right].
\end{equation}
Moreover, if $\Phi_w$ has a unique maximizer, then $\hat a_N^{\mathrm{PC}}$ converges to that maximizer almost surely.
\end{proposition}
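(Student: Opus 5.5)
The plan is to apply the strong law of large numbers to the i.i.d.\ summands $X_i(a) := w(c_i(a))$, compute their common mean by conditioning on $a_i$, and then pass from pointwise convergence of the normalized scores to convergence of the argmax.

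\textbf{Step 1 (boundedness).} Fix $a \in \mathcal A$. The random variables $X_i(a)$ are i.i.d.\ across $i$ because $X_i(a)$ is a measurable function of the pair $(a_i,\tilde a_i)$. Under $K=1$ they take values in the finite set $\{w(0), w(1/2), w(1)\} = \{0, w(1/2), w(1)\}$, so they are bounded. In particular they are integrable, and the strong law gives $\frac1N V_N^{(w)}(a) \to \mathbb E[X_1(a)]$ almost surely.

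\textbf{Step 2 (identifying the mean).} We compute $\mathbb E[X_1(a)]$ by the tower property, conditioning on $a_1 = b$. By the definition in Eq.~\eqref{eq:ci_def}, with $A_1 = \{a_1,\tilde a_1\}$,
\begin{equation*}
c_1(a) = \frac{\mathbf 1\{a_1 = a\} + \mathbf 1\{\tilde a_1 = a\}}{2}.
\end{equation*}
On the event $a_1 = b$, the first indicator is the constant $\mathbf 1\{a=b\}$. The second indicator, $Z := \mathbf 1\{\tilde a_1 = a\}$, is conditionally Bernoulli with parameter $T(b \rightarrow a)$, since $\tilde a_1 \sim T(b,\cdot)$. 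Hence
\begin{equation*}
\mathbb E[X_1(a) \mid a_1 = b] = \mathbb E_{Z\sim \mathrm{Bern}(T(b\rightarrow a))}\!\left[w\!\left(\tfrac{\mathbf 1\{a=b\}+Z}{2}\right)\right].
\end{equation*}
Averaging against $\pi(b)$ gives exactly $\Phi_w(a)$. One may worry about whether the sum over $b$ converges if $\mathcal A$ is countably infinite. The integrand is bounded by $\max_c w(c)$ and $\sum_b \pi(b) = 1$, so the sum converges absolutely, and the tower property holds without further conditions.

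\textbf{Step 3 (argmax consistency).} This step is the main obstacle when $\mathcal A$ is infinite. If $\mathcal A$ is finite, the plan is as follows.
\begin{itemize}
\item Intersect the finitely many almost-sure events from Step 1, so that on a single full-probability event all normalized scores converge simultaneously.
\item Let $a^\dagger$ be the unique maximizer of $\Phi_w$, and set $\delta := \Phi_w(a^\dagger) - \max_{a\neq a^\dagger}\Phi_w(a) > 0$.
\item Eventually every $\frac1N V_N^{(w)}(a)$ lies within $\delta/3$ of $\Phi_w(a)$. For such $N$, $\frac1N V_N^{(w)}(a^\dagger)$ strictly exceeds all other normalized scores, so $\hat a_N^{\mathrm{PC}} = a^\dagger$ for all large $N$.
\end{itemize}
For the theorem the paper ultimately needs (binary $\mathcal A$ in Theorem~\ref{thm:improvement}), this finite case suffices, and I would state the result under that assumption.

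For countable $\mathcal A$ I would need uniform convergence. The first idea is a Glivenko–Cantelli-type argument: $V_N^{(w)}(a)$ is nonzero only for answers $a$ that appear in some group. The mass of rare answers is controlled because $\sum_a \frac1N V_N^{(w)}(a) \le \max w \cdot \frac{2}{N}\cdot N$ and, similarly, $\sum_a \Phi_w(a) \le 2\max w$. Truncating to a finite set of answers that carries all but $\epsilon$ of $\pi + \pi^{\rightarrow}$ then gives a uniform bound on the tail scores. Those tail scores are all eventually below $\Phi_w(a^\dagger) - \delta/3$ once $\epsilon$ is chosen small relative to $\delta$, which reduces the problem to the finite case.
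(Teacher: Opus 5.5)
Your proposal is correct and follows the paper's proof. The paper likewise applies the strong law to the i.i.d.\ summands $w(c_i(a))$ for each fixed $a$, identifies the mean by conditioning on $a_i=b$ with $\mathbf 1\{\tilde a_i=a\}\sim\mathrm{Bern}(T(b\rightarrow a))$, and passes to the argmax using finiteness of $\mathcal A$; your $\delta/3$ argument just spells out its one-line ``eventually'' step, and your countable-$\mathcal A$ truncation sketch is an extension beyond what the paper claims.
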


\begin{proof}
For each fixed $a \in \mathcal A$, the random variables $\{w(c_i(a))\}_{i=1}^N$ are i.i.d., so the strong law of large numbers yields
\begin{equation*}
\frac{1}{N}V_N^{(w)}(a)
\;\xrightarrow{\mathrm{a.s.}}\;
\mathbb E[w(c_i(a))].
\end{equation*}
Conditional on $a_i=b$, we have $\mathbf 1\{\tilde a_i=a\} \sim \mathrm{Bern}(T(b \rightarrow a))$, and
\begin{equation*}
2c_i(a)=\mathbf 1\{a=b\}+\mathbf 1\{\tilde a_i=a\}.
\end{equation*}
Hence,
\begin{equation*}
\mathbb E[w(c_i(a))\mid a_i=b]
=
\mathbb E_{Z \sim \mathrm{Bern}(T(b \rightarrow a))}\!\left[
w\!\left(\frac{\mathbf 1\{a=b\}+Z}{2}\right)
\right].
\end{equation*}
Taking expectation with respect to $b \sim \pi$ gives Eq.~\eqref{eq:population_objective}. For the final claim, $\frac{1}{N} V_N^{(w)}(a) \to \Phi_w(a)$ a.s.\ for each $a$ in the finite set $\mathcal A$, so when $\Phi_w$ has a unique maximizer $a^\star$, eventually $\hat a_N^{\mathrm{PC}} = \argmax_a V_N^{(w)}(a) = a^\star$ almost surely.
\end{proof}

We next derive an explicit decomposition of $\Phi_w$ that exposes its dependence on the marginal $\pi(a)$ and the self-transition $T(a \rightarrow a)$.

\begin{proposition}[Exact decomposition of $\Phi_w$]
\label{prop:exact_decomposition}
For every $a \in \mathcal A$,
\begin{equation}
\label{eq:exact_decomposition}
    \Phi_w(a) \;=\; w(\tfrac12)\,\bigl[\pi(a) + \pi^{\rightarrow}(a)\bigr] \;+\; \lambda_w\, \pi(a)\, T(a \rightarrow a),
    \qquad \lambda_w \;:=\; w(1) - 2\, w(\tfrac12).
\end{equation}
\end{proposition}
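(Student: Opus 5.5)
The plan is to start from the population objective in Proposition~\ref{prop:population_objective} and evaluate the inner Bernoulli expectation explicitly. We split the sum over $b \in \mathcal A$ into the diagonal term $b = a$ and the off-diagonal terms $b \neq a$. The only facts needed are $w(0) = 0$ and the fact that, with $K=1$, the argument of $w$ takes values in $\{0, \tfrac12, 1\}$.

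For the off-diagonal terms, $\mathbf 1\{a=b\}=0$, so the argument of $w$ is $Z/2$. The expectation is therefore $T(b\rightarrow a)\,w(\tfrac12) + (1-T(b\rightarrow a))\,w(0) = w(\tfrac12)\,T(b \rightarrow a)$.

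For the diagonal term, the argument is $(1+Z)/2$, so the expectation is $T(a\rightarrow a)\,w(1) + (1 - T(a\rightarrow a))\,w(\tfrac12)$. Summing both pieces gives
\begin{equation*}
\Phi_w(a) = \pi(a)\bigl[w(\tfrac12) + T(a\rightarrow a)\,(w(1)-w(\tfrac12))\bigr] + w(\tfrac12)\sum_{b\neq a}\pi(b)\,T(b\rightarrow a).
\end{equation*}

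Next we rewrite the off-diagonal sum using the marginal from Eq.~\eqref{eq:pi_rho}. It equals $\pi^{\rightarrow}(a) - \pi(a)\,T(a\rightarrow a)$. Substituting this and collecting the coefficients of $\pi(a)\,T(a\rightarrow a)$ gives $w(1) - w(\tfrac12) - w(\tfrac12) = \lambda_w$. The remaining terms are $w(\tfrac12)\,[\pi(a) + \pi^{\rightarrow}(a)]$, which is exactly Eq.~\eqref{eq:exact_decomposition}.

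There is no real obstacle here. The only point needing care is the bookkeeping when the diagonal contribution is reabsorbed into $\pi^{\rightarrow}(a)$, which is where the factor $2$ in $\lambda_w$ comes from. One should also note that the decomposition holds for arbitrary (possibly non-binary) $\mathcal A$, since we never used $|\mathcal A| = 2$.
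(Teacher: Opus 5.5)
Your proof is correct and is essentially the paper's argument. Both evaluate $\mathbb{E}[w(c_i(a))]$ directly, using $w(0)=0$ and the identity $\Pr(a_i\neq a,\tilde a_i=a)=\pi^{\rightarrow}(a)-\pi(a)\,T(a\rightarrow a)$; the only difference is bookkeeping, since you condition on $a_i=b$ inside the formula for $\Phi_w$, whereas the paper first computes $\Pr(c_i(a)=1)$ and $\Pr(c_i(a)=\tfrac12)$ and then weights them by $w(1)$ and $w(\tfrac12)$.
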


\begin{proof}
Fix $a \in \mathcal A$.
Since $c_i(a)$ is the average of the two indicators
\begin{equation*}
\mathbf{1}\{a_i=a\}
\qquad\text{and}\qquad
\mathbf{1}\{\tilde a_i=a\},
\end{equation*}
it takes only the three values $0$, $\tfrac12$, and $1$.

More precisely, $c_i(a)=1$ if and only if both $a_i=a$ and $\tilde a_i=a$ hold.
Thus,
\begin{equation*}
\Pr\bigl(c_i(a)=1\bigr)=\pi(a)\,T(a \rightarrow a).
\end{equation*}

Next, $c_i(a)=\tfrac12$ if and only if exactly one of the two events
$\{a_i=a\}$ and $\{\tilde a_i=a\}$ occurs.
Therefore,
\begin{equation*}
\Pr\bigl(c_i(a)=\tfrac12\bigr)=\Pr(a_i=a,\tilde a_i\neq a)+\Pr(a_i\neq a,\tilde a_i=a).
\end{equation*}
The first term is
\begin{equation*}
\Pr(a_i=a,\tilde a_i\neq a)=\pi(a)\bigl(1-T(a \rightarrow a)\bigr).
\end{equation*}
For the second term, recall that $\pi^{\rightarrow}(a)=\Pr(\tilde a_i=a)$, so
\begin{equation*}
\Pr(a_i\neq a,\tilde a_i=a)=\Pr(\tilde a_i=a)-\Pr(a_i=a,\tilde a_i=a)=
\pi^{\rightarrow}(a)-\pi(a)T(a \rightarrow a).
\end{equation*}
Combining the two expressions yields
\begin{equation*}
\Pr\bigl(c_i(a)=\tfrac12\bigr)=\pi(a)+\pi^{\rightarrow}(a)-2\pi(a)T(a \rightarrow a).
\end{equation*}

Finally, since $w(0)=0$ and $c_i(a)\in\{0,\tfrac12,1\}$,
\begin{equation*}
\Phi_w(a)=\mathbb{E}[w(c_i(a))]=
w(1)\Pr\bigl(c_i(a)=1\bigr)+
w(\tfrac12)\Pr\bigl(c_i(a)=\tfrac12\bigr).
\end{equation*}
Substituting the above probabilities, we obtain
\begin{equation*}
\Phi_w(a)
=w(1)\pi(a)T(a \rightarrow a)+
w(\tfrac12)\bigl[\pi(a)+\pi^{\rightarrow}(a)-2\pi(a)T(a \rightarrow a)\bigr].
\end{equation*}
Rearranging terms gives
\begin{equation*}
\Phi_w(a)
=w(\tfrac12)\bigl[\pi(a)+\pi^{\rightarrow}(a)\bigr]+\bigl(w(1)-2w(\tfrac12)\bigr)\pi(a)T(a \rightarrow a),
\end{equation*}
which is exactly Eq.~\eqref{eq:exact_decomposition}.
\end{proof}

\subsection{Asymptotic Convergence of PC-WMV}
\label{app:convergence}

Throughout this subsection we assume the i.i.d.\ setup of Proposition~\ref{prop:population_objective}.

By Proposition~\ref{prop:exact_decomposition}, applied to $a^\star$ and to any wrong $a$ and subtracting,
\begin{equation}
\label{eq:margin_decomposition}
\begin{aligned}
\Phi_w(a^\star) - \Phi_w(a) \;=\;& \underbrace{w(\tfrac12)\,\bigl[(\pi(a^\star)+\pi^{\rightarrow}(a^\star)) - (\pi(a)+\pi^{\rightarrow}(a))\bigr]}_{\text{pooled-mass term}} \\
&+ \underbrace{\lambda_w\,\bigl[\pi(a^\star)\,T(a^\star \rightarrow a^\star) - \pi(a)\,T(a \rightarrow a)\bigr]}_{\text{self-reproduction term}}.
\end{aligned}
\end{equation}
By Proposition~\ref{prop:population_objective}, identifying $a^\star$ as the population maximizer requires this difference to be positive for every wrong $a$. The two assumptions below control the two terms separately.

\begin{assumption}[Self-reproduction dominance]
\label{ass:self}
For every $a \neq a^\star$,
\begin{equation}
\label{eq:self}
\pi(a^\star)\,T(a^\star \rightarrow a^\star) > \pi(a)\,T(a \rightarrow a).
\end{equation}
\end{assumption}

\paragraph{Interpretation.}
$\pi(a^\star)\,T(a^\star \rightarrow a^\star)$ is the population mass of groups whose initial answer is correct and is reproduced correctly, and $\pi(a)\,T(a \rightarrow a)$ is the corresponding mass for the wrong answer $a$ that is reproduced as itself. Assumption~\ref{ass:self} requires that correct self-reproduction dominate every individual wrong self-reproduction. Equivalently, it makes the self-reproduction term in the decomposition strictly positive whenever $\lambda_w > 0$.

\begin{assumption}[Pooled-mass dominance]
\label{ass:pool}
For every $a \neq a^\star$,
\begin{equation}
\label{eq:pool}
\pi(a^\star)+\pi^{\rightarrow}(a^\star) > \pi(a)+\pi^{\rightarrow}(a).
\end{equation}
\end{assumption}

\paragraph{Interpretation.}
$\pi(a)+\pi^{\rightarrow}(a)$ is the total population mass on candidate $a$, combining initial occurrences and regeneration arrivals. Assumption~\ref{ass:pool} requires that $a^\star$ have the largest such total. By the same decomposition, it makes the pooled-mass term strictly positive whenever $w(\tfrac12)>0$.

\begin{theorem}[Asymptotic convergence of PC-WMV]
\label{thm:convergence}
Let $w$ satisfy $w(0)=0$, $w(1)>0$, and $\lambda_w := w(1)-2w(\tfrac12) \ge 0$. Then:
\begin{enumerate}[leftmargin=*,label=(\alph*)]
    \item If Assumptions~\ref{ass:self} and \ref{ass:pool} hold, then $\hat a_N^{\mathrm{PC}} \to a^\star$ almost surely.
    \item If $w(\tfrac12)=0$, then Assumption~\ref{ass:pool} is unnecessary: Assumption~\ref{ass:self} alone implies $\hat a_N^{\mathrm{PC}} \to a^\star$ almost surely.
\end{enumerate}
\end{theorem}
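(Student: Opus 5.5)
The plan is to reduce both parts to showing that $a^\star$ is the unique maximizer of the population objective $\Phi_w$. Once that holds, the final claim of Proposition~\ref{prop:population_objective} gives $\hat a_N^{\mathrm{PC}} \to a^\star$ almost surely, because $\mathcal A$ is finite and each $\frac1N V_N^{(w)}(a)$ converges a.s.\ to $\Phi_w(a)$. So it suffices to prove
\[
\Phi_w(a^\star)-\Phi_w(a)>0 \quad\text{for every } a\neq a^\star .
\]
For this I will use the margin decomposition in Eq.~\eqref{eq:margin_decomposition}, which follows from Proposition~\ref{prop:exact_decomposition}. It splits the margin into a pooled-mass term with coefficient $w(\tfrac12)\ge 0$ and a self-reproduction term with coefficient $\lambda_w\ge 0$.

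For part (a), fix $a\neq a^\star$. By Assumption~\ref{ass:pool}, the bracket in the pooled-mass term is strictly positive. By Assumption~\ref{ass:self}, the bracket in the self-reproduction term is strictly positive. Since both coefficients are nonnegative, both terms are $\ge 0$, and it remains to show that the margin is \emph{strictly} positive.

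The only subtle step is ruling out the case where both coefficients vanish.
\begin{itemize}
\item If $w(\tfrac12)=0$ and $\lambda_w=0$, then $w(1)=\lambda_w+2w(\tfrac12)=0$, contradicting $w(1)>0$.
\item Hence at least one coefficient is strictly positive. The corresponding term, a positive coefficient times a strictly positive bracket, is then strictly positive, and the margin is $>0$.
\end{itemize}
This holds for each of the finitely many $a\neq a^\star$, so $a^\star$ is the unique maximizer of $\Phi_w$.

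For part (b), set $w(\tfrac12)=0$. Then the pooled-mass term vanishes identically, so Assumption~\ref{ass:pool} is not needed. In addition, $\lambda_w=w(1)>0$. The margin therefore reduces to
\[
w(1)\bigl[\pi(a^\star)T(a^\star\to a^\star)-\pi(a)T(a\to a)\bigr],
\]
which is strictly positive by Assumption~\ref{ass:self}. Uniqueness of the maximizer and the almost-sure convergence then follow exactly as in part (a).

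The main obstacle, mild as it is, is the strictness bookkeeping: the margin must be strictly positive, not merely nonnegative, to obtain a unique maximizer. The hypothesis $w(1)>0$ together with the identity $w(1)=\lambda_w+2w(\tfrac12)$ handles this.
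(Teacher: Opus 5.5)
Your proposal is correct and follows the paper's proof essentially step for step. You reduce to $a^\star$ being the unique maximizer of $\Phi_w$ via Proposition~\ref{prop:population_objective}, sign the two terms of Eq.~\eqref{eq:margin_decomposition} using the assumptions, and use $w(1)=\lambda_w+2w(\tfrac12)>0$ to get strict positivity; part (b) then follows because the pooled-mass term vanishes and $\lambda_w=w(1)>0$.
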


\begin{proof}
Fix any wrong $a \neq a^\star$. By Assumption~\ref{ass:self}, the self-reproduction bracket of Eq.~\eqref{eq:margin_decomposition} is strictly positive, so the self-reproduction term is nonnegative and is strictly positive whenever $\lambda_w > 0$.

For part (a), Assumption~\ref{ass:pool} makes the pooled-mass bracket of Eq.~\eqref{eq:margin_decomposition} strictly positive, so the pooled-mass term is nonnegative and is strictly positive whenever $w(\tfrac12) > 0$. Since $w(1) = \lambda_w + 2w(\tfrac12) > 0$, at least one of $\lambda_w$ and $w(\tfrac12)$ is strictly positive, so $\Phi_w(a^\star) > \Phi_w(a)$ for every wrong $a$. Hence $a^\star$ is the unique maximizer of $\Phi_w$, and Proposition~\ref{prop:population_objective} yields $\hat a_N^{\mathrm{PC}} \to a^\star$ almost surely.

For part (b), if $w(\tfrac12) = 0$ the pooled-mass term vanishes identically and $\lambda_w = w(1) > 0$, so the strict positivity of the self-reproduction term suffices to give $\Phi_w(a^\star) > \Phi_w(a)$ for every wrong $a$. Thus $\hat a_N^{\mathrm{PC}} \to a^\star$ almost surely without Assumption~\ref{ass:pool}.
\end{proof}

\subsection{Proof of Theorem~\ref{thm:improvement}}
\label{app:binary}

Theorem~\ref{thm:improvement} follows from Theorem~\ref{thm:convergence} by specializing to the binary case $\mathcal A = \{a^\star, a'\}$, where $a^\star$ is the correct answer and $a'$ is the only wrong answer (so $\pi(a') = 1 - \pi(a^\star)$). In this case, the two assumptions of Theorem~\ref{thm:convergence} reduce to the same condition and the exact margin takes a one-bracket form, from which the asymptotic boundary $\pi(a^\star) > r_W/(r_C+r_W)$ follows.

\paragraph{The two assumptions coincide.}
With the only wrong answer being $a'$ and $\pi(a') = 1 - \pi(a^\star)$, Assumption~\ref{ass:self} becomes the binary boundary
\begin{equation}
\label{eq:binary_boundary}
\pi(a^\star)\,T(a^\star \rightarrow a^\star) > (1-\pi(a^\star))\,T(a' \rightarrow a').
\end{equation}
For Assumption~\ref{ass:pool}, expand $\pi^{\rightarrow}$ via Eq.~\eqref{eq:pi_rho}, using $T(a^\star \rightarrow a') = 1 - T(a^\star \rightarrow a^\star)$ and $T(a' \rightarrow a^\star) = 1 - T(a' \rightarrow a')$ (each row of $T$ sums to one):
\begin{align*}
\pi^{\rightarrow}(a^\star) &= \pi(a^\star)\,T(a^\star \rightarrow a^\star) + (1-\pi(a^\star))\bigl(1 - T(a' \rightarrow a')\bigr), \\
\pi^{\rightarrow}(a') &= \pi(a^\star)\bigl(1 - T(a^\star \rightarrow a^\star)\bigr) + (1-\pi(a^\star))\,T(a' \rightarrow a').
\end{align*}
Adding $\pi(a^\star)$ and $\pi(a') = 1 - \pi(a^\star)$ and simplifying,
\begin{align*}
\pi(a^\star) + \pi^{\rightarrow}(a^\star) &= 1 + \bigl[\pi(a^\star)\,T(a^\star \rightarrow a^\star) - (1-\pi(a^\star))\,T(a' \rightarrow a')\bigr], \\
\pi(a') + \pi^{\rightarrow}(a') &= 1 - \bigl[\pi(a^\star)\,T(a^\star \rightarrow a^\star) - (1-\pi(a^\star))\,T(a' \rightarrow a')\bigr].
\end{align*}
Subtracting,
\begin{equation*}
\bigl[\pi(a^\star)+\pi^{\rightarrow}(a^\star)\bigr] - \bigl[\pi(a')+\pi^{\rightarrow}(a')\bigr] = 2\,\bigl[\pi(a^\star)\,T(a^\star \rightarrow a^\star) - (1-\pi(a^\star))\,T(a' \rightarrow a')\bigr].
\end{equation*}
Hence Assumptions~\ref{ass:self} and~\ref{ass:pool} reduce to the same condition Eq.~\eqref{eq:binary_boundary}, and Theorem~\ref{thm:convergence}\,(a) gives $\hat a_N^{\mathrm{PC}} \to a^\star$ a.s.\ whenever Eq.~\eqref{eq:binary_boundary} holds.

\paragraph{The exact margin.}
Applying Eq.~\eqref{eq:margin_decomposition} with $a = a'$, the self-reproduction term contributes $\lambda_w$ times the quantity $\pi(a^\star)\,T(a^\star \rightarrow a^\star) - (1-\pi(a^\star))\,T(a' \rightarrow a')$. By the identity above, the pooled-mass term contributes $2w(\tfrac12)$ times the same bracket. Their sum is $w(1) = \lambda_w + 2w(\tfrac12)$ times the bracket:
\begin{equation}
\label{eq:single_wrong_margin}
\Phi_w(a^\star) - \Phi_w(a') = w(1)\,\bigl[\pi(a^\star)\,T(a^\star \rightarrow a^\star) - (1-\pi(a^\star))\,T(a' \rightarrow a')\bigr].
\end{equation}
The choice of $w$ thus enters only through $w(1)$.

\begin{proof}
By Eq.~\eqref{eq:single_wrong_margin} and $w(1) > 0$, $\Phi_w(a^\star) > \Phi_w(a')$ if and only if
\begin{equation*}
\pi(a^\star)\,T(a^\star \rightarrow a^\star) > (1-\pi(a^\star))\,T(a' \rightarrow a').
\end{equation*}
By Proposition~\ref{prop:population_objective}, this is equivalent to $\hat a_N^{\mathrm{PC}} \to a^\star$ almost surely. Identifying $r_C := T(a^\star \rightarrow a^\star)$ and $r_W := T(a' \rightarrow a')$ via Eq.~\eqref{eq:rates_from_kernel} rewrites the threshold as
\begin{equation*}
\pi(a^\star) > \frac{r_W}{r_C + r_W}.
\end{equation*}
Standard MV's vote count $\frac{1}{N}\sum_{i=1}^{N} \mathbf 1\{a_i = a\}$ converges to $\pi(a)$ a.s.\ by the strong law of large numbers, so Standard MV converges to $a^\star$ if and only if $\pi(a^\star) > \tfrac12$. Hence PC-WMV converges where Standard MV does not on the interval $r_W/(r_C+r_W) < \pi(a^\star) \le \tfrac12$, which has positive length if and only if $r_C > r_W$.
\end{proof}

\subsection{Empirical Verification of Assumptions}
\label{app:verification}

Table~\ref{tab:assumption} verifies that Assumption~\ref{ass:self} (A1) and Assumption~\ref{ass:pool} (A2) hold empirically on $\mathcal{Q}' := \{q \in \mathcal{Q} : 0 < \pi_q(a^\star_q) < 1\}$, the subset of problems on which the verification is non-degenerate. At $\pi_q(a^\star_q) = 1$ no wrong answer carries population mass, so A1 and A2 hold vacuously and would only inflate the reported probabilities. At $\pi_q(a^\star_q) = 0$ the per-problem rate $r_{C,q} = T(a^\star_q \rightarrow a^\star_q)$ is undefined and $a^\star_q$ is not a population maximizer, so the framework does not apply. $\mathcal{Q}'$ also matches the subset used in the $\overline{\mathrm{AUROC}}$ analysis (Section~\ref{sec:signal}, Appendix~\ref{app:auroc_eval}). The table further reports the per-problem indicator $\Delta_{w^{(n)}} := \min_{a \neq a^\star}\bigl[\Phi_{w^{(n)}}(a^\star) - \Phi_{w^{(n)}}(a)\bigr]$ for $w^{(n)}(c) = c^n$, $n \in \{1, 2, 3\}$ ($\Delta_{w^{(n)}} > 0$ is equivalent to $a^\star$ being the unique maximizer of $\Phi_{w^{(n)}}$).

$\Pr[\mathrm{A1}]$ is non-negligible across a wide range of benchmarks and models, so $\mathrm{A1}$ is not a rare or pathological event in practice. Conditional on $\mathrm{A1}$, $\Pr[\mathrm{A2}\mid \mathrm{A1}]$ is at least $87.5\%$ on every cell and reaches $100\%$ on $40\%$ of them, so the joint occurrence of $\mathrm{A1}$ and $\mathrm{A2}$ is common. The last three columns ($\Pr[\Delta_{w^{(n)}} > 0 \mid \mathrm{A1}]$) further show that the positive gap predicted by Theorem~\ref{thm:convergence} occurs with overwhelmingly high probability for $w^{(n)}(c) = c^n$, $n \in \{1, 2, 3\}$, even under the weaker condition that does not require $\mathrm{A2}$. These findings indicate that the assumptions are not artificially imposed but rather reflect patterns that arise naturally and frequently in real experimental data.

\begin{table}[H]
\centering
\caption{\textbf{Empirical verification of Theorem~\ref{thm:convergence}'s assumptions} on $\mathcal{Q}'$, checking $\mathrm{A1}$ and $\mathrm{A2}$ for every wrong $a$. All probabilities are macro-averaged: each problem in $\mathcal{Q}'$ contributes one indicator per event. Theorem~\ref{thm:convergence} predicts $\Pr[\Delta_{w^{(n)}} > 0 \mid \mathrm{A1} \cap \mathrm{A2}] = 1$ for every convex $w$. The last three columns report the weaker $\Pr[\Delta_{w^{(n)}} > 0 \mid \mathrm{A1}]$ for $w^{(n)}(c) = c^n$, $n \in \{1, 2, 3\}$.}
\label{tab:assumption}
\small
\resizebox{\textwidth}{!}{%
\begin{tabular}{@{}llrrrrrr@{}}
\toprule
\multirow{2}{*}{Benchmark} & \multirow{2}{*}{Model} & \multirow{2}{*}{$|\mathcal{Q}'|$} & \multirow{2}{*}{$\Pr[\mathrm{A1}]$} & \multirow{2}{*}{$\Pr[\mathrm{A2} \mid \mathrm{A1}]$} & \multicolumn{3}{c}{$\Pr[\Delta_{w^{(n)}} > 0 \mid \mathrm{A1}]$} \\
\cmidrule(lr){6-8}
 & & & & & $n{=}1$ & $n{=}2$ & $n{=}3$ \\
\midrule
\multirow{5}{*}{FrontierScience-Olympiad} & GPT-OSS-120B & 73 & 64.4 & 93.6 & 93.6 & 97.9 & 97.9 \\
 & GPT-OSS-20B & 74 & 70.3 & 96.2 & 96.2 & 98.1 & 100.0 \\
 & Nemotron3-30B & 80 & 61.3 & 93.9 & 93.9 & 98.0 & 100.0 \\
 & Nemotron2-9B & 59 & 35.6 & 90.5 & 90.5 & 90.5 & 100.0 \\
 & Ministral3-14B & 47 & 36.2 & 94.1 & 94.1 & 100.0 & 100.0 \\
\midrule
\multirow{5}{*}{HMMT Feb~2026} & GPT-OSS-120B & 24 & 79.2 & 94.7 & 94.7 & 100.0 & 100.0 \\
 & GPT-OSS-20B & 24 & 83.3 & 100.0 & 100.0 & 100.0 & 100.0 \\
 & Nemotron3-30B & 20 & 80.0 & 100.0 & 100.0 & 100.0 & 100.0 \\
 & Nemotron2-9B & 23 & 56.5 & 100.0 & 100.0 & 100.0 & 100.0 \\
 & Ministral3-14B & 23 & 69.6 & 87.5 & 87.5 & 93.8 & 100.0 \\
\midrule
\multirow{5}{*}{AIME~2025} & GPT-OSS-120B & 23 & 100.0 & 91.3 & 91.3 & 95.7 & 100.0 \\
 & GPT-OSS-20B & 26 & 92.3 & 100.0 & 100.0 & 100.0 & 100.0 \\
 & Nemotron3-30B & 17 & 100.0 & 100.0 & 100.0 & 100.0 & 100.0 \\
 & Nemotron2-9B & 23 & 73.9 & 100.0 & 100.0 & 100.0 & 100.0 \\
 & Ministral3-14B & 23 & 73.9 & 94.1 & 94.1 & 100.0 & 100.0 \\
\midrule
\multirow{5}{*}{Brumo~2025} & GPT-OSS-120B & 21 & 81.0 & 94.1 & 94.1 & 94.1 & 100.0 \\
 & GPT-OSS-20B & 23 & 95.7 & 95.5 & 95.5 & 95.5 & 100.0 \\
 & Nemotron3-30B & 20 & 95.0 & 100.0 & 100.0 & 100.0 & 100.0 \\
 & Nemotron2-9B & 20 & 70.0 & 100.0 & 100.0 & 100.0 & 100.0 \\
 & Ministral3-14B & 26 & 88.5 & 91.3 & 91.3 & 95.7 & 100.0 \\
\bottomrule
\end{tabular}%
}
\end{table}

\clearpage
\section{Additional Experiments}
\label{app:additional}

\subsection{ROC Curves for Correctness Predictors}
\label{app:roc_grid}

Tables~\ref{tab:signal} and~\ref{tab:auroc} report $\overline{\mathrm{AUROC}}$ as a single number per (model, benchmark, signal). Figure~\ref{fig:signal_roc_grid} visualizes the underlying ROC curves on the same 5 model $\times$ 4 benchmark grid, overlaying prefix consistency and the WMV baselines per panel. Curves are macro-averaged: each problem's ROC is interpolated onto a common false-positive-rate grid and then averaged over the same problem subset $\mathcal Q'$ used in $\overline{\mathrm{AUROC}}$ (problems with at least one correct and one wrong initial sample, Appendix~\ref{app:auroc_eval}).

The visual pattern matches the $\overline{\mathrm{AUROC}}$ numbers: prefix consistency's curve sits at or above the baselines on FrontierScience-Olympiad and HMMT Feb~2026 in 8 of the 10 (model, benchmark) settings across the five models, while on AIME~2025 and Brumo~2025 the gap shrinks as the baselines' curves move closer to PC. The two cells where PC is not visibly above are Nemotron2-9B FrontierScience-Olympiad (where P(True) leads, consistent with PC's smallest discrimination gap $D$ in Table~\ref{tab:signal}) and GPT-OSS-120B HMMT Feb~2026 (where DeepConf tail edges PC by $.03$ in $\overline{\mathrm{AUROC}}$).

\begin{figure}[!htbp]
\centering
\includegraphics[width=0.78\textwidth]{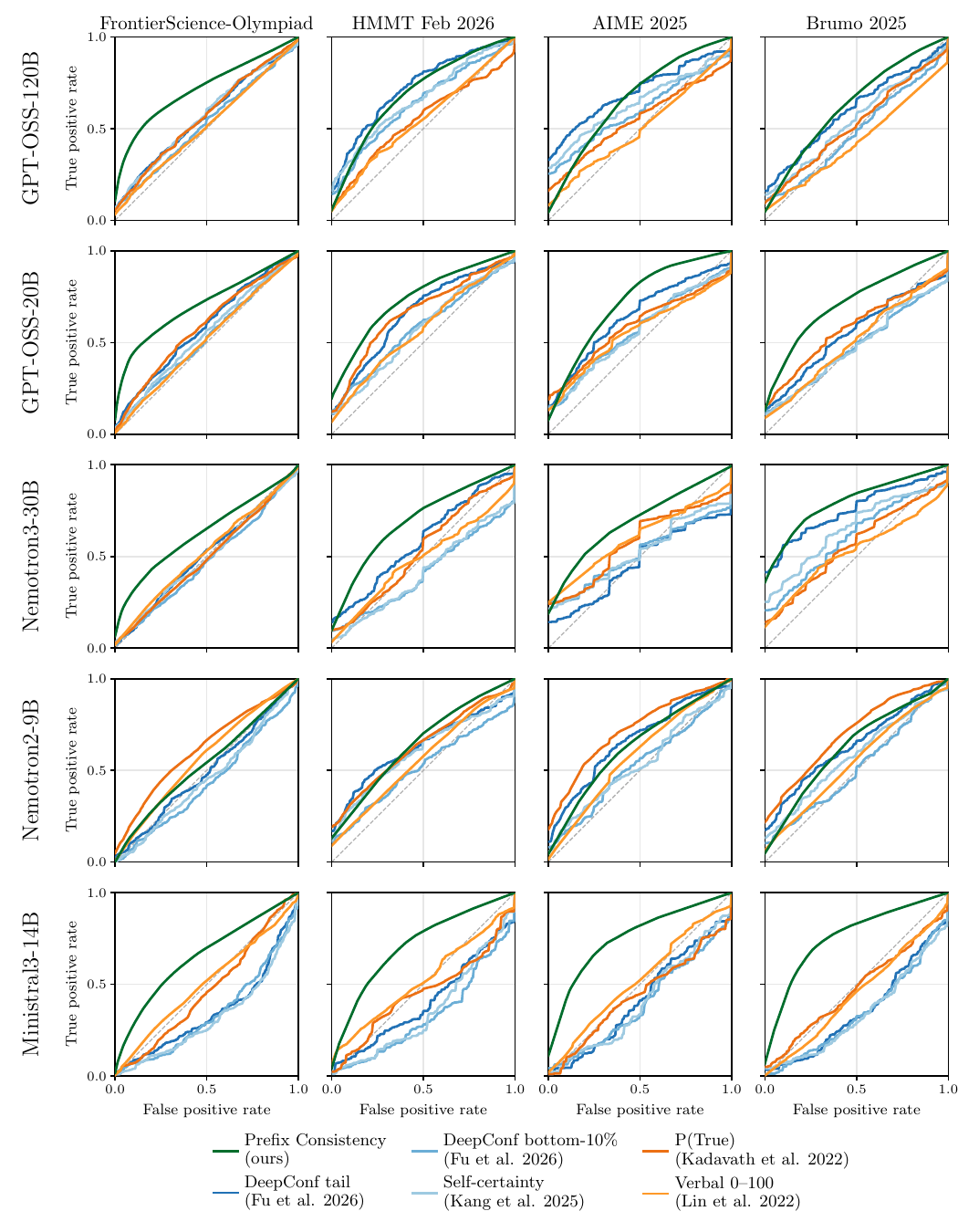}
\caption{\textbf{Macro ROC curves for prefix consistency and the WMV baselines, all 20 (model, benchmark) cells.} Rows are models, columns are benchmarks. Each curve is the per-problem ROC averaged on a common false-positive-rate grid, restricted to problems with at least one correct and one wrong initial sample. The dashed diagonal is the random-classifier baseline.}
\label{fig:signal_roc_grid}
\end{figure}

\FloatBarrier
\subsection{Cost--Accuracy Curves}
\label{app:cost_accuracy_curves}

Figure~\ref{fig:cost_accuracy_all} extends the cost--accuracy analysis of Figure~\ref{fig:regen-consistency-main} to all 20 (model, benchmark) cells, complementing the fixed-budget tables in Appendix~\ref{app:all_baselines}. Across nearly all 20 cells PC-cubic (green) sits above the other baselines once the budget exceeds the per-problem regeneration cost, and reaches Standard MV plateau at a noticeably smaller budget. The discrimination gap is largest on FrontierScience-Olympiad and shrinks as Standard MV plateau approaches its peak (AIME~2025, Brumo~2025).

\begin{figure}[!htbp]
\centering
\includegraphics[width=\textwidth]{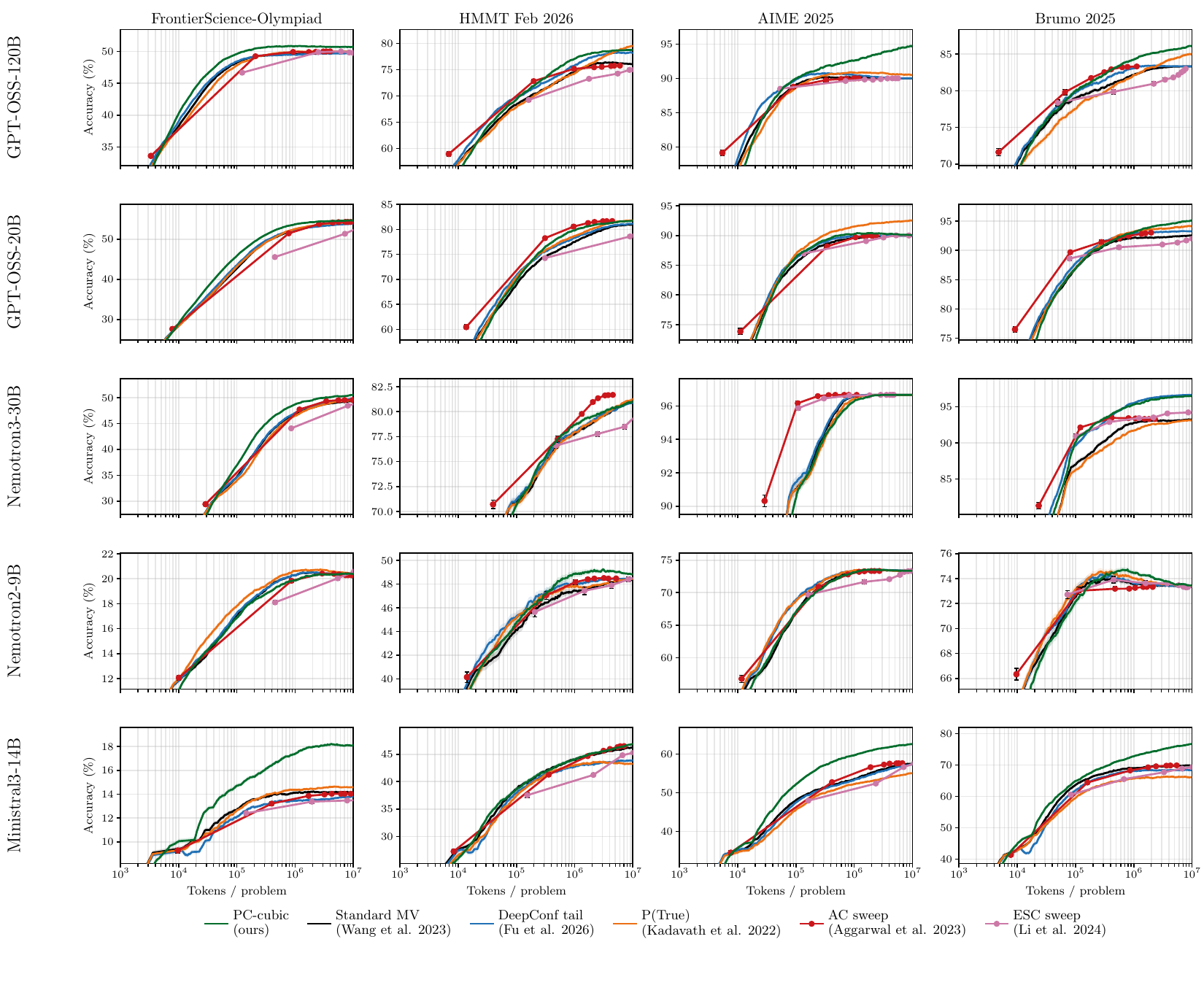}
\vspace{-2.5em}
\caption{\textbf{Cost--accuracy curves for all 20 (model, benchmark) cells.} Rows are models, columns are benchmarks. Y-axes are auto-zoomed to the Pass@1--plateau range. Shaded bands are $\pm 2\sigma$ confidence intervals on accuracy; AC and ESC operating points carry vertical (accuracy) and horizontal (cost) $\pm 2\sigma$ error bars.}
\label{fig:cost_accuracy_all}
\end{figure}

\FloatBarrier
\subsection{Full Baseline Comparison}
\label{app:all_baselines}

Figure~\ref{fig:cost_accuracy_full} extends Figure~\ref{fig:cost_accuracy_all} to the full baseline set plus the oracle upper bounds. The three PC variants form a tight green band near the plateau across nearly every cell, while the DeepConf variants and especially their filtered variants spread out widely.

\begin{figure}[!htb]
\centering
\includegraphics[width=\textwidth]{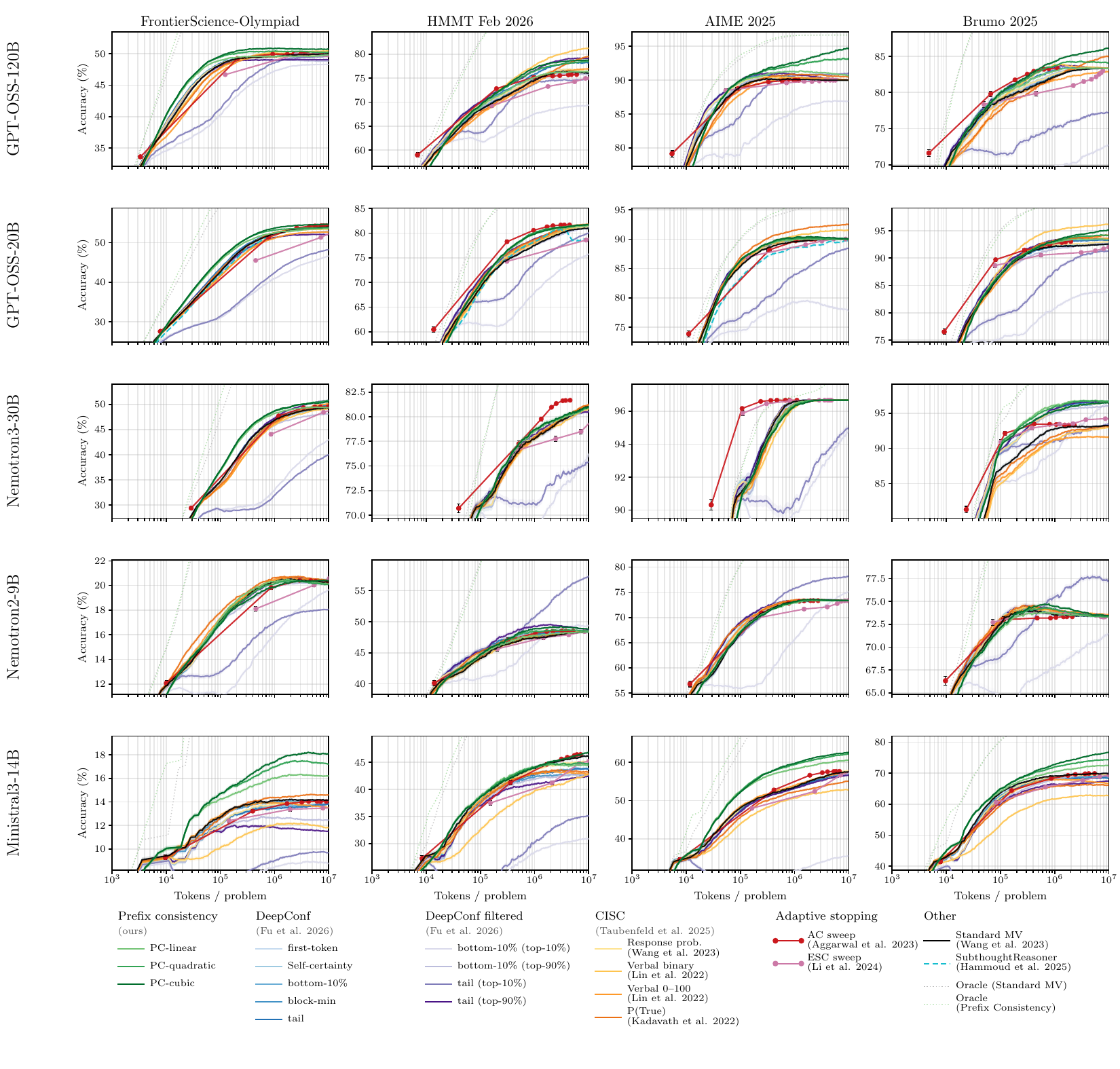}
\vspace{-2.5em}
\caption{\textbf{Cost--accuracy curves with the full baseline set.} Same layout as Figure~\ref{fig:cost_accuracy_all}, but additionally includes the oracle upper bounds alongside all evaluated baselines. Methods are colored by family: greens for PC, blues for DeepConf, purples for filtered DeepConf, browns/oranges for CISC, red for AC, mauve for ESC, cyan for SubthoughtReasoner, gray for Standard MV. Oracles are dotted. Shaded bands are $\pm 2\sigma$ confidence intervals on accuracy; AC and ESC operating points carry vertical (accuracy) and horizontal (cost) $\pm 2\sigma$ error bars.}
\label{fig:cost_accuracy_full}
\end{figure}

Tables~\ref{tab:wmv_gpt_oss_120b_all}, \ref{tab:wmv_gpt_oss_20b_all}, \ref{tab:wmv_nemotron3_30b_all}, \ref{tab:wmv_nemotron2_9b_all}, and~\ref{tab:wmv_ministral3_14b_all} report fixed-budget accuracy at $B \in \{250\text{k}, 1\text{M}, 5\text{M}\}$ tokens for every evaluated baseline (Standard MV, all five DeepConf aggregation strategies~\citep{fu2025deepthinkconfidence} and their filtered variants, Response probability, verbalized confidence and P(True)~\citep{taubenfeld2025confidenceimprovesselfconsistency,kadavath2022languagemodelsknowthey}, and PC-linear/quadratic/cubic), one table per model. See Appendix~\ref{app:baseline_impl} for baseline definitions.
Tables~\ref{tab:token_savings_gpt_oss_120b_all}, \ref{tab:token_savings_gpt_oss_20b_all}, \ref{tab:token_savings_nemotron3_30b_all}, \ref{tab:token_savings_nemotron2_9b_all}, and~\ref{tab:token_savings_ministral3_14b_all} report the corresponding token-efficiency ratios at $\alpha \in \{75\%, 90\%, 99\%\}$.

Two patterns emerge.
First, on the more difficult science benchmark (FrontierScience-Olympiad) PC-cubic is the best non-oracle method at $B{=}250\text{k}$ and $B{=}1\text{M}$ for the four models with non-trivial discrimination gap (GPT-OSS-120B, GPT-OSS-20B, Nemotron3-30B, Ministral3-14B): at $B{=}1\text{M}$ it reaches .508, .537, .486, and .174 respectively, against the strongest non-PC baseline at .503, .525, .472, and .143. At $B{=}5\text{M}$ PC-cubic remains best on three of these four, with PC-quadratic narrowly ahead on Nemotron3-30B (.504 vs.\ .503, within $2\sigma$). The PC-linear $\le$ PC-quadratic $\le$ PC-cubic ordering holds in 11 of the 12 (model, budget) FrontierScience-Olympiad cells over these four models, with the Nemotron3-30B 5M cell as the only inversion, consistent with the convex-weight analysis: in Eq.~\eqref{eq:exact_decomposition}, the coefficient $\lambda_w := w(1) - 2w(\tfrac12)$ on the self-reproduction term $\pi(a)\,T(a \rightarrow a)$ (the joint probability that $a$ is both the initial and the regenerated answer) takes values $0$, $\tfrac12$, $\tfrac34$ for PC-linear, PC-quadratic, PC-cubic respectively, so larger $n$ upweights self-reproducing candidates more strongly.
The exception is Nemotron2-9B, whose FrontierScience-Olympiad discrimination gap is the smallest in our suite ($D = 7.4\%$, Table~\ref{tab:signal}). PC-WMV there roughly matches Standard MV (PC-cubic .199 vs.\ Standard MV .203 at $B{=}1\text{M}$), consistent with Theorem~\ref{thm:improvement} predicting no advantage when $D$ is small.

Second, on the easier benchmarks (AIME~2025, Brumo~2025) where Standard MV plateau already sits close to its peak, verbalized confidence (Verbal binary, P(True)) and DeepConf tail (top-90\%) become competitive or take the lead in a few cells (e.g.\ Verbal binary on GPT-OSS-20B Brumo reaches .951 at $B{=}1\text{M}$ vs.\ .931 for PC-cubic). This matches the cost--quality trade-off discussed in Section~\ref{sec:experiments}, where PC's larger per-group cost is no longer amortized once Standard MV is near its plateau.

SubthoughtReasoner~\citep{hammoud2025beyondlastanswer} appears only in the GPT-OSS-20B tables, the only model for which it has been re-evaluated under the unified incremental path. SubthoughtReasoner does not consistently improve over Standard MV (vs.\ Standard MV at $B{=}1\text{M}$: .778 vs.\ .775 on HMMT, .518 vs.\ .523 on FrontierScience-Olympiad, .886 vs.\ .896 on AIME~2025), and the GPT-OSS-20B Brumo cell and other-model cells are not evaluated.

\begingroup
\setlength{\intextsep}{2pt plus 1pt minus 1pt}
\setlength{\floatsep}{2pt plus 1pt minus 1pt}
\setlength{\textfloatsep}{2pt plus 1pt minus 1pt}
\begin{table}[H]
\centering
\caption{\textbf{All baselines, GPT-OSS-120B (higher accuracy is better).} Bold marks the best non-oracle per column. Subscripts are $\pm 2\sigma$ CI.}
\label{tab:wmv_gpt_oss_120b_all}
\scriptsize
\setlength{\tabcolsep}{1.5pt}
\renewcommand{\arraystretch}{0.8}
\resizebox{\textwidth}{!}{%
%
}%
\end{table}

\begin{table}[H]
\centering
\caption{\textbf{All baselines, GPT-OSS-20B (higher accuracy is better).} Bold marks the best non-oracle per column. Subscripts are $\pm 2\sigma$ CI.}
\label{tab:wmv_gpt_oss_20b_all}
\scriptsize
\setlength{\tabcolsep}{1.5pt}
\renewcommand{\arraystretch}{0.8}
\resizebox{\textwidth}{!}{%
%
%
}%
\end{table}

\begin{table}[H]
\centering
\caption{\textbf{All baselines, Nemotron3-30B (higher accuracy is better).} Bold marks the best non-oracle per column. Subscripts are $\pm 2\sigma$ CI.}
\label{tab:wmv_nemotron3_30b_all}
\scriptsize
\setlength{\tabcolsep}{1.5pt}
\renewcommand{\arraystretch}{0.8}
\resizebox{\textwidth}{!}{%
%
%
}%
\end{table}

\begin{table}[H]
\centering
\caption{\textbf{All baselines, Nemotron2-9B (higher accuracy is better).} Bold marks the best non-oracle per column. Subscripts are $\pm 2\sigma$ CI.}
\label{tab:wmv_nemotron2_9b_all}
\scriptsize
\setlength{\tabcolsep}{1.5pt}
\renewcommand{\arraystretch}{0.8}
\resizebox{\textwidth}{!}{%
%
%
}%
\end{table}

\begin{table}[H]
\centering
\caption{\textbf{All baselines, Ministral3-14B (higher accuracy is better).} Bold marks the best non-oracle per column. Subscripts are $\pm 2\sigma$ CI.}
\label{tab:wmv_ministral3_14b_all}
\scriptsize
\setlength{\tabcolsep}{1.5pt}
\renewcommand{\arraystretch}{0.8}
\resizebox{\textwidth}{!}{%
%
%
}%
\end{table}

\begin{table}[H]
\centering
\caption{\textbf{Token efficiency, GPT-OSS-120B (smaller is better).} Cells $<$1 indicate more cost-efficient than Standard MV. ``N/A'' indicates target unreachable. Bold marks the best per column.}
\label{tab:token_savings_gpt_oss_120b_all}
\scriptsize
\setlength{\tabcolsep}{1.5pt}
\renewcommand{\arraystretch}{0.8}
\resizebox{\textwidth}{!}{%
%
%
}%
\end{table}

\begin{table}[H]
\centering
\caption{\textbf{Token efficiency, GPT-OSS-20B (smaller is better).} Cells $<$1 indicate more cost-efficient than Standard MV. ``N/A'' indicates target unreachable. Bold marks the best per column.}
\label{tab:token_savings_gpt_oss_20b_all}
\scriptsize
\setlength{\tabcolsep}{1.5pt}
\renewcommand{\arraystretch}{0.8}
\resizebox{\textwidth}{!}{%
%
%
}%
\end{table}

\begin{table}[H]
\centering
\caption{\textbf{Token efficiency, Nemotron3-30B (smaller is better).} Cells $<$1 indicate more cost-efficient than Standard MV. ``N/A'' indicates target unreachable. Bold marks the best per column.}
\label{tab:token_savings_nemotron3_30b_all}
\scriptsize
\setlength{\tabcolsep}{1.5pt}
\renewcommand{\arraystretch}{0.8}
\resizebox{\textwidth}{!}{%
%
%
}%
\end{table}

\begin{table}[H]
\centering
\caption{\textbf{Token efficiency, Nemotron2-9B (smaller is better).} Cells $<$1 indicate more cost-efficient than Standard MV. ``N/A'' indicates target unreachable. Bold marks the best per column.}
\label{tab:token_savings_nemotron2_9b_all}
\scriptsize
\setlength{\tabcolsep}{1.5pt}
\renewcommand{\arraystretch}{0.8}
\resizebox{\textwidth}{!}{%
%
%
}%
\end{table}

\begin{table}[H]
\centering
\caption{\textbf{Token efficiency, Ministral3-14B (smaller is better).} Cells $<$1 indicate more cost-efficient than Standard MV. ``N/A'' indicates target unreachable. Bold marks the best per column.}
\label{tab:token_savings_ministral3_14b_all}
\scriptsize
\setlength{\tabcolsep}{1.5pt}
\renewcommand{\arraystretch}{0.8}
\resizebox{\textwidth}{!}{%
%
%
}%
\end{table}

\endgroup

\subsection{Pool Coverage vs.\ Reweighting}
\label{app:oracle_decomposition}

We show that PC-WMV's advantage over the non-PC baselines comes from reweighting, not from an enlarged pool of correct candidates.

\paragraph{Oracle upper bounds.}
We define Oracle (Standard MV) as the accuracy of an ideal selector that, 
at each budget $B$, always picks the correct answer from the initial-sample 
pool whenever one exists, and Oracle (Prefix Consistency) as the same ideal 
selector applied to the combined pool of initial and regenerated answers.
Both oracles are unattainable in practice, since the correct answer is unknown 
at inference time; they serve as upper bounds on what Standard MV and PC-WMV 
can achieve, respectively.
Figure~\ref{fig:cost_accuracy_oracle} replots Figure~\ref{fig:cost_accuracy_full} restricted to Standard MV, PC-cubic, and the two oracle upper bounds, with the y-axis chosen so the oracles stay in view at high token budgets. Oracle (Standard MV) and Oracle (Prefix Consistency) coincide within the $2\sigma$ confidence band on most of the 20 cells, and the residual gap between them is small relative to the PC-cubic vs.\ Standard MV gap in the same cells. At a fixed token budget $B$, the union of correct candidates reachable from sampled groups is therefore essentially the same as that reachable from the equivalent number of initial samples drawn under the Standard MV protocol. 
Three factors related to pool coverage explain the small residuals. First, at $\tau{=}0.75$, $K{=}1$ each PC group costs ${\approx}1.25{\times}$ a Standard MV sample,\footnote{The factor $1.25 = 1 + (1-\tau)$ is the truncation-implied expectation. Continuation/generation token ratios in Table~\ref{tab:token_stats} give empirical per-group cost factors of $1.25$--$1.34{\times}$ on four of the five models and $1.49$--$1.60{\times}$ on Ministral3-14B.} so at budget $B$ PC pools roughly $1.6{\times}$ as many candidate answers as Standard MV. Second, $\tilde a_i$ is correlated with $a_i$ through the shared prefix $y_i[{:}\lceil \tau |y_i| \rceil]$, so the effective number of independent answers in the group pool is well below $1.6{\times}$, nearly canceling the first effect. Third, the asymptotic pool of correct candidates is by construction weakly larger for the group pool than for the initial pool, which puts Oracle (Prefix Consistency) weakly above Oracle (Standard MV) at high $B$ in a subset of cells. None of these effects involves the reweighting, so they cannot account for the PC-cubic vs.\ Standard MV gap in the same panels. PC-WMV's advantage therefore comes from the weighting itself, with the prefix-consistency score $c_i(a)$ acting as a per-candidate reliability signal (Section~\ref{sec:prefix_consistency}).

\begin{figure}[!htbp]
\centering
\includegraphics[width=\textwidth]{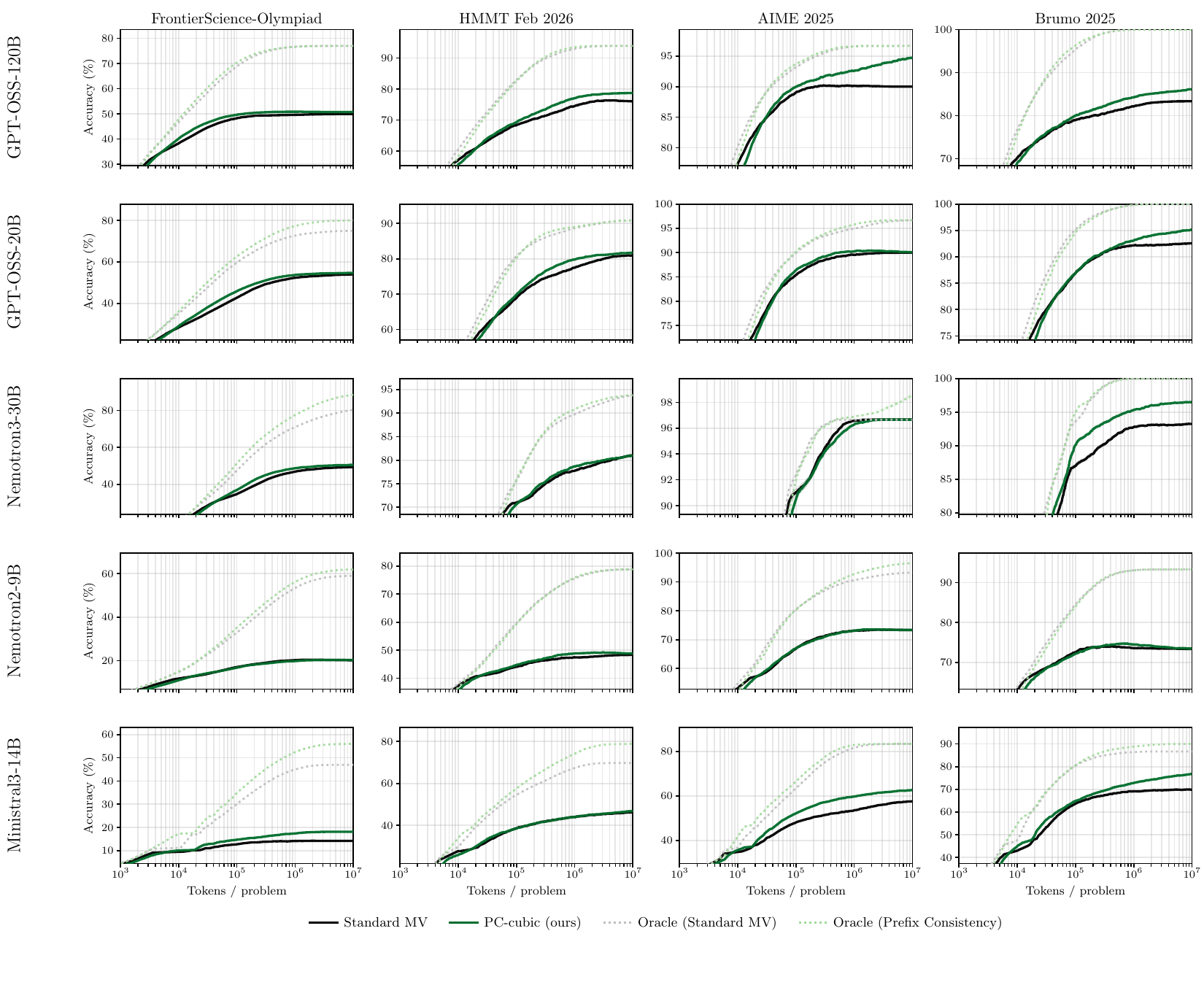}
\vspace{-2.5em}
\caption{\textbf{Oracle decomposition: pool coverage vs.\ reweighting.} Same 5x4 layout as Figure~\ref{fig:cost_accuracy_full}, restricted to Standard MV, PC-cubic, and the two oracle upper bounds. Y-axes include the oracle range so the high-budget tails stay in view. The two oracles nearly coincide in every cell, indicating that PC-WMV's advantage over the non-PC baselines does not come from regeneration adding new correct candidates to the pool. Shaded bands are $\pm 2\sigma$ confidence intervals on accuracy.}
\label{fig:cost_accuracy_oracle}
\end{figure}

\paragraph{Per-problem marginal correctness.}
Figure~\ref{fig:pool_correctness_scatter} gives a complementary per-problem view. Each point is the empirical estimate of the per-problem marginals $\pi(a^\star) = \Pr[a_i = a^\star]$ over the initial answers and $\pi^{\rightarrow}(a^\star) = \Pr[\tilde a_i = a^\star]$ over the regenerations (notation as defined in Appendix~\ref{app:notation}, Eq.~\eqref{eq:pi_rho}). The plotted estimates $\hat\pi(a^\star)$ and $\widehat{\pi^{\rightarrow}}(a^\star)$ cluster tightly on the $y = x$ diagonal in every cell, and their cell-level means agree within a few percentage points on most cells. The largest cell-mean difference is for Nemotron3-30B on AIME~2025, where the cell mean of $\hat\pi(a^\star)$ is $0.90$ and that of $\widehat{\pi^{\rightarrow}}(a^\star)$ is $0.77$, and across the 20 cells the differences run in both directions (e.g., Ministral3-14B has the cell mean of $\widehat{\pi^{\rightarrow}}(a^\star)$ above that of $\hat\pi(a^\star)$ on AIME~2025 and Brumo~2025). A paired Wilcoxon signed-rank test on the per-problem differences $\hat\pi_q(a^\star) - \widehat{\pi^{\rightarrow}}_q(a^\star)$ rejects equality at the Bonferroni-corrected significance level $\alpha_{\mathrm{sig}} = 0.05 / 20$ on 5 of the 20 cells, with absolute cell-mean differences of at most $0.126$ (the Nemotron3-30B AIME~2025 outlier, with the other four cells within $0.061$). Among these 5 detected cells, 4 have the cell mean of $\hat\pi(a^\star)$ above the cell mean of $\widehat{\pi^{\rightarrow}}(a^\star)$, that is, regenerated answers are statistically less correct on average than initial answers (the only exception is Ministral3-14B FrontierScience-Olympiad). Across all 20 cells the cell-mean difference is positive on 13 cells and negative on 7, indistinguishable from a $50{/}50$ split (two-sided binomial sign test, $p = 0.26$). The marginal correctness rate is therefore comparable for initial and regenerated answers, and the cell-mean differences do not systematically favor the regenerations. This is a stronger statement than mere equality of rates: PC-WMV's advantage cannot come from regeneration raising the per-answer correctness rate, because in the cells where the rate differs the regenerations have the lower marginal correctness.

\begin{figure}[!htbp]
\centering
\includegraphics[width=0.75\textwidth]{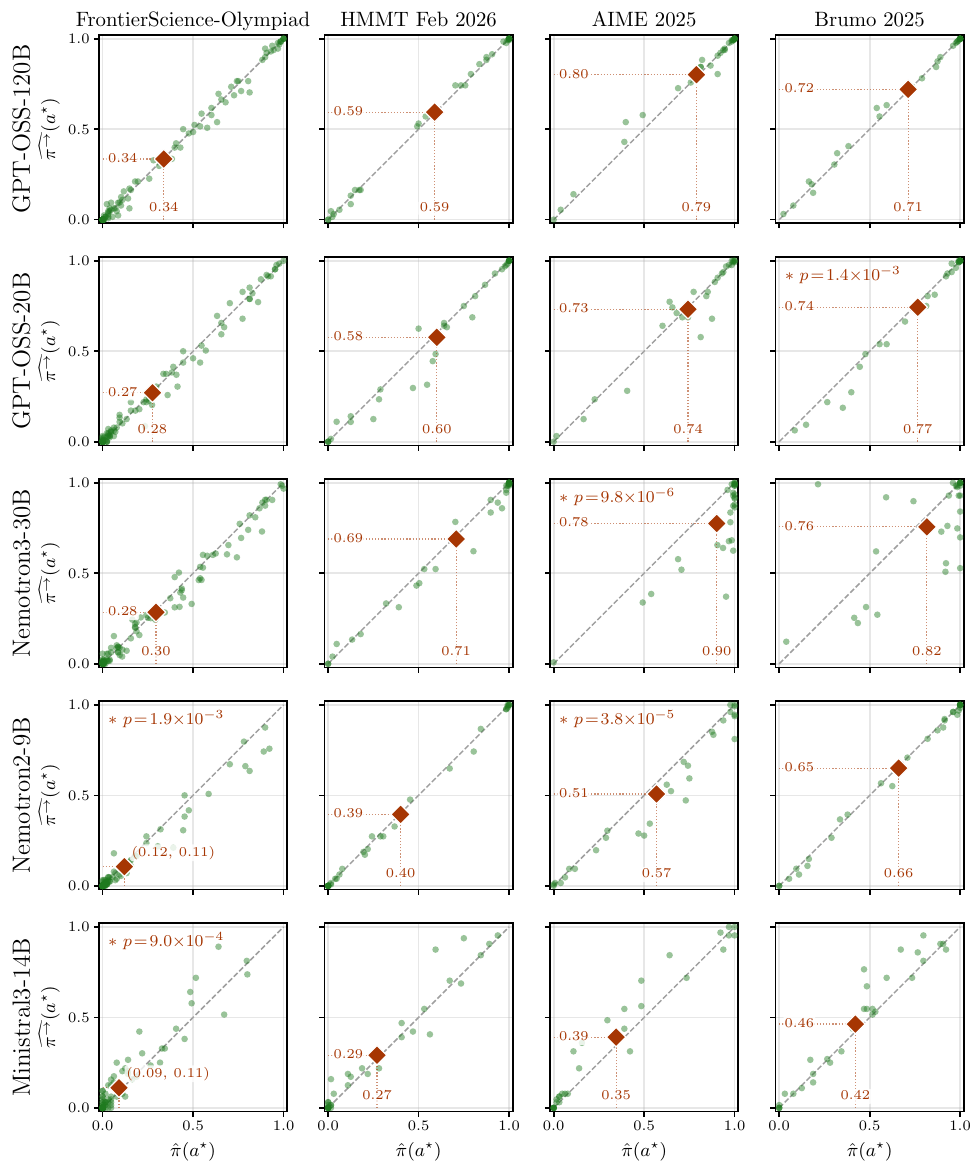}
\caption{\textbf{Per-problem $\hat\pi(a^\star)$ vs.\ $\widehat{\pi^{\rightarrow}}(a^\star)$.} Each point is one problem and the dashed line marks $y = x$. Red diamonds with dotted projection lines show each cell's mean and its coordinates. $\ast$ flags cells where a paired Wilcoxon test rejects equality (Bonferroni-corrected significance level $\alpha_{\mathrm{sig}} = 0.05 / 20$, not to be confused with the target-accuracy ratio $\alpha$ of Section~\ref{sec:token_efficiency}, with the panel-level $p$-value shown).}
\label{fig:pool_correctness_scatter}
\end{figure}

\FloatBarrier
\subsection{Sensitivity to \texorpdfstring{$\tau$}{tau} and \texorpdfstring{$K$}{K}}
\label{app:sensitivity}

Section~\ref{sec:prefix_consistency} fixes $K=1$ to keep notation succinct. Here we extend it to the general-$K$ multiset of Eq.~\eqref{eq:multiset} and the corresponding score $c_i^{(\tau,K)}(a) = |\{a' \in A_i^{(\tau,K)} : a' = a\}| / (K+1)$, sweeping $K \in \{1, 2, 3\}$ and $\tau \in \{0.25, 0.50, 0.75\}$.

Table~\ref{tab:transition_tau} sweeps the truncation fraction $\tau \in \{0.25, 0.50, 0.75\}$ at $K{=}1$ for GPT-OSS-20B. Both $r_C$ and $r_W$ decrease with deeper truncation, with $r_W$ generally decreasing faster (e.g., on AIME~2025, $r_C$ goes $87.8\% \to 73.0\%$ and $r_W$ goes $42.6\% \to 17.1\%$ as $\tau$ moves $0.75 \to 0.25$). The discrimination gap $D(\tau)$ varies by benchmark: it widens monotonically on AIME~2025 ($45.2 \to 50.0 \to 55.9$), peaks at $\tau{=}0.50$ on HMMT and Brumo~2025, and narrows on FrontierScience-Olympiad ($41.4 \to 39.5 \to 39.1$). Deeper truncation also costs more per group, since regeneration covers a longer suffix. Cost-equivalent accuracy across $(\tau, K)$ is reported in Table~\ref{tab:wmv_sensitivity}.

\begin{table}[H]
\centering
\caption{\textbf{Reproduction rates (\%) across truncation fractions $\tau \in \{0.75, 0.50, 0.25\}$ at $K{=}1$, GPT-OSS-20B (larger $D$ is better).} Column symbols ($r_C$, $r_W$, $D$) follow Table~\ref{tab:signal}, with $\overline{\mathrm{AUROC}} = (1+D)/2$ at $K{=}1$. Macro-averaged over problems with at least one correct and one wrong initial sample.}
\label{tab:transition_tau}
\small
\begin{tabular}{@{}lllrrrr@{}}
\toprule
Model & Benchmark & $\tau$ & \textbf{$r_C$} & \textbf{$r_W$} & \textbf{$D$} & $\overline{\mathrm{AUROC}}$ \\
\midrule
GPT-OSS-20B & FrontierScience-Olympiad & 0.75 & \textbf{55.4} & \textbf{14.0} & \textbf{41.4} & 0.707 \\
 &  & 0.50 & \textbf{48.1} & \textbf{8.6} & \textbf{39.5} & 0.697 \\
 &  & 0.25 & \textbf{45.0} & \textbf{5.9} & \textbf{39.1} & 0.696 \\
\midrule
GPT-OSS-20B & HMMT Feb~2026 & 0.75 & \textbf{78.3} & \textbf{29.7} & \textbf{48.6} & 0.743 \\
 &  & 0.50 & \textbf{67.8} & \textbf{18.1} & \textbf{49.7} & 0.748 \\
 &  & 0.25 & \textbf{60.2} & \textbf{12.3} & \textbf{47.9} & 0.740 \\
\midrule
GPT-OSS-20B & AIME~2025 & 0.75 & \textbf{87.8} & \textbf{42.6} & \textbf{45.2} & 0.726 \\
 &  & 0.50 & \textbf{78.6} & \textbf{28.6} & \textbf{50.0} & 0.750 \\
 &  & 0.25 & \textbf{73.0} & \textbf{17.1} & \textbf{55.9} & 0.780 \\
\midrule
GPT-OSS-20B & Brumo~2025 & 0.75 & \textbf{82.8} & \textbf{39.0} & \textbf{43.7} & 0.719 \\
 &  & 0.50 & \textbf{74.8} & \textbf{18.3} & \textbf{56.5} & 0.782 \\
 &  & 0.25 & \textbf{67.6} & \textbf{16.5} & \textbf{51.1} & 0.756 \\
\bottomrule
\end{tabular}
\end{table}

Table~\ref{tab:wmv_sensitivity} reports PC-linear, PC-quadratic, and PC-cubic accuracy across all $(\tau, K)$ configurations on the four GPT-OSS-20B benchmarks at budgets $B \in \{250\text{k}, 1\text{M}, 5\text{M}\}$, with Standard MV at the top for reference.
The main paper fixes $\tau{=}0.75$, $K{=}1$ (Section~\ref{sec:experiments}) as a minimal default that preserves most of the trace and uses a single regeneration per sample. We use the sweep below to characterize how PC-WMV accuracy varies with $(\tau, K)$, not to select the operating point.

\begin{table}
\centering
\caption{\textbf{Sensitivity of PC-WMV accuracy to $(\tau, K)$ on GPT-OSS-20B (higher is better).} Accuracy of PC-linear, PC-quadratic, PC-cubic at budgets 250k / 1M / 5M for every $(\tau, K)$ in the sweep. Standard MV at the top for reference. Bold marks the best $(\tau, K, \text{weight})$ per benchmark at each budget.}
\label{tab:wmv_sensitivity}
\scriptsize
\setlength{\tabcolsep}{2.2pt}
\renewcommand{\arraystretch}{0.95}
\resizebox{\textwidth}{!}{%
\begin{tabular}{ccl *{4}{ccc}}
\toprule
& & & \multicolumn{12}{c}{GPT-OSS-20B} \\
\cmidrule(lr){4-15}
& & & \multicolumn{3}{c}{FrontierScience-Olympiad} & \multicolumn{3}{c}{HMMT Feb~2026} & \multicolumn{3}{c}{AIME~2025} & \multicolumn{3}{c}{Brumo~2025} \\
\cmidrule(lr){4-6} \cmidrule(lr){7-9} \cmidrule(lr){10-12} \cmidrule(lr){13-15}
$\tau$ & $K$ & Method & $B{=}$250k & $B{=}$1M & $B{=}$5M & $B{=}$250k & $B{=}$1M & $B{=}$5M & $B{=}$250k & $B{=}$1M & $B{=}$5M & $B{=}$250k & $B{=}$1M & $B{=}$5M \\
\midrule
\multicolumn{3}{c}{Standard MV} & .482$_{\pm.002}$ & .523$_{\pm.001}$ & .537$_{\pm.001}$ & .736$_{\pm.003}$ & .775$_{\pm.002}$ & .807$_{\pm.001}$ & .881$_{\pm.002}$ & .896$_{\pm.001}$ & .900$_{\pm.000}$ & .901$_{\pm.003}$ & .922$_{\pm.002}$ & .924$_{\pm.001}$ \\
\midrule
\multirow{9}{*}{0.75} & \multirow{3}{*}{1} & PC-linear & .495$_{\pm.002}$ & .524$_{\pm.001}$ & .535$_{\pm.001}$ & .751$_{\pm.003}$ & .797$_{\pm.002}$ & .812$_{\pm.001}$ & .887$_{\pm.002}$ & .899$_{\pm.001}$ & .900$_{\pm.000}$ & .903$_{\pm.003}$ & .928$_{\pm.002}$ & .934$_{\pm.001}$ \\
 & & PC-quadratic & .502$_{\pm.002}$ & .532$_{\pm.001}$ & .537$_{\pm.001}$ & .752$_{\pm.003}$ & \textbf{.800}$_{\pm.002}$ & .814$_{\pm.001}$ & \textbf{.888}$_{\pm.002}$ & .901$_{\pm.002}$ & .901$_{\pm.001}$ & .904$_{\pm.003}$ & .930$_{\pm.002}$ & .940$_{\pm.001}$ \\
 & & PC-cubic & .502$_{\pm.002}$ & .537$_{\pm.001}$ & .545$_{\pm.001}$ & \textbf{.753}$_{\pm.004}$ & .797$_{\pm.002}$ & .814$_{\pm.001}$ & .888$_{\pm.002}$ & \textbf{.902}$_{\pm.002}$ & .902$_{\pm.001}$ & .903$_{\pm.003}$ & .931$_{\pm.002}$ & \textbf{.947}$_{\pm.002}$ \\
\cmidrule(l){2-15}
 & \multirow{3}{*}{2} & PC-linear & .495$_{\pm.002}$ & .525$_{\pm.001}$ & .531$_{\pm.001}$ & .745$_{\pm.004}$ & .795$_{\pm.002}$ & .809$_{\pm.001}$ & .883$_{\pm.002}$ & .896$_{\pm.002}$ & .900$_{\pm.001}$ & .901$_{\pm.003}$ & .927$_{\pm.002}$ & .934$_{\pm.001}$ \\
 & & PC-quadratic & .500$_{\pm.002}$ & .532$_{\pm.001}$ & .537$_{\pm.001}$ & .745$_{\pm.004}$ & .799$_{\pm.002}$ & .815$_{\pm.001}$ & .883$_{\pm.003}$ & .899$_{\pm.002}$ & .901$_{\pm.001}$ & .900$_{\pm.003}$ & .929$_{\pm.002}$ & .942$_{\pm.002}$ \\
 & & PC-cubic & .498$_{\pm.002}$ & .530$_{\pm.001}$ & .540$_{\pm.001}$ & .745$_{\pm.004}$ & .798$_{\pm.003}$ & .817$_{\pm.001}$ & .882$_{\pm.003}$ & .901$_{\pm.002}$ & .905$_{\pm.001}$ & .900$_{\pm.003}$ & .929$_{\pm.002}$ & .946$_{\pm.002}$ \\
\cmidrule(l){2-15}
 & \multirow{3}{*}{3} & PC-linear & .500$_{\pm.002}$ & .528$_{\pm.001}$ & .534$_{\pm.001}$ & .740$_{\pm.004}$ & .793$_{\pm.002}$ & .811$_{\pm.001}$ & .881$_{\pm.002}$ & .896$_{\pm.002}$ & .900$_{\pm.001}$ & .899$_{\pm.003}$ & .926$_{\pm.002}$ & .936$_{\pm.001}$ \\
 & & PC-quadratic & \textbf{.504}$_{\pm.002}$ & .534$_{\pm.001}$ & .541$_{\pm.001}$ & .742$_{\pm.004}$ & .800$_{\pm.002}$ & .817$_{\pm.001}$ & .881$_{\pm.003}$ & .899$_{\pm.002}$ & .903$_{\pm.001}$ & .898$_{\pm.003}$ & .929$_{\pm.002}$ & .943$_{\pm.001}$ \\
 & & PC-cubic & .501$_{\pm.002}$ & .534$_{\pm.001}$ & .544$_{\pm.001}$ & .742$_{\pm.004}$ & .799$_{\pm.003}$ & \textbf{.822}$_{\pm.001}$ & .880$_{\pm.003}$ & .900$_{\pm.002}$ & \textbf{.905}$_{\pm.002}$ & .898$_{\pm.003}$ & .928$_{\pm.002}$ & .946$_{\pm.002}$ \\
\midrule
\multirow{9}{*}{0.50} & \multirow{3}{*}{1} & PC-linear & .489$_{\pm.002}$ & .524$_{\pm.001}$ & .540$_{\pm.001}$ & .728$_{\pm.003}$ & .773$_{\pm.003}$ & .805$_{\pm.002}$ & .886$_{\pm.002}$ & .900$_{\pm.001}$ & .900$_{\pm.000}$ & \textbf{.906}$_{\pm.003}$ & .931$_{\pm.002}$ & .935$_{\pm.001}$ \\
 & & PC-quadratic & .494$_{\pm.002}$ & .531$_{\pm.001}$ & .541$_{\pm.001}$ & .729$_{\pm.003}$ & .779$_{\pm.003}$ & .809$_{\pm.001}$ & .885$_{\pm.002}$ & .899$_{\pm.001}$ & .900$_{\pm.000}$ & .906$_{\pm.003}$ & \textbf{.932}$_{\pm.002}$ & .938$_{\pm.001}$ \\
 & & PC-cubic & .493$_{\pm.002}$ & .532$_{\pm.001}$ & .541$_{\pm.001}$ & .729$_{\pm.004}$ & .780$_{\pm.003}$ & .810$_{\pm.001}$ & .885$_{\pm.002}$ & .898$_{\pm.002}$ & .900$_{\pm.001}$ & .904$_{\pm.003}$ & .931$_{\pm.002}$ & .939$_{\pm.001}$ \\
\cmidrule(l){2-15}
 & \multirow{3}{*}{2} & PC-linear & .492$_{\pm.002}$ & .527$_{\pm.001}$ & .535$_{\pm.001}$ & .727$_{\pm.003}$ & .776$_{\pm.003}$ & .805$_{\pm.002}$ & .883$_{\pm.002}$ & .900$_{\pm.001}$ & .900$_{\pm.000}$ & .891$_{\pm.003}$ & .924$_{\pm.002}$ & .931$_{\pm.001}$ \\
 & & PC-quadratic & .496$_{\pm.002}$ & .533$_{\pm.001}$ & .542$_{\pm.001}$ & .727$_{\pm.004}$ & .780$_{\pm.003}$ & .809$_{\pm.001}$ & .880$_{\pm.003}$ & .900$_{\pm.002}$ & .900$_{\pm.000}$ & .888$_{\pm.003}$ & .924$_{\pm.002}$ & .933$_{\pm.001}$ \\
 & & PC-cubic & .495$_{\pm.002}$ & .532$_{\pm.001}$ & .545$_{\pm.001}$ & .726$_{\pm.004}$ & .779$_{\pm.003}$ & .807$_{\pm.001}$ & .880$_{\pm.003}$ & .898$_{\pm.002}$ & .901$_{\pm.001}$ & .885$_{\pm.003}$ & .922$_{\pm.002}$ & .935$_{\pm.001}$ \\
\cmidrule(l){2-15}
 & \multirow{3}{*}{3} & PC-linear & .493$_{\pm.002}$ & .530$_{\pm.001}$ & .541$_{\pm.001}$ & .721$_{\pm.004}$ & .773$_{\pm.003}$ & .801$_{\pm.002}$ & .881$_{\pm.003}$ & .900$_{\pm.001}$ & .900$_{\pm.000}$ & .889$_{\pm.003}$ & .924$_{\pm.002}$ & .932$_{\pm.001}$ \\
 & & PC-quadratic & .499$_{\pm.002}$ & .535$_{\pm.001}$ & .539$_{\pm.001}$ & .720$_{\pm.004}$ & .779$_{\pm.003}$ & .807$_{\pm.001}$ & .880$_{\pm.003}$ & .899$_{\pm.002}$ & .900$_{\pm.001}$ & .884$_{\pm.003}$ & .923$_{\pm.002}$ & .935$_{\pm.001}$ \\
 & & PC-cubic & .500$_{\pm.002}$ & .537$_{\pm.001}$ & .548$_{\pm.001}$ & .719$_{\pm.004}$ & .778$_{\pm.003}$ & .806$_{\pm.002}$ & .879$_{\pm.003}$ & .895$_{\pm.002}$ & .900$_{\pm.001}$ & .883$_{\pm.003}$ & .919$_{\pm.002}$ & .937$_{\pm.001}$ \\
\midrule
\multirow{9}{*}{0.25} & \multirow{3}{*}{1} & PC-linear & .487$_{\pm.002}$ & .527$_{\pm.001}$ & .538$_{\pm.001}$ & .721$_{\pm.003}$ & .765$_{\pm.003}$ & .798$_{\pm.002}$ & .880$_{\pm.002}$ & .896$_{\pm.001}$ & .900$_{\pm.000}$ & .897$_{\pm.003}$ & .925$_{\pm.002}$ & .933$_{\pm.001}$ \\
 & & PC-quadratic & .494$_{\pm.002}$ & .537$_{\pm.001}$ & .547$_{\pm.001}$ & .723$_{\pm.004}$ & .773$_{\pm.003}$ & .807$_{\pm.002}$ & .879$_{\pm.002}$ & .894$_{\pm.001}$ & .899$_{\pm.000}$ & .899$_{\pm.003}$ & .927$_{\pm.002}$ & .934$_{\pm.001}$ \\
 & & PC-cubic & .496$_{\pm.002}$ & .542$_{\pm.001}$ & .551$_{\pm.001}$ & .723$_{\pm.004}$ & .774$_{\pm.003}$ & .812$_{\pm.001}$ & .879$_{\pm.002}$ & .891$_{\pm.002}$ & .897$_{\pm.001}$ & .898$_{\pm.003}$ & .927$_{\pm.002}$ & .935$_{\pm.001}$ \\
\cmidrule(l){2-15}
 & \multirow{3}{*}{2} & PC-linear & .495$_{\pm.002}$ & .538$_{\pm.001}$ & .549$_{\pm.001}$ & .714$_{\pm.004}$ & .761$_{\pm.003}$ & .790$_{\pm.002}$ & .879$_{\pm.002}$ & .894$_{\pm.001}$ & .900$_{\pm.000}$ & .897$_{\pm.003}$ & .928$_{\pm.002}$ & .933$_{\pm.001}$ \\
 & & PC-quadratic & .500$_{\pm.002}$ & .544$_{\pm.001}$ & .550$_{\pm.001}$ & .715$_{\pm.004}$ & .762$_{\pm.003}$ & .798$_{\pm.002}$ & .877$_{\pm.002}$ & .892$_{\pm.001}$ & .899$_{\pm.001}$ & .893$_{\pm.003}$ & .927$_{\pm.002}$ & .933$_{\pm.001}$ \\
 & & PC-cubic & .500$_{\pm.002}$ & \textbf{.548}$_{\pm.001}$ & .554$_{\pm.001}$ & .716$_{\pm.004}$ & .762$_{\pm.003}$ & .802$_{\pm.002}$ & .877$_{\pm.002}$ & .888$_{\pm.002}$ & .896$_{\pm.001}$ & .891$_{\pm.003}$ & .926$_{\pm.002}$ & .933$_{\pm.000}$ \\
\cmidrule(l){2-15}
 & \multirow{3}{*}{3} & PC-linear & .488$_{\pm.002}$ & .535$_{\pm.001}$ & .549$_{\pm.001}$ & .707$_{\pm.004}$ & .761$_{\pm.003}$ & .793$_{\pm.002}$ & .876$_{\pm.003}$ & .895$_{\pm.001}$ & .900$_{\pm.000}$ & .887$_{\pm.003}$ & .924$_{\pm.002}$ & .930$_{\pm.001}$ \\
 & & PC-quadratic & .493$_{\pm.002}$ & .544$_{\pm.001}$ & .553$_{\pm.001}$ & .708$_{\pm.004}$ & .761$_{\pm.003}$ & .795$_{\pm.002}$ & .875$_{\pm.003}$ & .893$_{\pm.001}$ & .899$_{\pm.001}$ & .883$_{\pm.003}$ & .923$_{\pm.002}$ & .931$_{\pm.001}$ \\
 & & PC-cubic & .493$_{\pm.002}$ & .547$_{\pm.001}$ & \textbf{.562}$_{\pm.001}$ & .709$_{\pm.004}$ & .761$_{\pm.003}$ & .794$_{\pm.002}$ & .874$_{\pm.003}$ & .890$_{\pm.002}$ & .896$_{\pm.001}$ & .882$_{\pm.003}$ & .921$_{\pm.002}$ & .931$_{\pm.001}$ \\
\bottomrule
\end{tabular}%
}%
\end{table}

Two observations follow. At 250k tokens, high-cost configurations (large $K$ and small $\tau$) underperform Standard MV because each regeneration eats into the budget for new groups: $\tau{=}0.25$, $K{=}3$ PC-cubic (cost ${\approx}3.25{\times}$) reaches .493, .709, .874, and .882 on FrontierScience-Olympiad, HMMT, AIME, and Brumo, underperforming Standard MV on the latter three (.736, .881, .901), while the low-cost $\tau{=}0.75$, $K{=}1$ (cost ${\approx}1.25{\times}$) outperforms Standard MV on all four. At 1M--5M budgets, the default $\tau{=}0.75$, $K{=}1$ stays close to the per-cell best on the three math benchmarks (HMMT, AIME, Brumo). FrontierScience-Olympiad is the exception, where the larger discrimination gap at deeper truncation repays the higher per-group cost (PC-cubic at $\tau{=}0.25$, $K{=}3$ reaches .547 at 1M and .562 at 5M on FrontierScience-Olympiad, against .537 and .545 at $\tau{=}0.75$, $K{=}1$). $K{=}1$ is sufficient on the math benchmarks at 1M, while $K{=}3$ yields modest improvements on HMMT at 5M (PC-cubic .822 at $K{=}3$ vs.\ .814 at $K{=}1$, both at $\tau{=}0.75$). Overall, the default $\tau{=}0.75$, $K{=}1$ remains effective across the four benchmarks without being tuned to them, and the results of the main paper in Table~\ref{tab:wmv} are therefore not sensitive to a particular choice of $(\tau, K)$.

\subsection{Robustness to Judge Choice}
\label{app:judge_robustness}

We show that PC's per-cell advantage over the best baseline holds regardless of whether answer equivalence is evaluated by the model itself or by a stronger external grader.

The main paper uses each evaluated model itself to determine answer equivalence (Appendix~\ref{app:equivalence_judging}).
One possible concern is that the evaluation may be biased by the limited capability of the model itself. As a robustness check, this section reports results using a stronger external grader. Namely, we re-scored the released pool with Claude Sonnet 4.6~\citep{anthropic2026sonnet46} on four models (GPT-OSS-120B, GPT-OSS-20B, Nemotron3-30B, Nemotron2-9B) across the three LLM-judged benchmarks (Brumo~2025, HMMT Feb~2026, FrontierScience-Olympiad), giving 12 cells. AIME~2025 is exact-match and is therefore judge-free.

We re-cluster the released generation pool using the external judge's YES/NO equivalence judgments instead of the model's own, then recompute per-sample correctness labels along with $\overline{\mathrm{AUROC}}$, PC-WMV accuracy, and the baseline accuracies. No new samples are drawn from the model.

\paragraph{PC's \texorpdfstring{$\overline{\mathrm{AUROC}}$}{AUROC} advantage is preserved on every cell.}
Figure~\ref{fig:judge_robustness_lead} plots, per cell, the gap $\Delta\overline{\mathrm{AUROC}} := \overline{\mathrm{AUROC}}_{\mathrm{PC}} - \overline{\mathrm{AUROC}}_{\mathrm{best\;baseline}}$ under self-judge ($x$) against the same quantity under the external judge ($y$). All 12 points fall in the same quadrant under both judges: 9 cells with $\Delta\overline{\mathrm{AUROC}} > 0$ (PC ahead of the best baseline) and 3 cells with $\Delta\overline{\mathrm{AUROC}} < 0$. No cell flips. Per-cell numbers are in Table~\ref{tab:judge_robustness}.

\begin{figure}[htbp]
\centering
\includegraphics[width=0.55\linewidth]{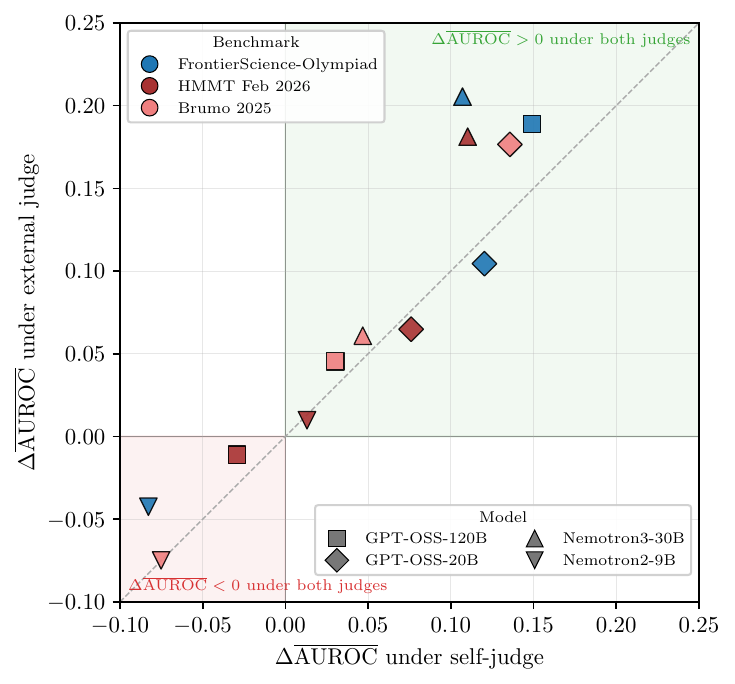}
\caption{\textbf{PC's $\overline{\mathrm{AUROC}}$ advantage is preserved across judges.} Each axis is $\Delta\overline{\mathrm{AUROC}} := \overline{\mathrm{AUROC}}_{\mathrm{PC}} - \overline{\mathrm{AUROC}}_{\mathrm{best\;baseline}}$, under self-judge ($x$) and an external judge (Claude Sonnet 4.6, $y$); one point per cell (12 cells), and the dashed line is $y{=}x$. All 12 points lie in the top-right or bottom-left quadrant; no cell crosses an axis.}
\label{fig:judge_robustness_lead}
\end{figure}

\begin{table}[htbp]
\centering\small
\caption{\textbf{Per-cell PC and best-baseline $\overline{\mathrm{AUROC}}$ under self-judge vs.\ an external judge (Claude Sonnet 4.6).} Bold marks the higher of PC and the best baseline within each cell, with the best baseline's identity in parentheses. AIME~2025 is exact-match (judge-free) and is omitted.}
\label{tab:judge_robustness}
\resizebox{\textwidth}{!}{%
\begin{tabular}{ll cc cc}
\toprule
\multirow{2}{*}{Model} & \multirow{2}{*}{Benchmark} & \multicolumn{2}{c}{PC} & \multicolumn{2}{c}{best baseline} \\
\cmidrule(lr){3-4}\cmidrule(lr){5-6}
 & & self & external & self & external \\
\midrule
\multirow{3}{*}{GPT-OSS-120B} & FrontierScience-Olympiad & \textbf{0.719} & \textbf{0.778} & 0.570\,(Self-certainty) & 0.589\,(Self-certainty) \\
 & HMMT Feb~2026 & 0.698 & 0.711 & \textbf{0.728}\,(DeepConf tail) & \textbf{0.722}\,(DeepConf tail) \\
 & Brumo~2025 & \textbf{0.636} & \textbf{0.660} & 0.606\,(DeepConf tail) & 0.615\,(DeepConf tail) \\
\addlinespace
\multirow{3}{*}{GPT-OSS-20B} & FrontierScience-Olympiad & \textbf{0.707} & \textbf{0.718} & 0.587\,(P(True)) & 0.613\,(P(True)) \\
 & HMMT Feb~2026 & \textbf{0.743} & \textbf{0.740} & 0.667\,(P(True)) & 0.675\,(P(True)) \\
 & Brumo~2025 & \textbf{0.719} & \textbf{0.736} & 0.583\,(P(True)) & 0.559\,(P(True)) \\
\addlinespace
\multirow{3}{*}{Nemotron3-30B} & FrontierScience-Olympiad & \textbf{0.631} & \textbf{0.733} & 0.524\,(Verbal 0--100) & 0.527\,(Verbal 0--100) \\
 & HMMT Feb~2026 & \textbf{0.698} & \textbf{0.766} & 0.588\,(DeepConf tail) & 0.584\,(P(True)) \\
 & Brumo~2025 & \textbf{0.801} & \textbf{0.823} & 0.755\,(DeepConf tail) & 0.762\,(DeepConf tail) \\
\addlinespace
\multirow{3}{*}{Nemotron2-9B} & FrontierScience-Olympiad & 0.537 & 0.603 & \textbf{0.620}\,(P(True)) & \textbf{0.646}\,(P(True)) \\
 & HMMT Feb~2026 & \textbf{0.648} & \textbf{0.648} & 0.635\,(P(True)) & 0.639\,(P(True)) \\
 & Brumo~2025 & 0.633 & 0.633 & \textbf{0.708}\,(P(True)) & \textbf{0.708}\,(P(True)) \\
\bottomrule
\end{tabular}%
}
\end{table}

\paragraph{WMV cost--accuracy curves under both judges nearly overlap.}
Figure~\ref{fig:judge_robustness_grid} overlays the WMV cost--accuracy curves under self-judge (faded) and the external judge (saturated) for the four primary methods (PC-cubic, Standard MV, DeepConf tail, P(True)). On Brumo and HMMT, the two layers are essentially indistinguishable. On FrontierScience-Olympiad, the external-judge curves sit above their self-judge counterparts by a roughly uniform vertical offset; PC-cubic stays the top (or tied-top) curve in every panel where it leads under self-judge, and stays below where it trails. The per-cell ordering of methods at any operating point is therefore preserved.

\begin{figure}[htbp]
\centering
\includegraphics[width=0.9\linewidth]{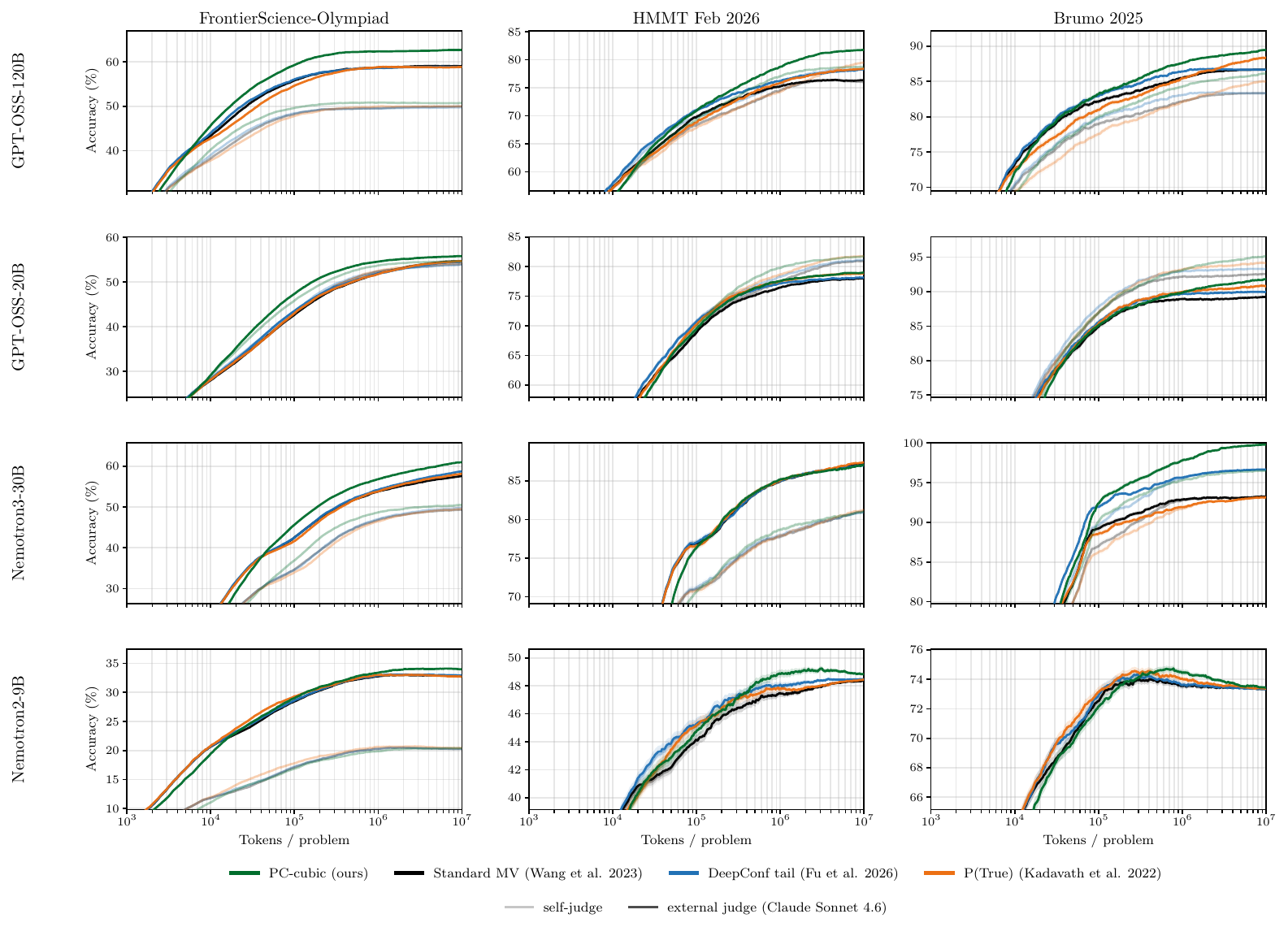}
\caption{\textbf{WMV cost--accuracy curves are stable across judges.} 4 models $\times$ 3 benchmarks; faded curves use the model's own judge, saturated curves use an external judge (Claude Sonnet 4.6). The four methods (PC-cubic, Standard MV, DeepConf tail, P(True)) match Figure~\ref{fig:cost_accuracy_all}. The relative ordering of methods is preserved in every panel; the largest absolute shifts between the two judges are in the FrontierScience-Olympiad column.}
\label{fig:judge_robustness_grid}
\end{figure}

\paragraph{Pool-level shifts are concentrated on FrontierScience-Olympiad.}
On Brumo and HMMT, the two judges disagree on fewer than $4\%$ of per-sample correctness labels per cell, and Pass@1 differs by at most $0.06$. On FrontierScience-Olympiad the per-sample flip rate is higher ($9\%$ to $17\%$) and Pass@1 rises by up to $0.09$ on three of the four cells; the fourth (GPT-OSS-20B) has compensating up- and down-flips that leave Pass@1 within $0.01$ despite an $8.6\%$ flip rate. The shift reflects the external judge recognizing more equivalent surface forms than the model's own. The added equivalences strengthen PC's relative position rather than weaken it: in Figure~\ref{fig:judge_robustness_lead}, three of the four FrontierScience-Olympiad points (blue) sit above the $y{=}x$ diagonal, and the leftmost column of Figure~\ref{fig:judge_robustness_grid} shows PC's gap over the baselines widening on those cells under the external judge.

\FloatBarrier

\subsection{Reliability Signals vs.\ Pass@1}
\label{app:reliability_vs_pass1}

This subsection backs the analysis of Section~\ref{sec:what_drives} (rising $r_C$ and a smaller, problem-dependent $r_W$ slope within each category) with full panels and slope tables for the reproduction rates $r_C, r_W$ under the GLM estimator (Appendix~\ref{app:pass1_vs_rates_glm}), verifies that the qualitative pattern survives per-benchmark refits (Appendix~\ref{app:pass1_vs_rates_per_benchmark}) and two alternative estimators (Appendix~\ref{app:pass1_vs_rates_binned}), and reports the corresponding Pass@1 view for the per-sample baseline signals (Appendix~\ref{app:pass1_vs_signals}).

\subsubsection{GLM Panels and Slopes}
\label{app:pass1_vs_rates_glm}

Figure~\ref{fig:pass1_vs_rates_remaining} adds GLM panels for the two models not shown in Figure~\ref{fig:pass1_vs_rates} (GPT-OSS-20B, Nemotron2-9B), and Table~\ref{tab:glm_beta} reports the slopes for all five. The qualitative pattern is consistent: $r_C$ rises with Pass@1 on both categories, while $r_W$ has a smaller, problem-dependent slope and sits at a lower level on Science than on Math. The GLM fit kernel and cluster-bootstrap inference protocol below are shared with the alternative estimators of Appendix~\ref{app:pass1_vs_rates_binned}.

\begin{figure}[!htbp]
\centering
\resizebox{0.667\textwidth}{!}{%
  \includegraphics{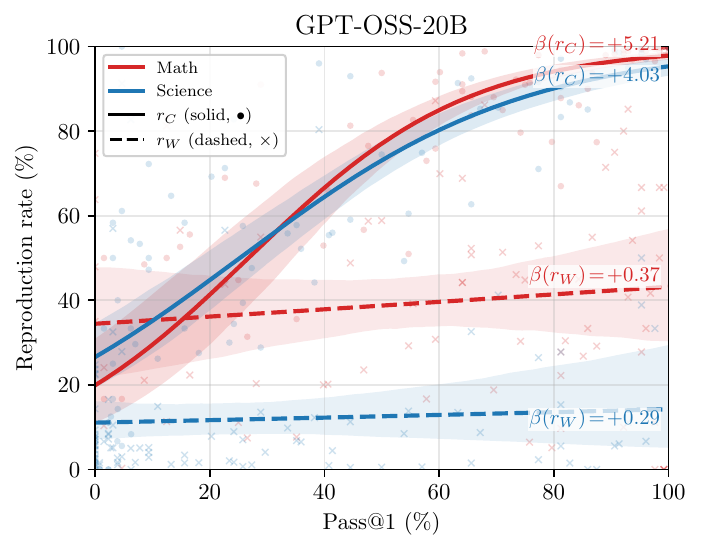}%
  \includegraphics{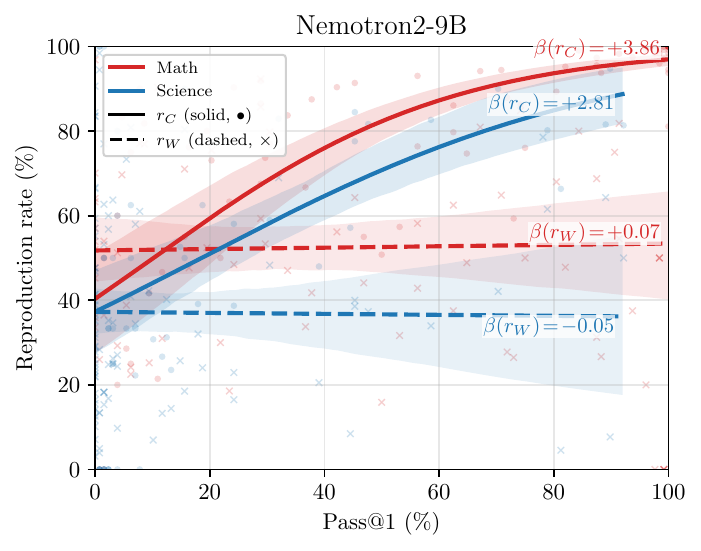}%
}
\caption{\textbf{Per-problem reproduction rates vs.\ Pass@1, remaining models.} GLM fits for GPT-OSS-20B and Nemotron2-9B, complementing Figure~\ref{fig:pass1_vs_rates}. Curves and overlays follow the same convention.}
\label{fig:pass1_vs_rates_remaining}
\end{figure}

\begin{table}[!htbp]
\centering
\caption{\textbf{Logistic GLM slope estimates on $\textnormal{logit}(r) = \beta_0 + \beta \cdot$ \textnormal{Pass@1} per (model, category).} $2\sigma$ CIs are from cluster bootstrap over problems (1000 replicates). The $p$-column gives the two-sided bootstrap $p$-value for $H_0: \beta = 0$, with ``$<$.001'' indicating that no replicate crossed zero.}
\label{tab:glm_beta}
\footnotesize
\begin{tabular}{llccccc}
\toprule
\multirow{2}{*}{Model} & \multirow{2}{*}{Category} & \multicolumn{2}{c}{$\beta(r_C)$} & \multicolumn{2}{c}{$\beta(r_W)$} \\
\cmidrule(lr){3-4}\cmidrule(lr){5-6}
 & & $\hat\beta$ [$2\sigma$ CI] & $p$   & $\hat\beta$ [$2\sigma$ CI] & $p$ \\
\midrule
\multirow{2}{*}{GPT-OSS-120B} & Science & $+4.28$ $[+3.51, +5.14]$ & $<$.001 & $-0.06$ $[-1.20, +0.99]$ & $0.96$ \\
 & Math & $+5.10$ $[+4.05, +6.26]$ & $<$.001 & $+1.14$ $[+0.04, +2.30]$ & $0.034$ \\
\addlinespace
\multirow{2}{*}{GPT-OSS-20B} & Science & $+4.03$ $[+3.36, +4.77]$ & $<$.001 & $+0.29$ $[-1.05, +1.41]$ & $0.66$ \\
 & Math & $+5.21$ $[+4.38, +6.25]$ & $<$.001 & $+0.37$ $[-0.67, +1.56]$ & $0.48$ \\
\addlinespace
\multirow{2}{*}{Nemotron3-30B} & Science & $+3.67$ $[+2.90, +4.37]$ & $<$.001 & $+0.14$ $[-1.03, +1.20]$ & $0.8$ \\
 & Math & $+3.18$ $[+2.17, +4.37]$ & $<$.001 & $-0.20$ $[-1.40, +0.79]$ & $0.67$ \\
\addlinespace
\multirow{2}{*}{Nemotron2-9B} & Science & $+2.81$ $[+1.89, +4.07]$ & $<$.001 & $-0.05$ $[-1.25, +0.90]$ & $0.89$ \\
 & Math & $+3.86$ $[+3.21, +4.69]$ & $<$.001 & $+0.07$ $[-0.69, +0.74]$ & $0.85$ \\
\addlinespace
\multirow{2}{*}{Ministral3-14B} & Science & $+2.66$ $[+1.27, +4.09]$ & $0.002$ & $-0.61$ $[-1.98, +0.55]$ & $0.27$ \\
 & Math & $+2.94$ $[+2.22, +3.67]$ & $<$.001 & $-0.75$ $[-1.45, -0.08]$ & $0.022$ \\
\bottomrule
\end{tabular}
\end{table}

\paragraph{GLM fit kernel.}
For each outcome $Y \in \{C, W\}$ (regenerations seeded from a correct ($C$) or wrong ($W$) trace, matching $r_C$ and $r_W$ in Section~\ref{sec:prefix_consistency}), fit a logistic regression $\Pr(Y_j = 1 \mid \text{Pass@1}_j) = \sigma(\beta_{0,Y} + \beta_Y \cdot \text{Pass@1}_j)$ on trial-level data, where each trial $j$ inherits the Pass@1 of its problem, by maximum likelihood (\texttt{statsmodels.GLM}, Binomial family, default iteratively reweighted least squares (IRLS) solver). Reported quantities are the slope $\beta_Y$ and the predicted curve on a fixed Pass@1 grid, masked to the cell's observed support.

\paragraph{Inference protocol.}
A (model, category) cell consists of $N$ per-problem records, each carrying its Pass@1 and two Bernoulli trial streams: $r_{C,i,j} \in \{0,1\}$ for the $n_{C,i}$ regenerations seeded from a correct trace and $r_{W,i,j}$ for the $n_{W,i}$ from a wrong trace ($r = 1$ if the regenerated answer matches the seed, $0$ otherwise). Each problem $i$ contributes $n_{C,i}$ trials $\bigl(\text{Pass@1}_i,\, r_{C,i,j}\bigr)$ to the $r_C$ regression (and $n_{W,i}$ trials $\bigl(\text{Pass@1}_i,\, r_{W,i,j}\bigr)$ to $r_W$), where the same Pass@1 is shared across all trials from the same problem. We refer to this trial-level dataset as the expanded trials. Point estimates apply each estimator's fit kernel $\hat f_Y$ ($Y \in \{C, W\}$) to the expanded trials.

Confidence intervals are derived using $B = 1000$ cluster-bootstrap iterations. At iteration $b$, we resample $N$ problem indices with replacement, then refit $\hat f_C^{(b)}$ and $\hat f_W^{(b)}$ on the corresponding trial set. 
Within-problem correlation among regenerations that share a prefix is corrected for by resampling problems rather than trials; trial-level Bernoulli standard errors would otherwise be underestimated by a factor of $2$ to $4$ on these data. Because $\hat f_C^{(b)}$ and $\hat f_W^{(b)}$ share the same resample within an iteration, the difference $D^{(b)} = \hat f_C^{(b)} - \hat f_W^{(b)}$ inherits the correct joint distribution, and CIs for $D$ are read off $\{D^{(b)}\}$ directly rather than summed from per-fit half-widths.

Pointwise $2\sigma$ CIs are percentile intervals at level $2(1 - \Phi(2)) \approx 0.0455$, taken per Pass@1 grid point for continuous estimators and per bin for the binned estimator. The two-sided bootstrap $p$-value for $H_0: \beta = 0$ in Table~\ref{tab:glm_beta} is $p = \min\{1,\, 2\min(\Pr_b[\beta^{(b)} \le 0],\, \Pr_b[\beta^{(b)} \ge 0])\}$, with ``$<$.001'' indicating no replicate crossed zero. The seed is fixed at $0$. A small convergence check (3 seeds, several $B$, GPT-OSS-120B Math $r_W$) confirmed stability from $B = 1000$.

\FloatBarrier
\subsubsection{Robustness under Per-Benchmark Pooling}
\label{app:pass1_vs_rates_per_benchmark}

Section~\ref{sec:what_drives} pools the three Math benchmarks (HMMT Feb~2026, AIME~2025, Brumo~2025) into a single Math curve per model. Since the three span different Pass@1 ranges within a fixed model, the pooled slope could in principle reflect between-benchmark Pass@1 differences rather than a within-benchmark Pass@1 effect. To check for this confound we refit the same logistic GLM on each (model, benchmark) cell separately, with the fit kernel and cluster-bootstrap protocol of Appendix~\ref{app:pass1_vs_rates_glm}. Figure~\ref{fig:pass1_vs_rates_per_benchmark} plots the four per-benchmark curves per model in a single panel, and Table~\ref{tab:glm_beta_per_benchmark} reports the slopes.

\begin{figure}[!htbp]
\centering
\resizebox{0.8\textwidth}{!}{%
  \includegraphics{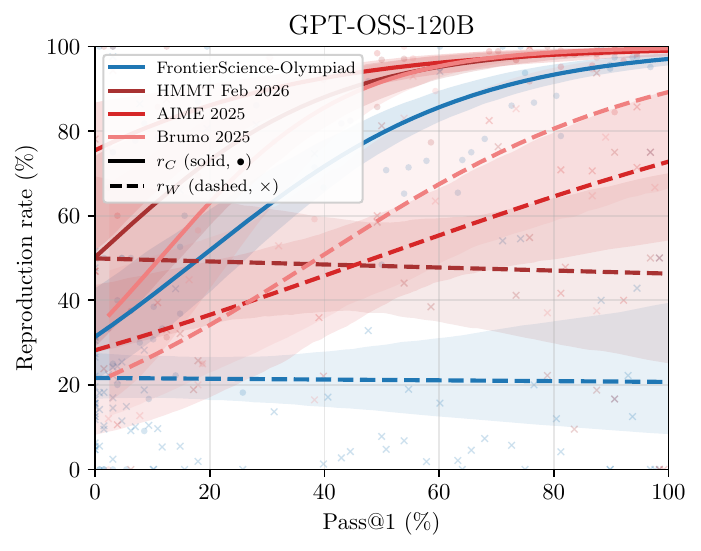}%
  \includegraphics{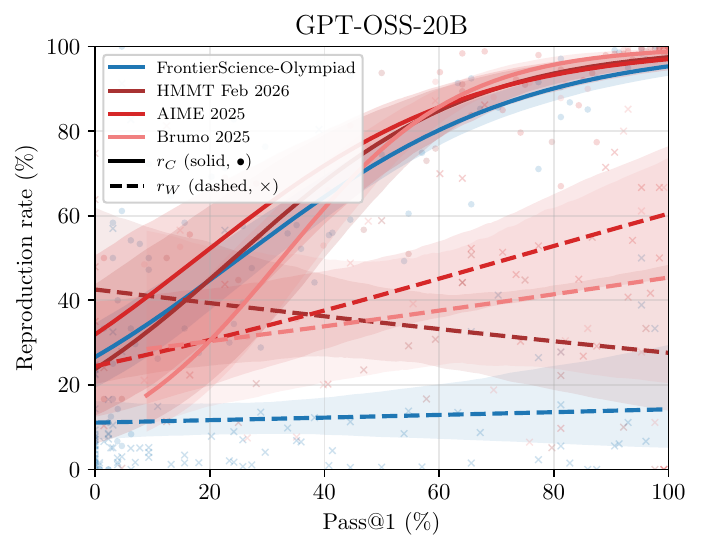}%
  \includegraphics{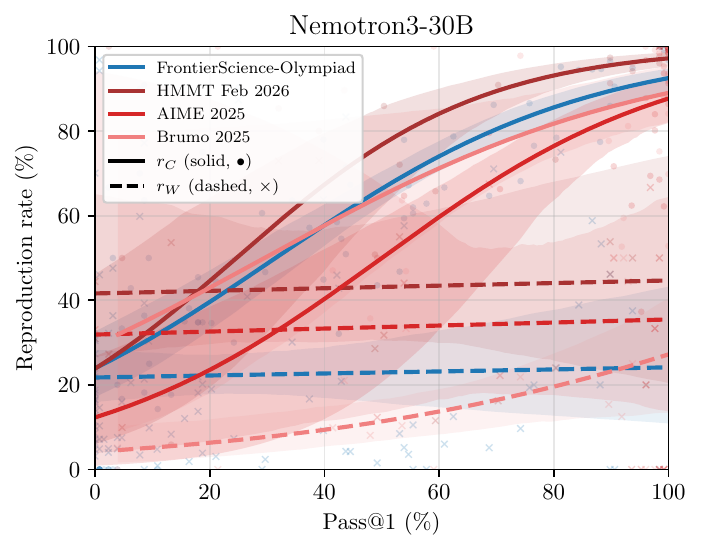}%
}\\[0.5em]
\resizebox{0.533\textwidth}{!}{%
  \includegraphics{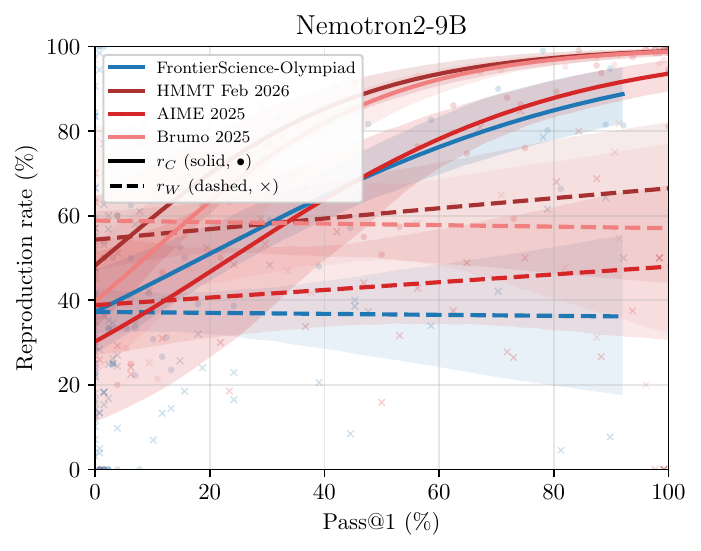}%
  \includegraphics{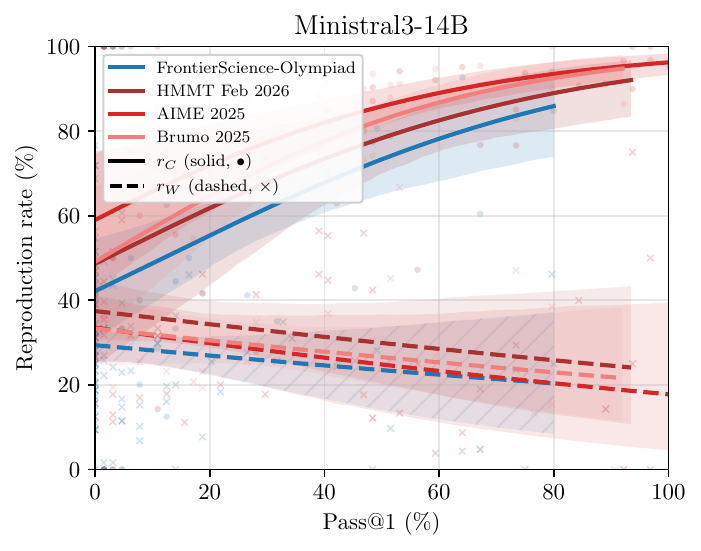}%
}
\caption{\textbf{Per-problem reproduction rates vs.\ Pass@1, per benchmark.} One panel per model. Solid lines with $\bullet$ are $r_C$ and dashed lines with $\times$ are $r_W$, both logistic-regression fits per benchmark with shaded $2\sigma$ cluster-bootstrap CIs over problems. Scatter overlays are per-problem empirical rates. Companion to Figure~\ref{fig:pass1_vs_rates}, with Math (HMMT Feb~2026, AIME~2025, Brumo~2025) split into separate curves rather than pooled.}
\label{fig:pass1_vs_rates_per_benchmark}
\end{figure}

\begin{table}[!htbp]
\centering
\caption{\textbf{Per-(model, benchmark) GLM slope estimates on $\textnormal{logit}(r) = \beta_0 + \beta \cdot$ \textnormal{Pass@1}.} Per-benchmark variant of Table~\ref{tab:glm_beta}, fit on each (model, benchmark) cell separately. $2\sigma$ CIs from a cluster bootstrap over problems (1000 replicates), and two-sided bootstrap $p$-values for $H_0: \beta = 0$ (``$<$.001'' indicates that no replicate crossed zero).}
\label{tab:glm_beta_per_benchmark}
\footnotesize
\begin{tabular}{llccccc}
\toprule
\multirow{2}{*}{Model} & \multirow{2}{*}{Benchmark} & \multicolumn{2}{c}{$\beta(r_C)$} & \multicolumn{2}{c}{$\beta(r_W)$} \\
\cmidrule(lr){3-4}\cmidrule(lr){5-6}
 & & $\hat\beta$ [$2\sigma$ CI] & $p$   & $\hat\beta$ [$2\sigma$ CI] & $p$ \\
\midrule
\multirow{4}{*}{GPT-OSS-120B} & FrontierScience-Olympiad & $+4.28$ $[+3.51, +5.14]$ & $<$.001 & $-0.06$ $[-1.20, +0.99]$ & $0.96$ \\
 & HMMT Feb~2026 & $+4.97$ $[+3.76, +6.45]$ & $<$.001 & $-0.14$ $[-1.58, +1.65]$ & $0.83$ \\
 & AIME~2025 & $+3.50$ $[+2.01, +5.80]$ & $<$.001 & $+1.92$ $[+0.67, +4.50]$ & $0.002$ \\
 & Brumo~2025 & $+6.21$ $[+4.53, +8.34]$ & $<$.001 & $+3.46$ $[+1.25, +5.75]$ & $0.004$ \\
\addlinespace
\multirow{4}{*}{GPT-OSS-20B} & FrontierScience-Olympiad & $+4.03$ $[+3.36, +4.77]$ & $<$.001 & $+0.29$ $[-1.05, +1.41]$ & $0.66$ \\
 & HMMT Feb~2026 & $+4.82$ $[+3.61, +6.83]$ & $<$.001 & $-0.66$ $[-2.01, +1.17]$ & $0.4$ \\
 & AIME~2025 & $+4.24$ $[+3.05, +6.08]$ & $<$.001 & $+1.55$ $[+0.45, +2.91]$ & $0.006$ \\
 & Brumo~2025 & $+6.55$ $[+5.28, +8.25]$ & $<$.001 & $+0.81$ $[-1.34, +3.16]$ & $0.42$ \\
\addlinespace
\multirow{4}{*}{Nemotron3-30B} & FrontierScience-Olympiad & $+3.67$ $[+2.90, +4.37]$ & $<$.001 & $+0.14$ $[-1.03, +1.20]$ & $0.8$ \\
 & HMMT Feb~2026 & $+4.72$ $[+3.48, +6.71]$ & $<$.001 & $+0.13$ $[-1.75, +1.68]$ & $0.85$ \\
 & AIME~2025 & $+3.93$ $[+2.59, +6.97]$ & $<$.001 & $+0.16$ $[-4.10, +3.47]$ & $0.86$ \\
 & Brumo~2025 & $+2.97$ $[+0.40, +5.26]$ & $0.016$ & $+2.14$ $[+0.68, +3.59]$ & $0.018$ \\
\addlinespace
\multirow{4}{*}{Nemotron2-9B} & FrontierScience-Olympiad & $+2.81$ $[+1.89, +4.07]$ & $<$.001 & $-0.05$ $[-1.25, +0.90]$ & $0.89$ \\
 & HMMT Feb~2026 & $+4.57$ $[+3.43, +5.93]$ & $<$.001 & $+0.51$ $[-0.49, +1.45]$ & $0.26$ \\
 & AIME~2025 & $+3.52$ $[+2.30, +5.31]$ & $<$.001 & $+0.37$ $[-0.74, +1.40]$ & $0.42$ \\
 & Brumo~2025 & $+4.83$ $[+3.96, +6.00]$ & $<$.001 & $-0.07$ $[-2.18, +1.47]$ & $1$ \\
\addlinespace
\multirow{4}{*}{Ministral3-14B} & FrontierScience-Olympiad & $+2.66$ $[+1.27, +4.09]$ & $0.002$ & $-0.61$ $[-1.98, +0.55]$ & $0.27$ \\
 & HMMT Feb~2026 & $+2.68$ $[+0.87, +4.48]$ & $0.006$ & $-0.68$ $[-1.84, +0.49]$ & $0.2$ \\
 & AIME~2025 & $+2.88$ $[+1.71, +4.13]$ & $<$.001 & $-0.85$ $[-2.44, +0.42]$ & $0.22$ \\
 & Brumo~2025 & $+3.19$ $[+2.01, +4.98]$ & $<$.001 & $-0.65$ $[-1.69, +0.56]$ & $0.28$ \\
\bottomrule
\end{tabular}
\end{table}

The two claims in Section~\ref{sec:what_drives} both survive the per-benchmark refit. (i)~$\beta(r_C) > 0$ on all $20$ (model, benchmark) cells with $p < 0.05$ (in fact $p < 0.001$ on $17$ of $20$ cells), so the rising-$r_C$ pattern is within-benchmark and not a pooling artefact. (ii)~$|\beta(r_W)| < \beta(r_C)$ on all $20$ cells, so the smaller-magnitude statement is also within-benchmark.

The per-benchmark view further clarifies the two non-zero pooled $\beta(r_W)$ values from Section~\ref{sec:what_drives}. The GPT-OSS-120B Math value $+1.14$ is driven by AIME~2025 ($+1.92$, $p = 0.002$) and Brumo~2025 ($+3.46$, $p = 0.004$), with HMMT Feb~2026 separately null ($-0.14$, $p = 0.83$): the pooled value reflects a real within-benchmark Pass@1 effect on AIME and Brumo, but the effect is benchmark-specific rather than uniform across Math. The Ministral3-14B Math value $-0.75$ is uniform across all three benchmarks ($-0.85$, $-0.68$, $-0.65$), i.e.\ a within-benchmark effect that holds Math-wide. In neither case is the pooled slope a between-benchmark artefact.

\FloatBarrier
\subsubsection{Robustness under Alternative Estimators}
\label{app:pass1_vs_rates_binned}

In Figure~\ref{fig:pass1_vs_rates}, we fit a logit-linear model. 
To verify that the conclusions in the main paper (that is, rising $r_C$, a smaller and problem-dependent $r_W$ slope within each category, and lower $r_W$ on Science than on Math) do not depend on that form, we rerun the same plot for all five models under two alternative estimators, sharing the cluster-bootstrap protocol of Appendix~\ref{app:pass1_vs_rates_glm}.

\paragraph{Trial-pooled binned estimator (Figure~\ref{fig:pass1_vs_rates_binned}).}
This estimator reads reproduction rates directly off the data, removing the GLM's logit-linear assumption at the cost of bin-edge sensitivity. Bin edges come from the full-data Pass@1 quantiles, with $5$ equal-count bins per category deduplicated by \texttt{np.unique}. A Pass@1 mass at zero can collapse adjacent edges, e.g.\ Ministral3-14B Science with $53$ problems at zero resolves to $3$ effective bins. Each bootstrap iteration assigns problems to those fixed edges and reports the trial-pooled rate $\sum_{i \in \mathrm{bin}} k_{Y,i} \big/ \sum_{i \in \mathrm{bin}} n_{Y,i}$, where $k_{Y,i}$ and $n_{Y,i}$ count reproductions and trials.

\begin{figure}[!htbp]
\centering
\resizebox{0.8\textwidth}{!}{%
  \includegraphics{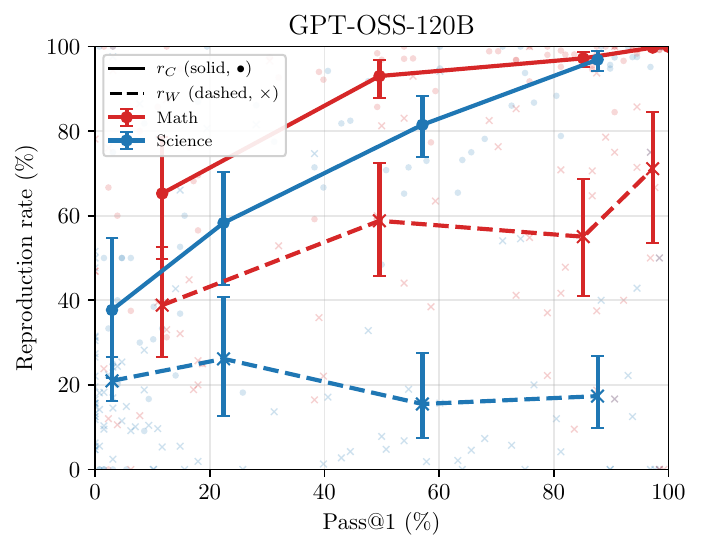}%
  \includegraphics{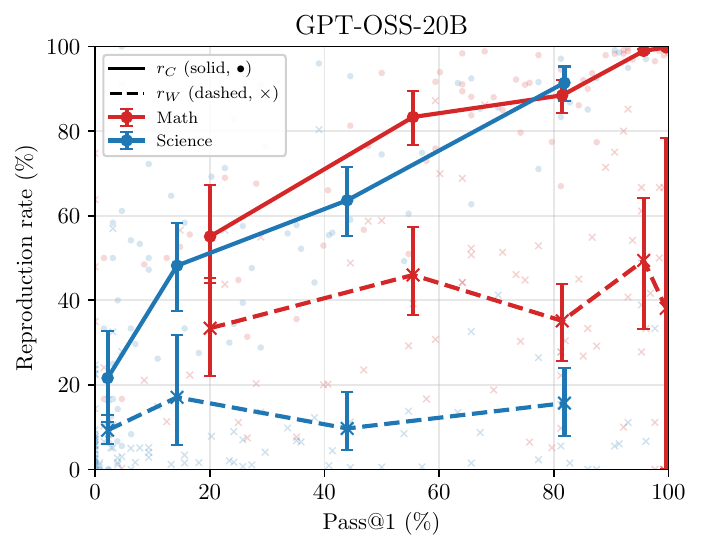}%
  \includegraphics{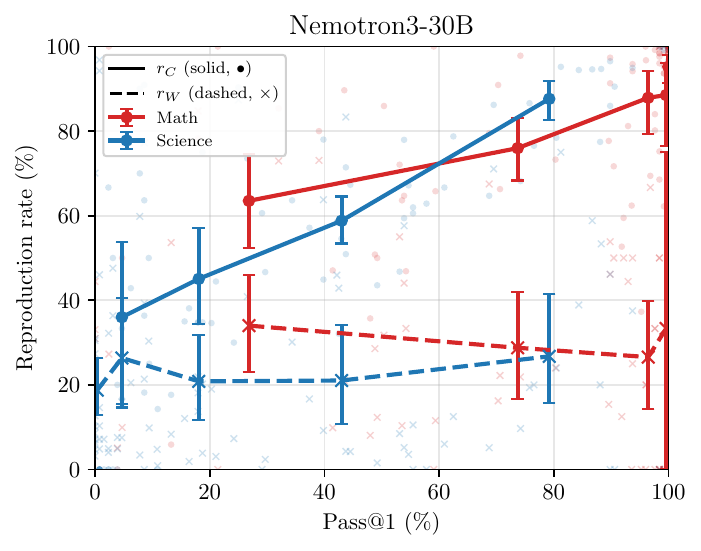}%
}\\[0.5em]
\resizebox{0.533\textwidth}{!}{%
  \includegraphics{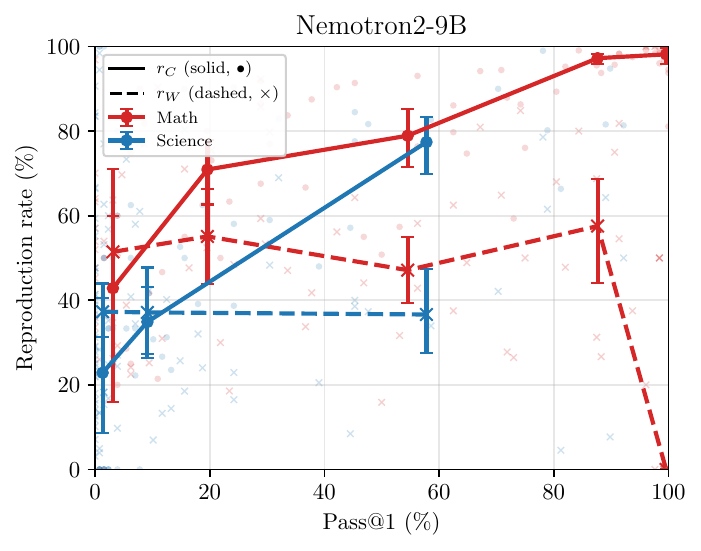}%
  \includegraphics{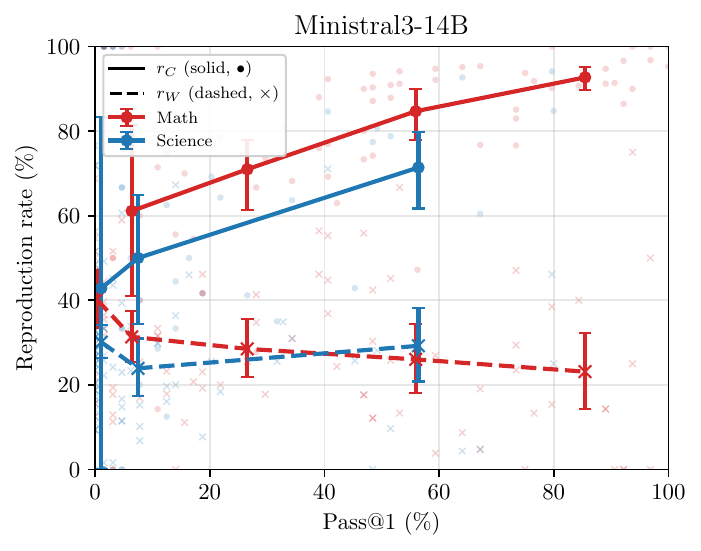}%
}
\caption{\textbf{Binned view of Figure~\ref{fig:pass1_vs_rates}, all five models.} Trial-pooled $r_C$ (solid, $\bullet$) and $r_W$ (dashed, $\times$) over five Pass@1 quantile bins per category, with $2\sigma$ cluster-bootstrap CIs (1000 replicates, same inference protocol as the GLM in the main paper).}
\label{fig:pass1_vs_rates_binned}
\end{figure}

\paragraph{LOWESS smoother (Figure~\ref{fig:pass1_vs_rates_lowess}).}
Because this estimator does not assume a monotone or parametric relationship, it can reveal any non-monotonic structure in $r_W$ that a logit-linear model would inadvertently hide.
We use \texttt{statsmodels.nonparametric.smoothers\_lowess} on trial-level $(\text{Pass@1}_j, Y_j)$ pairs with span $\mathrm{frac} = 0.5$ and robustness iterations $\mathtt{it} = 0$. The default $\mathtt{it} = 3$ uses bisquare residual weights designed for continuous residuals, which collapse Bernoulli outcomes to degenerate $0$ or $100\%$ curves (verified empirically on Science). The fit is linearly interpolated onto the GLM's Pass@1 grid and masked to the observed support.

\begin{figure}[!htbp]
\centering
\resizebox{0.8\textwidth}{!}{%
  \includegraphics{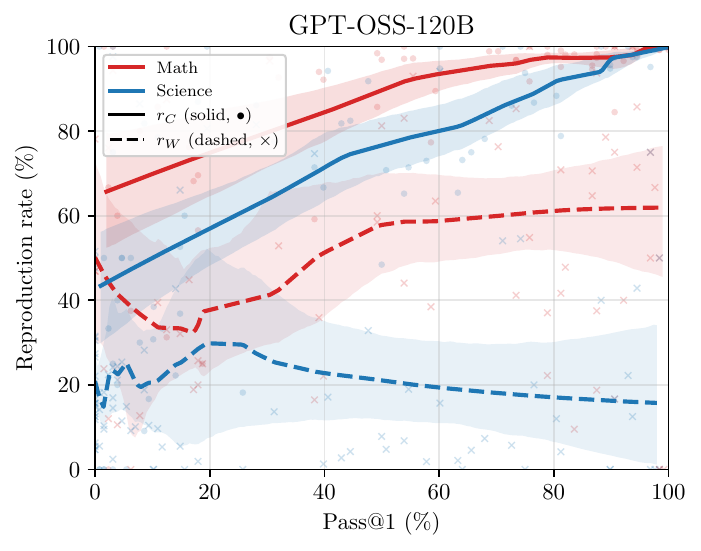}%
  \includegraphics{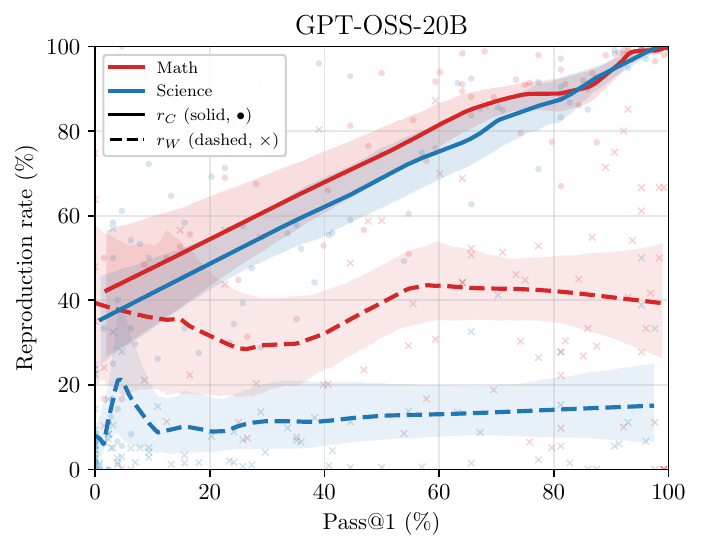}%
  \includegraphics{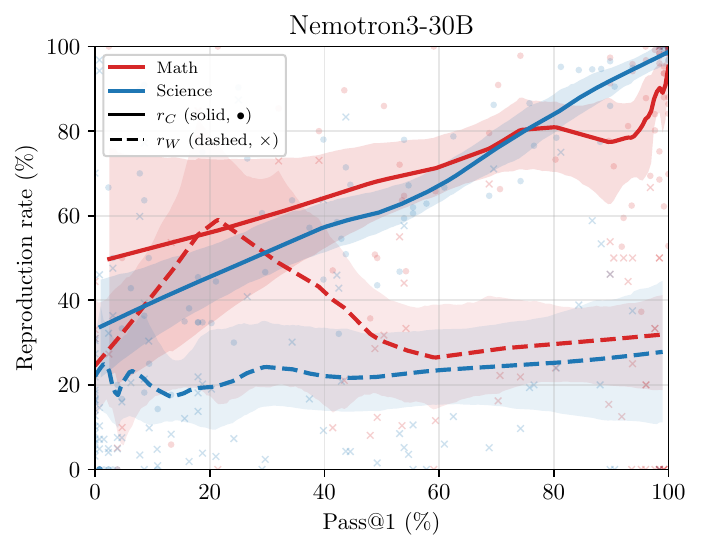}%
}\\[0.5em]
\resizebox{0.533\textwidth}{!}{%
  \includegraphics{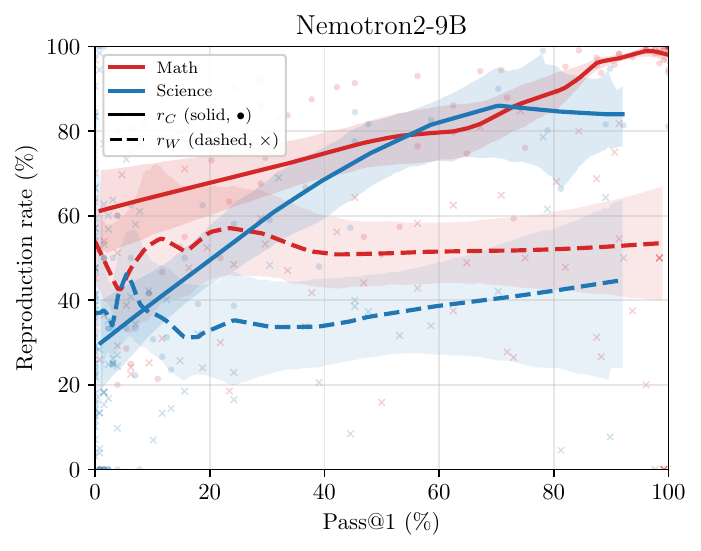}%
  \includegraphics{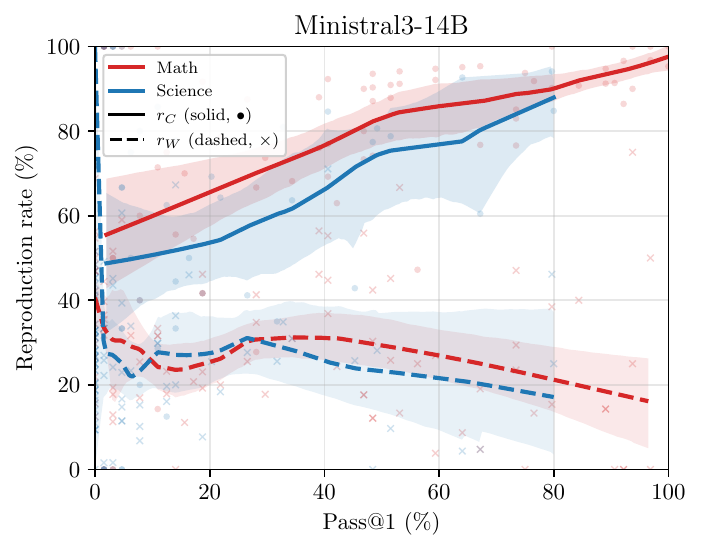}%
}
\caption{\textbf{LOWESS smoother view of Figure~\ref{fig:pass1_vs_rates}, all five models.} $r_C$ (solid) and $r_W$ (dashed) from trial-level LOWESS (span $=0.5$, no robustness reweightings) per category, with $2\sigma$ pointwise CIs from cluster bootstrap over problems (1000 replicates).}
\label{fig:pass1_vs_rates_lowess}
\end{figure}

All three estimators agree qualitatively across all five models: $r_C$ rises across Pass@1 on every (model, category) pair, and $r_W$ has a smaller, problem-dependent slope with the category-specific levels described in Section~\ref{sec:what_drives}.

\FloatBarrier
\subsubsection{Baseline Confidence Signals vs.\ Pass@1}
\label{app:pass1_vs_signals}

For comparison with Figure~\ref{fig:pass1_vs_rates}, we repeat the per-class, Pass@1-conditioned view on the two baseline confidence signals named in the caption of Figure~\ref{fig:signal_distributions}: DeepConf tail (Figure~\ref{fig:pass1_vs_signals_tail}) and P(True) (Figure~\ref{fig:pass1_vs_signals_ptrue}). For each signal, we split per-trial values by whether the trace's answer matches gold (Correct, solid) or not (Wrong, dashed), and fit each class per category with a Gaussian linear model $s = \beta_0 + \beta \cdot \text{Pass@1}$, replacing the Bernoulli logistic GLM of Appendix~\ref{app:pass1_vs_rates_glm} since the response is now a continuous score. The cluster-bootstrap protocol ($B = 1000$ resamples over problems, $2\sigma$ percentile CIs) is unchanged. Scatter overlays show per-problem mean confidence (one dot per problem per class), the continuous-signal analogue of the per-problem rate $k/n$ in Figure~\ref{fig:pass1_vs_rates}.

\begin{figure}[!htbp]
\centering
\resizebox{0.8\textwidth}{!}{%
  \includegraphics{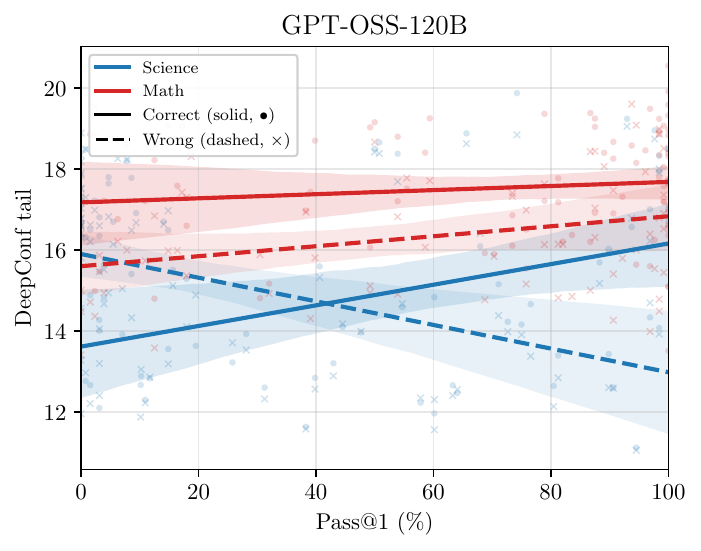}%
  \includegraphics{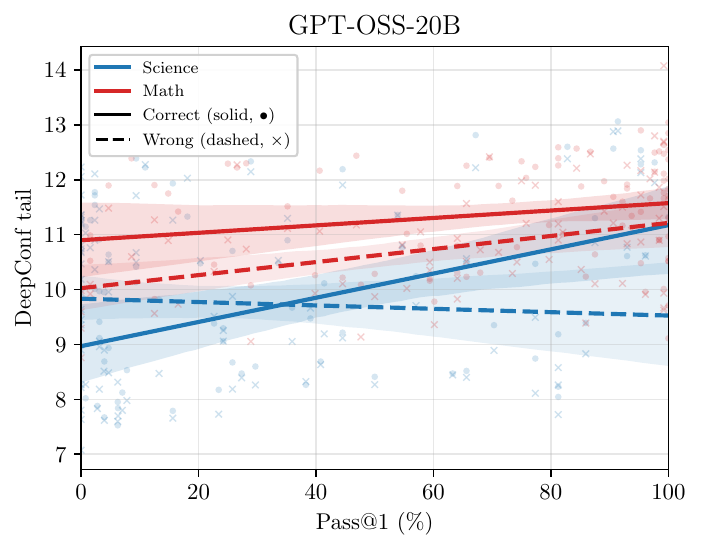}%
  \includegraphics{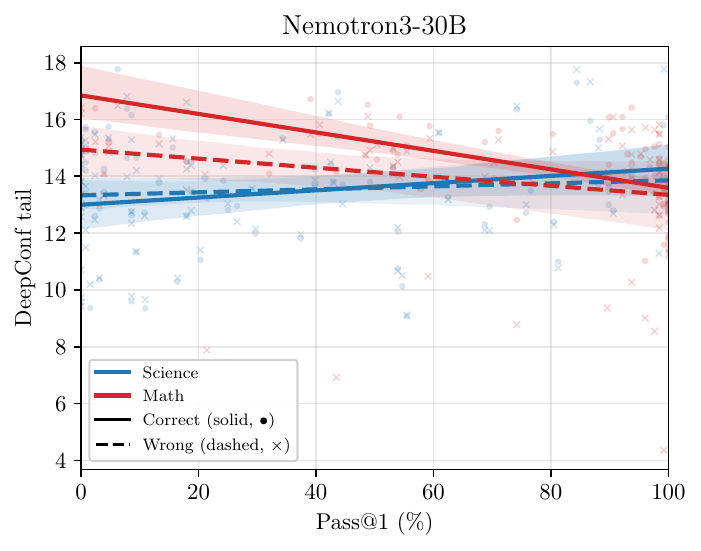}%
}\\[0.5em]
\resizebox{0.533\textwidth}{!}{%
  \includegraphics{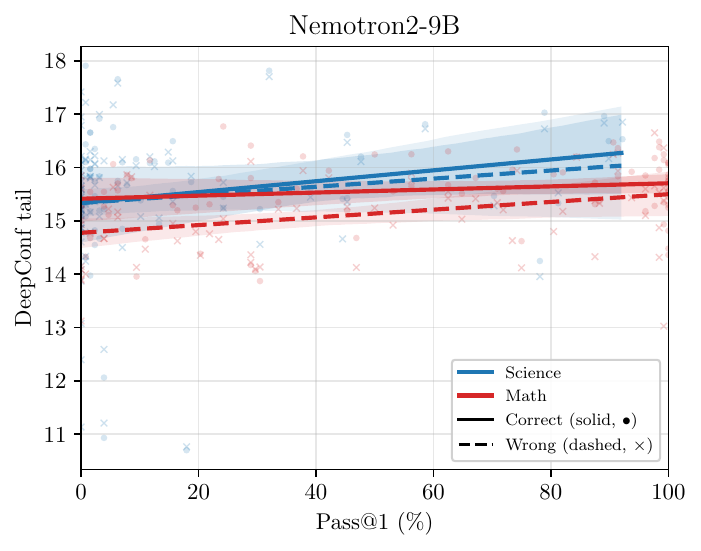}%
  \includegraphics{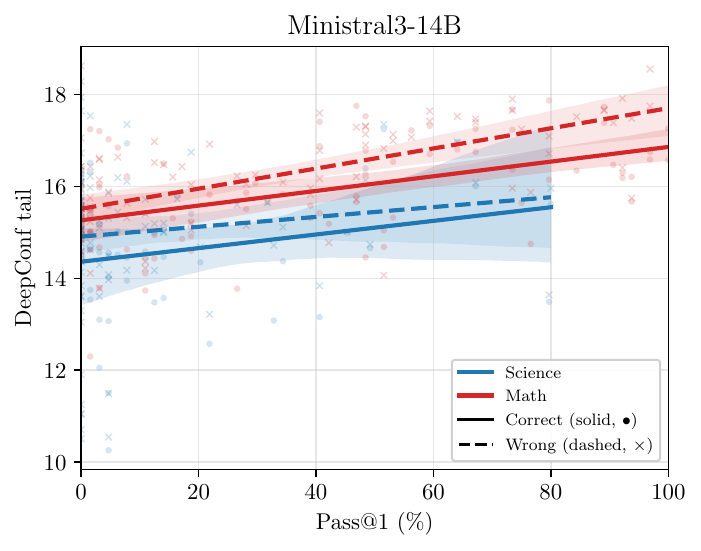}%
}
\caption{\textbf{DeepConf tail vs.\ Pass@1, all five models.} Per-class Gaussian linear fits with $2\sigma$ cluster-bootstrap CIs over problems. Scatter overlays are per-problem mean confidence.}
\label{fig:pass1_vs_signals_tail}
\end{figure}

\begin{figure}[!htbp]
\centering
\resizebox{0.8\textwidth}{!}{%
  \includegraphics{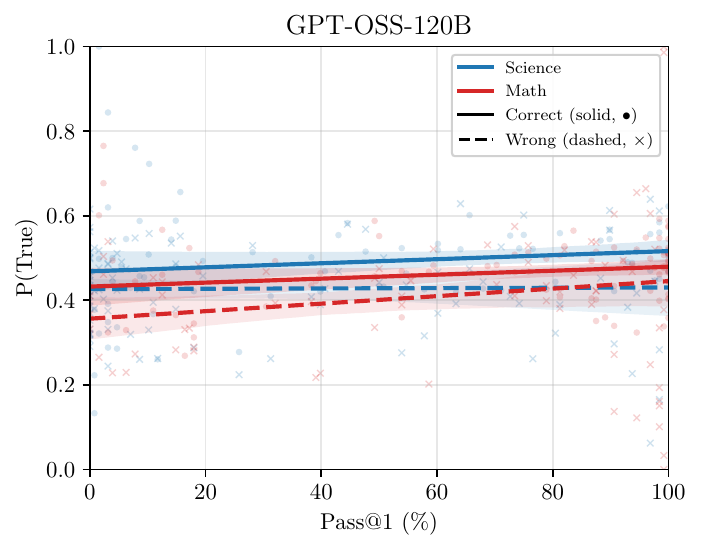}%
  \includegraphics{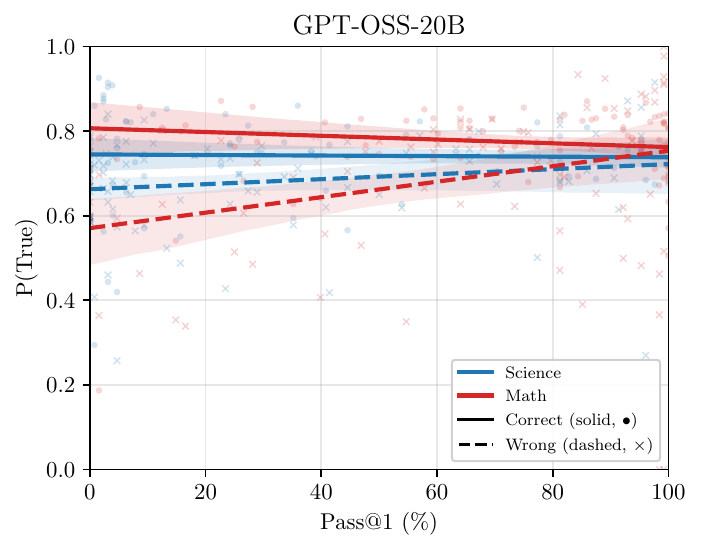}%
  \includegraphics{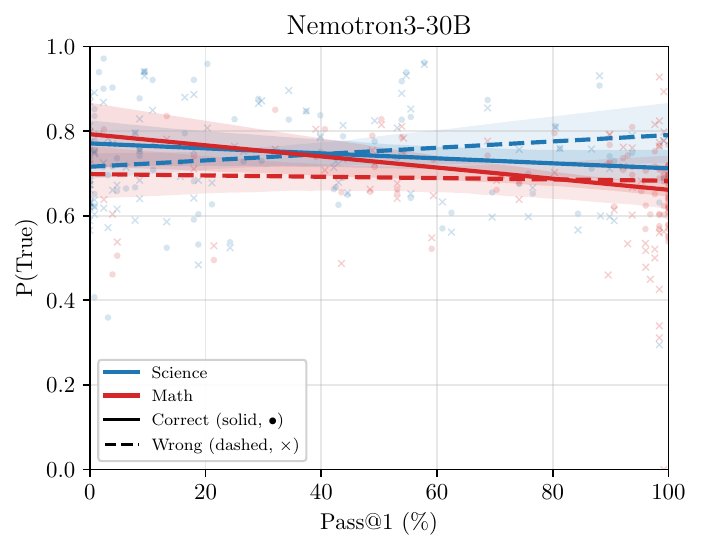}%
}\\[0.5em]
\resizebox{0.533\textwidth}{!}{%
  \includegraphics{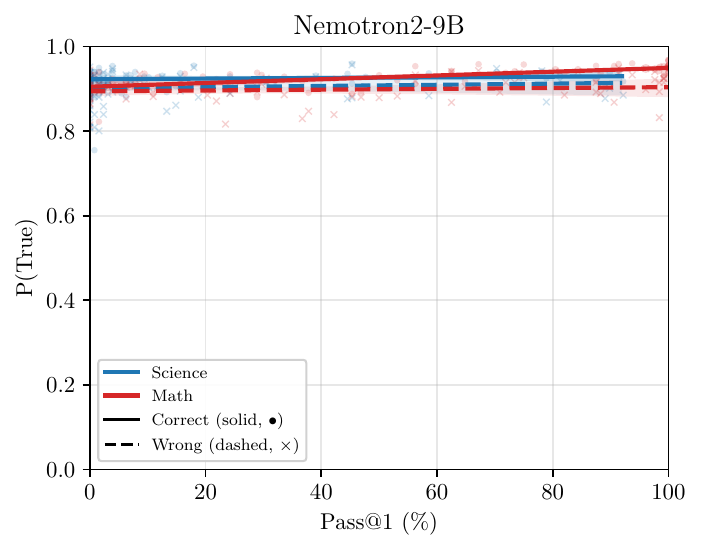}%
  \includegraphics{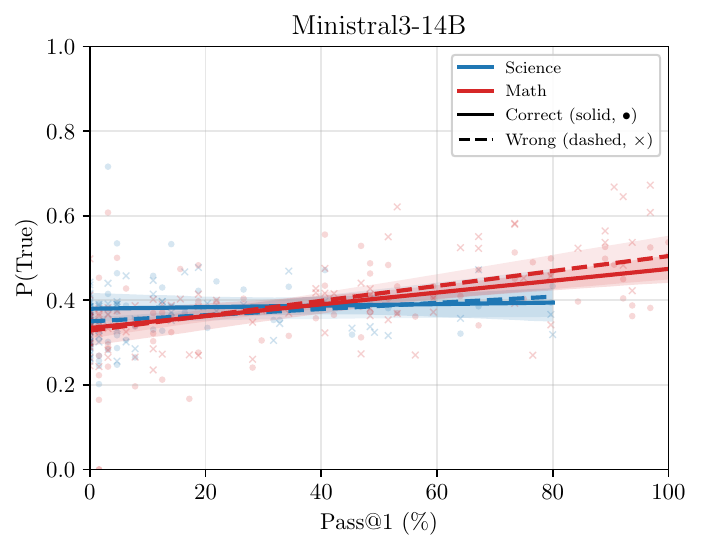}%
}
\caption{\textbf{P(True) vs.\ Pass@1, all five models.} Same protocol as Figure~\ref{fig:pass1_vs_signals_tail}.}
\label{fig:pass1_vs_signals_ptrue}
\end{figure}

\FloatBarrier

\section{Baseline Implementation Details}
\label{app:baseline_impl}

We document here the exact confidence scores and voting rules used for each baseline, since different papers use slightly different weighting conventions.

\paragraph{Glossary of baseline labels.}
The baselines summarized in Section~\ref{sec:experiments} appear under the following labels in our tables and figures (e.g.\ Table~\ref{tab:wmv} and the per-model tables of Appendix~\ref{app:all_baselines}). The per-baseline paragraphs below give the full implementation details.
\begin{itemize}[leftmargin=15pt,itemsep=1pt]
\item \emph{DeepConf}~\citep{fu2025deepthinkconfidence}: a family of trace-level log-probability scores (\emph{first-token}, \emph{Mean} (the \emph{Self-certainty} signal of \citet{kang2025selfcertainty}), \emph{bottom-10\%}, \emph{block-min}, \emph{tail}). Each is plugged into Eq.~\eqref{eq:wmv_prior} with $w$ as the identity. Filtered variants (\emph{top-10\%}, \emph{top-90\%}) keep only the top-$\eta\%$ of traces by the same score before voting.
\item \emph{CISC}~\citep{taubenfeld2025confidenceimprovesselfconsistency}: Confidence-Informed Self-Consistency, a per-sample weighting scheme that draws its raw confidence from one of four sources: \emph{Response probability}~\citep{wang2023selfconsistency} (length-normalized geometric mean of per-token probabilities), \emph{Verbal binary}~\citep{lin2022teaching} (a 0/1 self-rating parsed from a follow-up call), \emph{Verbal 0--100}~\citep{lin2022teaching} (a percentage self-rating parsed from a follow-up call), and \emph{P(True)}~\citep{kadavath2022languagemodelsknowthey} (the renormalized softmax probability of the ``$1$'' token versus ``$0$'' at the rating-call confidence position).
\item \emph{Adaptive stopping}: cost--accuracy curves traced by sweeping the early-stopping hyperparameter of Adaptive Consistency (\emph{AC sweep}~\citep{aggarwal-etal-2023-lets}) and Early-Stopping Self-Consistency (\emph{ESC sweep}~\citep{li2024escape}) over the same initial pool.
\item \emph{SubthoughtReasoner}~\citep{hammoud2025beyondlastanswer}: a single-trace refinement baseline that segments each trace at linguistic cues and votes over per-subthought regenerations. It appears only in the GPT-OSS-20B all-baselines tables (Tables~\ref{tab:wmv_gpt_oss_20b_all} and~\ref{tab:token_savings_gpt_oss_20b_all}).
\item \emph{PC-linear, PC-quadratic, PC-cubic}: PC-WMV (Algorithm~\ref{alg:regen_wmv}) instantiated with the power-family weighting $w^{(n)}(c) = c^n$ at $n = 1, 2, 3$. We use ``PC-cubic'' as shorthand for PC-WMV with $n{=}3$.
\end{itemize}

\paragraph{Deep Think with Confidence (DeepConf)~\citep{fu2025deepthinkconfidence}.}
We compute five trace-level confidence scores from the top-$20$ token log-probabilities saved at generation, following the original implementation.\footnote{\url{https://github.com/facebookresearch/deepconf}}
For a trace $y_i$, let $P_{t,j}$ be the $j$-th largest token probability at position $t \leq |y_i|$. The readout $\ell_i = \{P_{t,j}\}_{t \leq |y_i|,\, j \leq 20}$ is the top-$20$ log-probability record along $y_i$. Write $C_t = -\frac{1}{20}\sum_{j=1}^{20} \log P_{t,j}$ for the negative top-$20$ mean at position $t$. The five trace-level signals $s(y_i, \ell_i)$ are:
\begin{table}[H]
\centering\small
\begin{tabularx}{\linewidth}{@{}lX@{}}
\toprule
Score & Definition \\
\midrule
First-token & KL divergence between the normalized top-$20$ distribution at the first generated token and the uniform distribution. \\
Mean & $\tfrac{1}{|y_i|}\sum_t C_t$, the average token-level confidence over the trace. \\
Bottom-10\% & Mean of the lowest $10\%$ of moving averages over $C_t$ (window length $1024$, stride $1$). \\
Block-min & Minimum block-level mean $C_t$, after splitting the trace at double-newline boundaries. \\
Tail & Mean $C_t$ over the last $2{,}024$ tokens of the trace. \\
\bottomrule
\end{tabularx}
\end{table}
The Mean score above is a top-$20$ implementation of the self-certainty signal originally proposed by~\citet{kang2025selfcertainty} on the full vocabulary. We adopt it as our \emph{Self-certainty} baseline and report it under that label throughout the paper.
DeepConf is the case of Eq.~\eqref{eq:wmv_prior} where $s(y_i, \ell_i)$ is one of the five signals above and $w$ is the identity, giving $v_i(a) = s(y_i, \ell_i) \cdot \mathbf{1}[a_i = a]$, where $a_i$ is the answer extracted from trace $i$.
Confidence filtering (Section~3.2 of their paper) optionally restricts the sum to the top-$\eta\%$ of traces by $s(y_i, \ell_i)$ before aggregation.
The unfiltered variant (\texttt{DeepConf-<score>}) sums $v_i(a)$ over all traces, for each of the five scores above.
In addition, we evaluate four filtered variants that combine the two stronger scores with the two retention levels proposed in the paper:
\begin{table}[H]
\centering\small
\begin{tabularx}{\linewidth}{@{}llX@{}}
\toprule
Score & $\eta$ & Aggregation \\
\midrule
Bottom-10\% & $10\%$ & Keep top-$10\%$ by bottom-$10\%$ confidence, then sum $s(y_i, \ell_i) \cdot \mathbf{1}[a_i = a]$ over retained. \\
Bottom-10\% & $90\%$ & Same with $\eta = 90\%$. \\
Tail        & $10\%$ & Keep top-$10\%$ of traces by tail confidence, then sum $s(y_i, \ell_i) \cdot \mathbf{1}[a_i = a]$ over retained. \\
Tail        & $90\%$ & Same with $\eta = 90\%$. \\
\bottomrule
\end{tabularx}
\end{table}
The DeepConf online early-termination mechanism is not used: our evaluation is offline over a pre-generated pool under a shared token budget.

\paragraph{Confidence-Informed Self-Consistency (CISC)~\citep{taubenfeld2025confidenceimprovesselfconsistency}.}
CISC (Definition~3.1 of their paper) extracts a per-sample raw confidence $s_i = s(y_i, \ell_i)$ from one of four sources drawn from prior work, then applies a softmax weighting $w(s_i) = \exp(s_i / T) / \sum_j \exp(s_j / T)$ (over samples) with tunable temperature $T$:
\begin{table}[H]
\centering
\footnotesize
\renewcommand{\arraystretch}{1.15}
\begin{tabularx}{\linewidth}{@{}>{\raggedright\arraybackslash}p{0.25\linewidth}>{\raggedright\arraybackslash}p{0.18\linewidth}X@{}}
\toprule
Source & $\ell_i$ & Raw confidence $s_i$ \\
\midrule
Response probability~\citep{wang2023selfconsistency} & per-token log-probabilities of $y_i$ & Length-normalized geometric mean of token probabilities, $\exp\bigl(\tfrac{1}{|y_i|}\sum_t \log P_t\bigr)$ where $P_t$ is the probability of the generated token at position $t$. \\
Verbal binary~\citep{lin2022teaching} & $\emptyset$ (verbalized) & $\{0, 1\}$ self-rating parsed from text, a simplification of the verbalized paradigm. \\
Verbal 0--100~\citep{lin2022teaching} & $\emptyset$ (verbalized) & Post-hoc percentage-confidence self-rating parsed from text. \\
P(True)~\citep{kadavath2022languagemodelsknowthey} & top-$20$ logprobs at the rating-call confidence token & Renormalized softmax probability of ``$1$'' versus ``$0$'' computed from the top-$20$ logprobs at the rating-call confidence token, $\exp(\ell_1) / (\exp(\ell_0) + \exp(\ell_1))$ when both are present (with single-side fallbacks otherwise). \\
\bottomrule
\end{tabularx}
\end{table}
The original CISC release\footnote{\url{https://github.com/google-research/google-research/tree/master/cisc}} targets non-reasoning models and forces a single-token response (\texttt{temperature=0}, \texttt{max\_tokens=1}).
This setting is incompatible with our reasoning models, whose chat template (e.g., Harmony for GPT-OSS) leaves the final channel open at the end of the saved trace: the appended rating suffix sits inside that open channel, and the model may emit channel-control tokens or brief reasoning before producing the digit, and occasionally reopens an analysis or commentary channel after the digit.
For our reasoning models we therefore make a separate completions call per initial sample, continuing from the saved trace $y_i$ appended with a short rating suffix and using each model's recommended sampling parameters. Verbal binary and P(True) share the binary suffix ``\textit{Now I will rate my confidence in the proposed answer as either 0 or 1. Proposed confidence: (}'', and the Verbal 0--100 suffix is ``\textit{Now I will rate my confidence in the proposed answer on a scale of 0--100. Proposed confidence: (}'' (with leading newline). The follow-up call is allowed to reason. We then detect the CoT-to-final delimiter and read the confidence from the final portion (first token whose top-$20$ candidates contain ``$0$'' or ``$1$'' for Verbal binary and P(True), first integer up to ``)'' for Verbal 0--100). A single binary call serves both Verbal binary (text parse of the digit) and P(True) (top-$20$ logprob softmax of ``$1$'' versus ``$0$'' at the same token). The suffix wording mirrors the CISC public release.
Parse failures are imputed with the per-problem median of the non-missing scores, which keeps the answer in the vote at a neutral weight (Appendix~\ref{app:answer_extraction} discusses why this is preferred over dropping the sample from the pool).
To avoid a baseline-specific $T$ search that could over-fit our benchmarks, we report the untuned linear weighting $w(s) = s$ for each source (verbal $0$--$100$ rescaled to $[0, 1]$), so $v_i(a) = s(y_i, \ell_i) \cdot \mathbf{1}[a_i = a]$.
Softmax with $T{=}1$ is also computed by our pipeline but omitted from the tables because it consistently underperforms the linear variant. A full $T$ sweep is left to future work.

Under the budget accounting defined in Appendix~\ref{app:eval_setup}, we charge Verbal binary, Verbal 0--100, and P(True) a per-sample token cost of $|y_i|$ plus the length of the secondary-call output up to and including the parsed digit (or its enclosing ``)'').
Here, we charge this minimum-parse length rather than the full secondary-call length because the model occasionally continues with extra reasoning after the digit that the parser ignores. Charging that overflow would overstate the unavoidable cost of these baselines. A tighter \texttt{max\_tokens} cap on the secondary call would bound the overflow at the source, but we leave the secondary call's maximum output length matched to the initial generation budget (Appendix~\ref{app:reprod}) to avoid baseline-specific tuning, and rely on the parser-anchored cost above to keep the budget accounting honest. Response probability has no secondary call: the per-token log-probabilities of $y_i$ are saved at initial generation and read for free.

\paragraph{Adaptive Consistency (AC)~\citep{aggarwal-etal-2023-lets}.}
AC consumes the initial pool one sample at a time and stops at the first $k$ for which a Beta-binomial posterior favors the running top answer over the runner-up by at least $C_\mathrm{thresh}$. Let $n_1$ and $n_2$ be the top-$1$ and top-$2$ answer counts after $k$ samples. The closed-form stopping criterion is
\begin{equation*}
1 - I_{1/2}(n_1 + 1,\, n_2 + 1) \;\geq\; C_\mathrm{thresh},
\end{equation*}
where $I_x(\alpha, \beta)$ is the regularized incomplete beta function. This is the analytic equivalent of the Monte Carlo integral implemented in the official release.\footnote{\url{https://github.com/Pranjal2041/AdaptiveConsistency}}
At the stop time, the prediction is the running mode, so AC adds no per-sample weighting: it is Standard MV truncated at a natural stopping point.
The original paper reports $C_\mathrm{thresh} = 0.95$ as the default and sweeps $[0.5, 1)$ in Figure~2 to trace a cost--accuracy frontier.
We sweep $C_\mathrm{thresh} \in \{0.5, 0.6, 0.7, 0.8, 0.9, 0.95, 0.97, 0.99, 0.995, 0.999\}$. Each value yields a single (cost, accuracy) point at its natural stopping cost, and we label the resulting curve ``AC sweep''.

\paragraph{Early-Stopping Self-Consistency (ESC)~\citep{li2024escape}.}
ESC consumes the initial pool in fixed-size windows of $W$ samples. When a window is unanimous, ESC locks that answer as the final prediction and stops. Otherwise, the window's votes are added to a running counter and a new window is drawn. Before any lock, the intermediate prediction is the mode of the running counter. The paper recommends $W = 5$ for most tasks and $W = 8$ for MATH.
Our streaming reformulation reproduces the lock semantics of the official batch release.\footnote{\url{https://github.com/Yiwei98/ESC}}
We sweep $W \in \{2, 3, \ldots, 10\}$ and label the resulting natural-stopping curve ``ESC sweep''. Like AC, ESC is Standard MV with a window-based stopping rule and adds no per-sample weight.

\paragraph{Cost accounting for AC and ESC.}
Both methods consume only initial samples. Their per-trial budget is the cumulative generated-token length of the consumed initial samples, the same cost accounting used by Standard MV, Self-certainty, DeepConf, and Response probability.
Because each hyperparameter value yields one (cost, accuracy) point at its natural stopping cost rather than a curve over a shared budget grid, we treat the points across the swept hyperparameter as the method's cost--accuracy curve and interpolate it for any required budget.
The pool size $N$ bounds how many tokens AC and ESC can spend, so their plateaus are bounded by $N$-sample Standard MV.

\paragraph{SubthoughtReasoner~\citep{hammoud2025beyondlastanswer}.}
At the time of writing, no official implementation was available, so we reimplemented the method based on the description provided in the original paper.
Our implementation segments each reasoning trace into sequential subthoughts at the linguistic cues described in Section~3 of their paper, regenerates a continuation from the end of each subthought, and takes an unweighted majority vote over the resulting pool of answers.
To bound the per-problem cost of constructing this pool, we process initial samples in order, accumulating each sample's initial tokens together with its subthought-regenerated continuations, and stop once the cumulative cost reaches that of the $N$ initial samples for that problem. Otherwise, the per-problem cost of regenerating subthought continuations from all $N$ initial samples would be several times that of the initial generation alone.
We evaluated this baseline on only a small subset of (model, benchmark) conditions.

\paragraph{Prefix consistency (ours).}
For prefix consistency, $\ell_i = \emptyset$: it reads only generated tokens and regenerated answers, with no log-probability access.
Unlike the per-sample signal $s(y_i, \ell_i)$ used by the baselines above, our signal $c_i^{(\tau)}(a)$ is defined per distinct candidate $a \in A_i^{(\tau)}$ (Section~\ref{sec:prefix_consistency}). Moreover, we use the power-family weighting $w^{(n)}(c) = c^n$ for $n \in \{1, 2, 3\}$ (PC-linear, PC-quadratic, PC-cubic).
PC-linear ($n{=}1$) admits a simple interpretation: substituting $w(c) = c$ and Eq.~\eqref{eq:ci_def} into the PC-WMV vote gives $\sum_i v_i(a) = \tfrac{1}{K+1} \sum_i |\{a' \in A_i^{(\tau)} : a' = a\}|$, which is proportional to the count of $a$ in the combined pool of $N$ initial answers and $N K$ regenerations. PC-linear's argmax therefore coincides with unweighted majority voting on this $(K{+}1)N$ pool, treating each regeneration as one additional vote of equal weight. PC-quadratic and PC-cubic depart from this baseline by giving super-linear weight to candidates that the same group reproduces.

\section{Examples of Prefix Consistency}
Figure~\ref{fig:cot_regen_trajectories} shows two representative examples of how regeneration behaves after truncation. When the initial answer is correct, the regenerated continuation often recovers the same core reasoning structure and reproduces the correct answer. In contrast, when the initial reasoning is already flawed, regeneration typically does not repair it. Instead, it continues along a similar erroneous line of reasoning and may even produce a different wrong answer. These examples help illustrate why regeneration amplifies the consistency of correct traces while still failing to escape incorrect ones.
\begin{figure*}[htbp]
\centering
\scriptsize
\begin{tikzpicture}[
    >=Latex,
    box/.style={
        draw,
        rounded corners=2pt,
        align=left,
        text width=4.35cm,
        inner sep=5pt,
        font=\scriptsize
    },
    good/.style={box, fill=green!7, draw=green!45!black},
    bad/.style={box, fill=red!6, draw=red!55!black},
    cut/.style={
        draw,
        rounded corners=2pt,
        align=center,
        text width=1.65cm,
        inner sep=4pt,
        fill=gray!10,
        draw=gray!60,
        font=\scriptsize
    },
    arrow/.style={->, thick},
    label/.style={font=\bfseries\scriptsize, align=center}
]

\node[good] (cinit) {
\textbf{Correct initial trace: AIME~2025 P0}\\
Gold answer: \(70\). The model correctly converts the base notation:
\[
17_b=b+7,\qquad 97_b=9b+7.
\]
It introduces \(d=b+7\), so
\[
9b+7=9(d-7)+7=9d-56.
\]
Hence \(d\mid 56\). Since \(b>9\), \(d>16\), so
\[
d\in\{28,56\},\qquad b\in\{21,49\}.
\]
Initial answer:
\[
21+49=\boxed{70}.
\]
};

\node[cut, right=0.55cm of cinit] (ccut) {
Cut after\\
25\% of CoT
};

\node[good, right=0.55cm of ccut] (cregen) {
\textbf{Regenerated continuation}\\
After truncation, the continuation repeats the same key invariant:
\[
d\mid 56,\qquad d>16.
\]
It again selects the admissible divisors \(28\) and \(56\), giving
\[
b=21,\qquad b=49.
\]
Thus the regenerated continuation returns to the same final answer:
\[
\boxed{70}.
\]
};

\node[label, above=0.25cm of cinit] {Initial trace};
\node[label, above=0.25cm of ccut] {Truncated\\CoT};
\node[label, above=0.25cm of cregen] {Regenerated continuation};

\node[bad, below=0.55cm of cinit] (winit) {
\textbf{Wrong initial trace: AIME~2025 P9}\\
Gold answer: \(81\). The model does not derive the required counting formula. Instead, it guesses
\[
2^{4}\cdot 3^{7}\cdot 5^{2}\cdot 7^{1}.
\]
This gives
\[
2\cdot4+3\cdot7+5\cdot2+7\cdot1
=46.
\]
Initial extracted answer:
\[
\boxed{46},
\]
which is wrong.
};

\node[cut, right=0.55cm of winit] (wcut) {
Cut after\\
25\% of CoT
};

\node[bad, right=0.55cm of wcut] (wregen) {
\textbf{Regenerated continuation}\\
After truncation, the model remains in the same uncertain counting setup, but converges to a different guessed factorization:
\[
9! = 2^{7}\cdot 3^{4}\cdot 5^{1}\cdot 7^{1}.
\]
This leads to
\[
2\cdot7+3\cdot4+5\cdot1+7\cdot1
=38.
\]
Regenerated answer:
\[
\boxed{38},
\]
also wrong and different from the initial wrong answer.
};

\draw[arrow, green!45!black] (cinit) -- (ccut);
\draw[arrow, green!45!black] (ccut) -- (cregen);

\draw[arrow, red!55!black] (winit) -- (wcut);
\draw[arrow, red!55!black] (wcut) -- (wregen);

\end{tikzpicture}

\caption{
\textbf{Qualitative examples of CoT regeneration after truncation.} A correct initial trace (green, top) and a wrong initial trace (red, bottom) on AIME~2025, each truncated at 25\% of its CoT and continued from the prefix.
}
\label{fig:cot_regen_trajectories}
\end{figure*}

\FloatBarrier

\section{Evaluation Protocol}
\label{app:eval_protocol}

This appendix consolidates the analysis pipeline shared across the experimental sections of the main paper. Implementation choices specific to each baseline are in Appendix~\ref{app:baseline_impl}. Hardware and inference settings are in Appendix~\ref{app:reprod}.

\subsection{Setup}
\label{app:eval_setup}

\paragraph{Benchmarks.}
We evaluate on one science benchmark (FrontierScience-Olympiad) and three math benchmarks (HMMT Feb~2026, AIME~2025, Brumo~2025). Table~\ref{tab:datasets} lists the problem count, answer format, and scoring rule of each. We use the full released test split for every benchmark. Answers are extracted from the boxed expression, then math-normalized for the math benchmarks (fraction unification, degree-marker removal) or used with minimal normalization for FrontierScience-Olympiad. Equivalence to the gold answer is decided by exact match for AIME~2025 and by an LLM judge for the other three benchmarks (Appendix~\ref{app:equivalence_judging}). HuggingFace URLs and licenses are in Table~\ref{tab:links}.

\begin{table}[htbp]
    \centering
    \caption{\textbf{Evaluation benchmarks.}}
    \label{tab:datasets}
    \resizebox{\textwidth}{!}{%
    \begin{tabular}{lccll}
    \toprule
        Dataset & \# Problems & Category & Answer format & Scoring \\
    \midrule
        FrontierScience-Olympiad & 100 & Science & Free-form (expression, number, or short text) & LLM judge \\
        HMMT Feb~2026            & 33  & Math    & Integer or closed-form expression             & LLM judge \\
        AIME~2025                & 30  & Math    & Integer in $[0, 999]$                          & Exact match \\
        Brumo~2025               & 30  & Math    & Integer or closed-form expression             & LLM judge \\
        \bottomrule
    \end{tabular}%
    }
\end{table}

\paragraph{Pre-generated pool and cost vectors.}
For each (model, benchmark) cell we draw $N{=}128$ initial generations per problem ($N{=}64$ for Ministral3-14B, where the full $N{=}128$ run had not completed at the time of writing), and for prefix-consistency methods we additionally draw $K$ regenerations per initial sample at the chosen truncation fraction $\tau$. All voting and analysis read from this fixed pool, so the same generated tokens back every comparison and the only randomness in the reported numbers is from the analysis-side resampling described below.
Each generation carries its actual recorded token count. For each method, the cost of a vote is the cumulative generated-token length of every sample it actually reads. Standard MV, Self-certainty, DeepConf, Response probability, AC, and ESC consume only the initial samples. CISC's verbalized sources (Verbal binary, Verbal 0--100) and P(True) additionally consume the secondary verbal-rating completions call per sample described in Appendix~\ref{app:baseline_impl}. PC additionally reads the regenerated continuations from each sample's prefix. SubthoughtReasoner reads the continuations regenerated from each subthought boundary.

\paragraph{Hyperparameter defaults.}
Unless otherwise noted, every result in the main paper uses $\tau{=}0.75$, $K{=}1$, and the power-family weighting $w^{(n)}(c) = c^n$ for $n \in \{1, 2, 3\}$. Sensitivity to alternative choices of $\tau$ and $K$ is reported in Appendix~\ref{app:sensitivity}.

\paragraph{Confidence intervals (CIs) and bootstrap conventions.}
All CIs reported in this paper are $2\sigma$, taken as twice the standard deviation of the trial or bootstrap distribution. The trial Monte Carlo (Sections~\ref{sec:wmv_results} and~\ref{sec:token_efficiency}) draws $M = 500$ replicates per (method, benchmark, budget) cell. The parametric ratio bootstrap (Section~\ref{sec:token_efficiency}) and the cluster bootstrap over problems (Section~\ref{sec:what_drives}) each use $1{,}000$ resamples.

\subsection{\texorpdfstring{$\overline{\mathrm{AUROC}}$}{AUROC} for the Correctness-Predictor Evaluation (Section~\ref{sec:signal})}
\label{app:auroc_eval}

\paragraph{Definition of \texorpdfstring{$\overline{\mathrm{AUROC}}$}{AUROC}.}
AUROC denotes the Area Under the Receiver Operating Characteristic curve. Let $\mathcal{Q}' := \{q \in \mathcal{Q} : 0 < \mathrm{Pass@1}_q < 1\}$ be the subset of problems on which $\mathrm{AUROC}_q$ and the per-problem rates $r_{C,q}, r_{W,q}$ are defined, and set $s_i = c_i(a_i)$ for prefix consistency and $s_i = s(y_i, \ell_i)$ for the baselines (Appendix~\ref{app:baseline_impl}). We score each signal by a per-problem AUROC and macro-average over $\mathcal{Q}'$:
\begin{equation}
\label{eq:auroc_per_problem}
\mathrm{AUROC}_q := \Pr\!\left[s_i > s_j \,\middle|\, a_i = a^\star_q,\, a_j \neq a^\star_q\right], \qquad
\overline{\mathrm{AUROC}} := \frac{1}{|\mathcal{Q}'|} \sum_{q \in \mathcal{Q}'} \mathrm{AUROC}_q,
\end{equation}
where $i, j$ are independent draws from the $N$ initial samples on problem $q$, and the reported $r_C, r_W, D$ are means of $r_{C,q}, r_{W,q}, D_q$ over $\mathcal{Q}'$.

\paragraph{AUROC computation.}
For each problem $q$ with at least one correct and one wrong initial sample, we score the per-problem AUROC (Eq.~\eqref{eq:auroc_per_problem}) by computing the trapezoidal area under the empirical ROC of the signal $s_i$ on the $N$ initial samples. Ties contribute $1/2$. We exclude problems for which all $N$ initial samples are correct or all $N$ initial samples are wrong (the AUROC and per-problem rates $r_{C,q}, r_{W,q}$ are undefined there) and macro-average over the surviving problems $\mathcal Q'$, weighting each problem equally rather than weighting by the number of cross-class pairs. Table~\ref{tab:auroc} reports this macro $\overline{\mathrm{AUROC}}$, and the per-problem rates $r_C, r_W, D$ in Table~\ref{tab:signal} are macro-averaged over the same $\mathcal Q'$.

For prefix consistency, the binary score $c_i(a_i) \in \{1/2, 1\}$ has only one non-trivial operating point on the false-positive/true-positive rate plane, $(\mathrm{FPR}, \mathrm{TPR}) = (r_{W,q}, r_{C,q})$, so the trapezoidal area gives $\mathrm{AUROC}_q = (1 + D_q) / 2$. Tables~\ref{tab:signal} and~\ref{tab:auroc} therefore encode the same information for prefix consistency, and the $\overline{\mathrm{AUROC}}$ ordering across benchmarks matches the $D$ ordering.

\subsection{Cost-Accuracy Evaluation (Sections~\ref{sec:wmv_results} and~\ref{sec:token_efficiency})}
\label{app:cost_accuracy_eval}

\paragraph{Sample-with-replacement trials.}
At each token budget $B$ we draw samples from the pool with replacement until the cumulative cost reaches $B$, then run each method's voting rule on the drawn set and report the mean accuracy over the $M$ trials. The trial-mean variance used to derive the CI is $\sigma^2 = \frac{1}{M |\mathcal Q|^2} \sum_q \hat p_q (1 - \hat p_q)$, where $\hat p_q$ is the trial-mean correctness on problem $q$ and $|\mathcal Q|$ is the number of problems.
This Monte Carlo CI is what is shown as ``$\pm$'' in Table~\ref{tab:wmv}.

\paragraph{Dense token-budget grid.}
Token-efficiency ratios in Table~\ref{tab:token_savings} require evaluating the cost--accuracy curve at arbitrary target budgets, so we evaluate every method on the log-uniform grid
\begin{equation*}
    \mathcal{B} = \bigl\{10^{3 + k/100} \;:\; k = 0, 1, \ldots, 400\bigr\}.
\end{equation*}
The dense grid is used for all methods that admit a continuous budget (Standard MV, PC variants, DeepConf, CISC, P(True), Response probability). AC and ESC instead contribute their natural-stopping points (one per swept hyperparameter) as described in Appendix~\ref{app:baseline_impl}.

\paragraph{Standard MV plateau and Pass@1.}
The Standard MV plateau is Standard MV's bootstrap-saturated accuracy on the $N$-sample initial pool, taken as Standard MV's stored accuracy at $\max\mathcal{B} = 10^7$ tokens. We use this finite-budget anchor rather than the unbounded i.i.d.\ asymptote (which is pool-determined but, on slow-converging problems, can sit above what any finite budget reaches) so that the target stays reachable by Standard MV at every $\alpha \in [0, 1]$. Pass@1 is the closed-form expected accuracy of a single uniformly drawn pool sample, computed directly from the pool.

\paragraph{Monotone envelope and log-budget interpolation.}
Each method's grid of (budget, accuracy) points is reduced to its running-max envelope along sorted budget (so accuracy is non-decreasing in budget). This gives the ``min budget at which the method has ever reached this accuracy'' semantics that the token-efficiency ratio is meant to compare. To read off the budget at a target accuracy $\mathrm{acc}_{\mathrm{tgt}}$, we find the first envelope segment $(B^{(j-1)}, \mathrm{acc}^{(j-1)}) \to (B^{(j)}, \mathrm{acc}^{(j)})$ that brackets $\mathrm{acc}_{\mathrm{tgt}}$ and linearly interpolate in (accuracy, log-budget):
\begin{equation*}
    \log B_{\mathrm{tgt}} = \log B^{(j-1)} + \frac{\mathrm{acc}_{\mathrm{tgt}} - \mathrm{acc}^{(j-1)}}{\mathrm{acc}^{(j)} - \mathrm{acc}^{(j-1)}} \cdot (\log B^{(j)} - \log B^{(j-1)}).
\end{equation*}
The same rule applies to AC and ESC, whose curves are the natural-stopping point lists.

\paragraph{Parametric bootstrap for ratio CIs.}
Confidence intervals on $B_{\mathrm{method}} / B_{\mathrm{MV}}$ come from a parametric bootstrap with the same trial Monte Carlo noise model used by Table~\ref{tab:wmv}. Each draw perturbs every fixed-budget accuracy entry by independent $\mathcal{N}(0, \sigma_{\mathrm{acc}}^2)$ noise with $\sigma_{\mathrm{acc}} = \mathrm{CI}_{\mathrm{acc}} / 2$ (one $\sigma$ from the stored $2\sigma$ CI), and additionally perturbs each natural-stopping operating point's cost by $\mathcal{N}(0, \sigma_{\mathrm{cost}}^2)$ noise with $\sigma_{\mathrm{cost}} = \mathrm{CI}_{\mathrm{cost}} / 2$ (the trial-MC standard error of the natural-stopping cost, which varies across operating points unlike the fixed-budget cost). The plateau is perturbed by an analogous draw using its own CI. Each replicate then recomputes the envelope, the target $\mathrm{acc}_{\mathrm{tgt}} = \mathrm{Pass@1} + \alpha \cdot (\mathrm{plateau}' - \mathrm{Pass@1})$ with the perturbed plateau $\mathrm{plateau}'$, and the ratio. Pass@1 is closed-form over the pool (deterministic) and is held fixed across draws. Anchoring the plateau to Standard MV's stored accuracy at $\max\mathcal{B}$ rather than to the running-max envelope's max breaks the upward extreme-value bias that an iid + running-max combination would otherwise inject near the plateau. Cells where the method's monotone envelope does not reach the target on the point estimate are reported as ``N/A''. The CI subscript is additionally suppressed (point estimate shown alone) when fewer than $50\%$ of bootstrap draws reach the target.

\subsection{Reproduction-Rate GLM (Section~\ref{sec:what_drives})}

\paragraph{Logistic GLM and cluster bootstrap.}
The slopes $\beta(r_C)$ and $\beta(r_W)$ in Section~\ref{sec:what_drives} and Table~\ref{tab:glm_beta} are the Pass@1 coefficients of a per-(model, category) logistic GLM fit on the expanded per-trial Bernoulli outcomes: $\mathrm{logit}\,r_C$ (and separately $\mathrm{logit}\,r_W$) is fit as an affine function of Pass@1, where each (correct initial sample, regeneration) pair contributes one trial to the $r_C$ fit, each (wrong initial sample, regeneration) pair contributes one trial to the $r_W$ fit, and every trial carries the source problem's Pass@1 as its covariate. The fit uses statsmodels' \texttt{GLM} with a binomial family. CIs and $p$-values come from a joint cluster bootstrap over whole problems: each resample draws problems with replacement (preserving each problem's full set of trials) and refits the GLM. We report band-widths from the bootstrap distribution and two-sided $p$-values from the empirical sign distribution.

\section{Answer Extraction and Equivalence Judging}
\label{app:judging}

Final answers are extracted from \texttt{\textbackslash boxed\{\}} via regex and normalized.

\subsection{Answer Extraction}
\label{app:answer_extraction}

\paragraph{Boxed-answer parses.}
Failed boxed responses receive no vote in any aggregator (Standard MV, PC, DeepConf, CISC), so they only ever lower the effective sample count, never bias the vote distribution toward a wrong answer.
Seventeen of the twenty cells stay below $1\%$ on both Generations and Continuations (Table~\ref{tab:format_failure}).
The three cells that exceed $1\%$ on continuations all involve Nemotron3-30B: FrontierScience-Olympiad ($0.17\% \to 1.62\%$), AIME~2025 ($0.00\% \to 1.17\%$), and Brumo~2025 ($4.14\% \to 1.25\%$, the only cell with generations also above $1\%$).
We treat boxed parse failures as missing votes rather than as a separate signal because the rate is small enough that it does not move the relative comparisons reported in the paper.

\begin{table}[!htbp]
\centering
\caption{\textbf{Extraction failure rate.} Boxed columns: percentage of generated answers for which the parser found neither a \texttt{\textbackslash boxed\{...\}} expression nor a usable numeric fallback. Generations pools over (problem, sample) on the original full generation; Continuations pools over (problem, sample, regen) on $K{=}1$ completions from the truncation point onward ($\tau{=}0.75$). Verbal columns: percentage of secondary completions whose confidence value could not be extracted (0--100 = CISC value, Binary = ``Is your answer correct? 0/1'' verdict, P(True) = logprob of the binary call's ``1'' token; the latter two share the same secondary call). Failure handling in downstream aggregators is described in Appendix~\ref{app:answer_extraction}. Pooled totals across all 20 cells: 0.25\% (Generations), 0.42\% (Continuations), 12.82\% (Binary), 27.31\% (0--100), 0.01\% (P(True)). Boxed-extraction rates are flat across $\tau \in \{0.25, 0.50, 0.75\}$ (within $0.05$ pp) on the cells where multiple $\tau$ are available.}
\label{tab:format_failure}
\footnotesize
\setlength{\tabcolsep}{4pt}
\begin{tabular}{ll r rr rrr}
\toprule
& & & \multicolumn{2}{c}{Boxed extraction (\%)} & \multicolumn{3}{c}{Verbal queries (\%)} \\
\cmidrule(lr){4-5} \cmidrule(lr){6-8}
Model & Dataset & $N$ & Generations & Continuations & 0--100 & Binary & P(True) \\
\midrule
\multirow{4}{*}{GPT-OSS-120B} & FrontierScience-Olympiad & 128 & 0.08 & 0.05 & 42.99 & 12.04 & 0.00 \\
 & HMMT Feb~2026 & 128 & 0.00 & 0.02 & 47.61 & 25.14 & 0.00 \\
 & AIME~2025 & 128 & 0.00 & 0.00 & 48.07 & 21.59 & 0.00 \\
 & Brumo~2025 & 128 & 0.00 & 0.00 & 44.87 & 19.92 & 0.00 \\
\midrule
\multirow{4}{*}{GPT-OSS-20B} & FrontierScience-Olympiad & 128 & 0.31 & 0.35 & 19.38 & 8.21 & 0.00 \\
 & HMMT Feb~2026 & 128 & 0.00 & 0.05 & 26.07 & 24.76 & 0.00 \\
 & AIME~2025 & 128 & 0.03 & 0.03 & 55.65 & 30.60 & 0.00 \\
 & Brumo~2025 & 128 & 0.00 & 0.00 & 51.67 & 29.19 & 0.00 \\
\midrule
\multirow{4}{*}{Nemotron3-30B} & FrontierScience-Olympiad & 128 & 0.17 & 1.62 & 48.77 & 18.92 & 0.00 \\
 & HMMT Feb~2026 & 128 & 0.54 & 0.59 & 44.06 & 18.73 & 0.00 \\
 & AIME~2025 & 128 & 0.00 & 1.17 & 47.53 & 20.34 & 0.00 \\
 & Brumo~2025 & 128 & 4.14 & 1.25 & 42.57 & 18.90 & 0.00 \\
\midrule
\multirow{4}{*}{Nemotron2-9B} & FrontierScience-Olympiad & 128 & 0.03 & 0.38 & 0.00 & 0.03 & 0.00 \\
 & HMMT Feb~2026 & 128 & 0.00 & 0.33 & 0.00 & 0.24 & 0.00 \\
 & AIME~2025 & 128 & 0.00 & 0.21 & 0.00 & 0.00 & 0.00 \\
 & Brumo~2025 & 128 & 0.16 & 0.08 & 0.16 & 0.23 & 0.16 \\
\midrule
\multirow{4}{*}{Ministral3-14B} & FrontierScience-Olympiad & 64 & 0.28 & 0.27 & 0.02 & 7.41 & 0.00 \\
 & HMMT Feb~2026 & 64 & 0.05 & 0.05 & 0.28 & 6.01 & 0.00 \\
 & AIME~2025 & 64 & 0.00 & 0.00 & 0.21 & 10.10 & 0.00 \\
 & Brumo~2025 & 64 & 0.00 & 0.05 & 0.05 & 6.98 & 0.00 \\
\bottomrule
\end{tabular}
\end{table}

\paragraph{Verbal score handling.}
For the verbal-confidence calls used by CISC, Verbal binary, and P(True), the parser fails when the secondary completion does not yield an integer in the requested range (a 0--100 value for CISC, a 0/1 verdict for Verbal binary).
P(True) reads the logprob of the binary call's ``1'' token directly, so it is essentially never missing.
Verbal failure rates show wide model spread, under $1\%$ for Nemotron2-9B and Ministral3-14B versus $19\%$--$56\%$ for GPT-OSS and Nemotron3-30B on Verbal 0--100 (Table~\ref{tab:format_failure}).
The two analyses that consume verbal scores handle parse failures charitably to the verbal baselines: the AUROC table (Table~\ref{tab:auroc}) drops failures, while the WMV pipeline imputes the per-problem median (Appendix~\ref{app:baseline_impl}).
Dropping the sample from the WMV pool would bias the per-problem answer distribution if failures correlate with the boxed answer, and would also hide the parser-failure rate from the evaluation. Keeping each failure at its natural pool frequency instead mirrors the deployment-time behavior in which a fresh draw would re-incur the same parse error rate.
We therefore treat parser fragility as an intrinsic property of the baseline, with each failure contributing a neutrally-weighted vote.
Table~\ref{tab:auroc_verbal_sensitivity} reports $\overline{\mathrm{AUROC}}$ under both modes side-by-side. The gap is at most $0.031$ on every cell, so failure handling does not change the best signal on any cell.
The secondary-call token cost is charged regardless of parse success, so a high failure rate raises the per-vote cost rather than reducing the effective sample count.

\begin{table}
\centering
\caption{\textbf{Verbal-confidence $\overline{\mathrm{AUROC}}$ sensitivity to failure handling.} For Verbal 0--100 and Verbal binary, macro-averaged $\overline{\mathrm{AUROC}}$ under (i)~\emph{drop}, the convention used by Table~\ref{tab:auroc} that excludes samples whose verbal score failed to parse, and (ii)~\emph{imputed}, the convention used by the WMV pipeline that substitutes the per-problem median of the non-missing scores so the parsed answer still casts a neutrally-weighted vote (Appendix~\ref{app:baseline_impl}). Per-cell verbal-extraction failure rates are shown for context. Cells where every sample fails ($\mathrm{fail}=100\%$) leave both modes undefined (``--'').}
\label{tab:auroc_verbal_sensitivity}
\footnotesize
\setlength{\tabcolsep}{1.5pt}
\resizebox{\textwidth}{!}{%
\begin{tabular}{l cccc cccc cccc cccc cccc}
\toprule
& \multicolumn{4}{c}{GPT-OSS-120B} & \multicolumn{4}{c}{GPT-OSS-20B} & \multicolumn{4}{c}{Nemotron3-30B} & \multicolumn{4}{c}{Nemotron2-9B} & \multicolumn{4}{c}{Ministral3-14B} \\
\cmidrule(lr){2-5} \cmidrule(lr){6-9} \cmidrule(lr){10-13} \cmidrule(lr){14-17} \cmidrule(lr){18-21}
Signal / mode & FSci & HMMT & AIME & Brumo & FSci & HMMT & AIME & Brumo & FSci & HMMT & AIME & Brumo & FSci & HMMT & AIME & Brumo & FSci & HMMT & AIME & Brumo \\
\midrule
\multicolumn{21}{l}{\emph{Verbal 0--100}} \\
\quad drop & .516 & .552 & .505 & .436 & .516 & .574 & .561 & .511 & .524 & .489 & .606 & .532 & .571 & .563 & .584 & .548 & .505 & .525 & .525 & .450 \\
\quad imputed & .510 & .536 & .501 & .448 & .514 & .559 & .550 & .506 & .520 & .491 & .579 & .534 & .571 & .563 & .584 & .548 & .505 & .525 & .525 & .450 \\
\quad fail (\%) & 43.0 & 47.6 & 48.1 & 44.9 & 19.3 & 26.1 & 55.6 & 51.7 & 48.7 & 44.1 & 47.5 & 42.4 & 0.0 & 0.0 & 0.0 & 0.0 & 0.0 & 0.3 & 0.2 & 0.1 \\
\midrule
\multicolumn{21}{l}{\emph{Verbal binary}} \\
\quad drop & .530 & .591 & .583 & .554 & .542 & .573 & .561 & .526 & .515 & .524 & .536 & .477 & .496 & .504 & .505 & .504 & .470 & .457 & .474 & .495 \\
\quad imputed & .527 & .560 & .555 & .529 & .542 & .564 & .549 & .523 & .515 & .520 & .544 & .478 & .496 & .504 & .505 & .504 & .475 & .458 & .471 & .496 \\
\quad fail (\%) & 12.0 & 25.1 & 21.6 & 19.9 & 8.2 & 24.8 & 30.6 & 29.2 & 18.9 & 18.7 & 20.3 & 18.3 & 0.0 & 0.2 & 0.0 & 0.1 & 7.4 & 6.0 & 10.1 & 7.0 \\
\bottomrule
\end{tabular}%
}%
\vspace{2pt}
\par\raggedright\footnotesize Abbreviations: FSci = FrontierScience-Olympiad, HMMT = HMMT Feb~2026, AIME = AIME~2025, Brumo = Brumo~2025.
\end{table}

\paragraph{Verbal 0--100 parser fragility.}
The parser splits the model's response on the first ``)'' and scans for any digit before it, expecting a near-immediate ``\texttt{<digit>)}'' completion (the prompt suffix already opens an explicit ``\texttt{Proposed confidence: (}'').
The dominant failure pattern is a parenthetical or commentary token landing before the digit, for example ``\texttt{score) 70}'', ``\texttt{analysis) 87 (commentary?) (final) 93}'', ``\texttt{rating)\textbackslash n\textbackslash n**70**.}'', ``\texttt{(C) 80\%}'', and ``\texttt{value).\textbackslash n(If you think...)}''. A smaller share of failures is completions with no digit at all, such as ``\texttt{score). That is this answer: **safe**.}''.
The model that does not fail at all, Nemotron2-9B, has a median successful response length of just $5$ tokens, indicating that it emits ``\texttt{<digit>)}'' and stops. The failing models have median successful lengths of $30$--$200$ tokens, meaning even their successful completions wander past the digit and leave many opportunities for parenthetical content to land before any number.
This fragility illustrates a broader challenge of porting verbalized-confidence baselines, originally designed for instruction-following models that snap to a strict response template, to reasoning models that interleave thinking, commentary, and LaTeX before committing to a structured output.
The original CISC and P(True) papers validated these methods only on instruction-tuned (non-reasoning) models that reliably follow the structured response template. We extend the same prompt and parser to reasoning models without modification, and a fully robust implementation in this regime would require redesigning both the elicitation prompt and the parser.

\FloatBarrier

\subsection{Equivalence Judging}
\label{app:equivalence_judging}

\paragraph{Judge configuration.}
AIME~2025 is scored by exact match on the normalized answer (integer).
For benchmarks where multiple surface forms can denote the same answer (HMMT Feb~2026, Brumo~2025, and FrontierScience-Olympiad), equivalence is determined by an LLM judge: the FrontierScience-Olympiad prompt follows the official grader released with the benchmark~\citep[Appendix~B]{openai2025frontierscience}, and the math prompt follows the format used in MathArena~\citep{balunovic2025matharena} and similar competition-math evaluations.
Both prompts return a single-token ``YES''/``NO'' verdict which is mapped to an equivalence edge. Transitive closure over these edges yields canonical answer clusters for each problem.

Using the evaluated model as its own judge is a known limitation: a stronger external grader could shift absolute accuracies.
We verify in Appendix~\ref{app:judge_robustness} that this does not change the relative comparisons reported here: re-scoring the pool with Claude Sonnet 4.6 on a 4-model, 3-benchmark subset (12 cells) preserves the sign of $\overline{\mathrm{AUROC}}_{\mathrm{PC}} - \overline{\mathrm{AUROC}}_{\mathrm{best\;baseline}}$ on all 12 cells (no cell flips), and the per-cell ordering of WMV methods is preserved at every operating point.

The full prompts used are:

\paragraph{FrontierScience-Olympiad.}
\begin{quote}
\small\ttfamily
You are grading an attempted answer to a science olympiad problem. You will be given the attempted answer and the reference answer. Evaluate strictly, but fairly. The reference answer is either a single number or expression in latex formatting, a chemical formula, a compound name, or a phrase referring to a specific name, entity, or method. Mark the attempted answer as correct if it fully matches the reference answer or is otherwise equivalent (e.g., an equivalent algebraic expression, a numerical number within 1 decimal place rounding of the reference answer (e.g., 6.69$\approx$6.7), an equivalent name for a compound/formula, equivalent when accounting for units, etc.). Mark it as incorrect if it is not equivalent to the reference answer.\\[0.3em]
Attempted answer: \{answer1\}\\
Reference answer: \{answer2\}\\[0.3em]
Answer only ``YES'' if correct or ``NO'' if incorrect.
\end{quote}

\paragraph{HMMT Feb~2026, Brumo~2025.}
\begin{quote}
\small\ttfamily
Determine whether two mathematical answers are numerically identical.\\[0.3em]
Answer 1: \{answer1\}\\
Answer 2: \{answer2\}\\[0.3em]
Criteria:
\begin{itemize}\itemsep0pt\parskip0pt
\item If they represent exactly the same numerical value, respond ``YES''
\item If they represent different values or one is not numerical, respond ``NO''
\item Different formats (fractions, decimals, radicals, exponential notation) are acceptable if numerically equivalent
\end{itemize}
Examples: ``1/2'' and ``0.5'' $\to$ YES; ``$\sqrt{4}$'' and ``2'' $\to$ YES; ``$2^3$'' and ``8'' $\to$ YES; ``3.14'' and ``$\pi$'' $\to$ NO; ``x+1'' and ``1+x'' $\to$ YES; ``$x^2$'' and ``2x'' $\to$ NO.\\[0.3em]
Answer only ``YES'' or ``NO''.
\end{quote}

\section{Reproducibility Statement}
\label{app:reprod}

\paragraph{Code and data.}

  The analysis pipeline and reproduction scripts are released at \url{\codeurl}, and the answer pool used to reproduce all reported numbers is released at \url{\dataurl}.

Models and datasets are publicly available (Table~\ref{tab:links}).
Initial generation is stochastic (no vLLM seed), but aggregation and voting use seed $42$, so every reported number is bitwise reproducible from the pool.

\begin{table}[htbp]
    \centering
    \caption{\textbf{Models and datasets used in the experiments.} License names follow the original release pages. Please verify before redistribution.}\label{tab:links}
    \scriptsize
    \begin{tabularx}{\textwidth}{>{\raggedright\arraybackslash}p{0.27\textwidth}X>{\raggedright\arraybackslash}p{0.14\textwidth}}
    \toprule
        Name & Reference & License \\
    \midrule
        \multicolumn{3}{l}{\textit{Models}} \\ \addlinespace
        GPT-OSS-120B \citep{openai2025gptoss} & \url{https://huggingface.co/openai/gpt-oss-120b} & Apache 2.0 \\ \addlinespace
        GPT-OSS-20B \citep{openai2025gptoss} & \url{https://huggingface.co/openai/gpt-oss-20b} & Apache 2.0 \\ \addlinespace
        Nemotron3-30B \citep{nvidia2025nemotron3nano} & \url{https://huggingface.co/nvidia/NVIDIA-Nemotron-3-Nano-30B-A3B-BF16} & NVIDIA Open Model License \\ \addlinespace
        Nemotron2-9B \citep{nvidia2025nemotronnano2} & \url{https://huggingface.co/nvidia/NVIDIA-Nemotron-Nano-9B-v2} & NVIDIA Open Model License \\ \addlinespace
        Ministral3-14B \citep{mistral2026ministral3} & \url{https://huggingface.co/mistralai/Ministral-3-14B-Reasoning-2512} & Apache 2.0 \\
    \midrule
        \multicolumn{3}{l}{\textit{Datasets}} \\ \addlinespace
        AIME~2025 \citep{balunovic2025matharena} & \url{https://huggingface.co/datasets/MathArena/aime_2025} & CC BY-NC-SA 4.0 \\ \addlinespace
        HMMT Feb~2026 \citep{balunovic2025matharena} & \url{https://huggingface.co/datasets/MathArena/hmmt_feb_2026} & CC BY-NC-SA 4.0 \\ \addlinespace
        Brumo~2025 \citep{balunovic2025matharena} & \url{https://huggingface.co/datasets/MathArena/brumo_2025} & CC BY-NC-SA 4.0 \\ \addlinespace
        FrontierScience-Olympiad \citep{openai2025frontierscience} & \url{https://huggingface.co/datasets/openai/frontierscience} & Apache 2.0 \\
        \bottomrule
    \end{tabularx}
\end{table}

\paragraph{Inference and sampling.}
Each model is served through a vLLM OpenAI-compatible endpoint on $4{\times}$ NVIDIA A100 80GB GPUs, with context window $131{,}072$ and a maximum output length of $100{,}000$ tokens.
Sampling uses each model's recommended settings (Table~\ref{tab:sampling_settings}).

\begin{table}[htbp]
    \centering
    \caption{\textbf{Sampling settings per model.} Values follow each model's recommended configuration on its release page.}
    \label{tab:sampling_settings}
    \small
    \begin{tabular}{lcccc}
    \toprule
        Model & \texttt{temperature} & \texttt{top\_p} & \texttt{top\_k} & \texttt{reasoning\_effort} \\
    \midrule
        GPT-OSS-120B & 1.0 & 1.0 & 40 & medium \\
        GPT-OSS-20B & 1.0 & 1.0 & 40 & medium \\
        Nemotron3-30B & 1.0 & 1.0 & 20 & -- \\
        Nemotron2-9B & 0.6 & 0.95 & 20 & -- \\
        Ministral3-14B & 0.7 & 0.95 & 20 & -- \\
        \bottomrule
    \end{tabular}
\end{table}
Table~\ref{tab:token_stats} reports per-generation token counts at the default $\tau{=}0.75$ for all models, with the additional $\tau \in \{0.50, 0.25\}$ values reported for GPT-OSS-20B (the model used in the $\tau$-sensitivity sweep of Appendix~\ref{app:sensitivity}).

\begin{table}[htbp]
\centering
\caption{\textbf{Average output tokens per generation.} Generations is the original full generation. Continuations are completions from the truncation point onward ($K{=}1$). Verbal queries are secondary completions on each init sample: 0--100 elicits the CISC confidence value, Binary elicits a 0/1 verdict (the binary completion's logprob is also the P(True) signal). All values are means of actual generated tokens over every (problem, sample) pair. $N$ is the number of answers per problem.}
\label{tab:token_stats}
\scriptsize
\setlength{\tabcolsep}{4pt}
\begin{tabular}{ll r r rrr rr}
\toprule
& & & & \multicolumn{3}{c}{Continuations} & \multicolumn{2}{c}{Verbal queries} \\
\cmidrule(lr){5-7} \cmidrule(lr){8-9}
Model & Dataset & $N$ & Generations & $\tau{=}0.75$ & $\tau{=}0.50$ & $\tau{=}0.25$ & 0--100 & Binary \\
\midrule
\multirow{4}{*}{GPT-OSS-120B} & FrontierScience-Olympiad & 128 & 3{,}341 & 945 & -- & -- & 322 & 196 \\
 & HMMT Feb~2026 & 128 & 6{,}925 & 2{,}199 & -- & -- & 246 & 182 \\
 & AIME~2025 & 128 & 5{,}442 & 1{,}668 & -- & -- & 262 & 220 \\
 & Brumo~2025 & 128 & 4{,}782 & 1{,}493 & -- & -- & 246 & 190 \\
\midrule
\multirow{4}{*}{GPT-OSS-20B} & FrontierScience-Olympiad & 128 & 7{,}813 & 2{,}631 & 4{,}426 & 5{,}906 & 230 & 210 \\
 & HMMT Feb~2026 & 128 & 13{,}707 & 4{,}621 & 7{,}992 & 10{,}941 & 295 & 290 \\
 & AIME~2025 & 128 & 11{,}159 & 3{,}598 & 6{,}232 & 8{,}564 & 273 & 388 \\
 & Brumo~2025 & 128 & 9{,}208 & 3{,}056 & 5{,}323 & 7{,}151 & 277 & 308 \\
\midrule
\multirow{4}{*}{Nemotron3-30B} & FrontierScience-Olympiad & 128 & 29{,}064 & 8{,}333 & -- & -- & 1{,}419 & 103 \\
 & HMMT Feb~2026 & 128 & 39{,}816 & 12{,}618 & -- & -- & 308 & 149 \\
 & AIME~2025 & 128 & 28{,}941 & 7{,}313 & -- & -- & 281 & 119 \\
 & Brumo~2025 & 128 & 22{,}243 & 5{,}859 & -- & -- & 1{,}158 & 274 \\
\midrule
\multirow{4}{*}{Nemotron2-9B} & FrontierScience-Olympiad & 128 & 10{,}124 & 3{,}016 & -- & -- & 13 & 3 \\
 & HMMT Feb~2026 & 128 & 14{,}216 & 4{,}172 & -- & -- & 8 & 4 \\
 & AIME~2025 & 128 & 11{,}715 & 3{,}498 & -- & -- & 10 & 3 \\
 & Brumo~2025 & 128 & 9{,}634 & 3{,}125 & -- & -- & 10 & 5 \\
\midrule
\multirow{4}{*}{Ministral3-14B} & FrontierScience-Olympiad & 64 & 9{,}671 & 5{,}834 & -- & -- & 1{,}940 & 1{,}778 \\
 & HMMT Feb~2026 & 64 & 8{,}388 & 4{,}560 & -- & -- & 2{,}161 & 1{,}768 \\
 & AIME~2025 & 64 & 7{,}590 & 3{,}735 & -- & -- & 1{,}735 & 1{,}451 \\
 & Brumo~2025 & 64 & 7{,}739 & 3{,}994 & -- & -- & 1{,}802 & 1{,}277 \\
\bottomrule
\end{tabular}
\end{table}

\paragraph{Generation prompts.}
For the three math benchmarks (HMMT Feb~2026, AIME~2025, Brumo~2025) the system prompt is \textit{``You are a helpful assistant specialized in solving mathematical problems.''} For FrontierScience-Olympiad it is \textit{``You are an expert scientist solving olympiad-level problems in physics, chemistry, and biology.''} Both append \textit{``Please reason step by step, and put your final answer within \texttt{\textbackslash boxed\{\}}.''} to each problem statement. Initial samples and the regenerated continuations from each truncation point share the same prompt.

\end{document}